\documentclass{article}

\usepackage{microtype}
\usepackage{graphicx}
\usepackage{subcaption}
\usepackage{booktabs} 

\usepackage{hyperref}

\usepackage{amsmath,amsfonts,bm}

\def\eqref#1{equation~\ref{#1}}

\def\1{\bm{1}}

\def\rb{{\textnormal{b}}}

\def\rn{{\textnormal{n}}}
\def\ro{{\textnormal{o}}}

\def\rx{{\textnormal{x}}}
\def\ry{{\textnormal{y}}}

\def\ve{{\bm{e}}}

\def\mQ{{\bm{Q}}}
\def\mR{{\bm{R}}}

\DeclareMathAlphabet{\mathsfit}{\encodingdefault}{\sfdefault}{m}{sl}
\SetMathAlphabet{\mathsfit}{bold}{\encodingdefault}{\sfdefault}{bx}{n}

\newcommand{\E}{\mathbb{E}}

\newcommand{\KL}{D_{\mathrm{KL}}}

\usepackage[accepted]{icml2026}

\usepackage{amsmath}
\usepackage{amssymb}
\usepackage{mathtools}
\usepackage{amsthm}
\usepackage{enumitem}
\usepackage{upgreek}
\newcommand{\ind}{\perp\!\!\!\!\perp}

\usepackage[capitalize,noabbrev]{cleveref}

\theoremstyle{plain}
\newtheorem{theorem}{Theorem}[section]
\newtheorem{proposition}[theorem]{Proposition}

\theoremstyle{definition}

\theoremstyle{remark}

\usepackage[disable,textsize=tiny]{todonotes}

\icmltitlerunning{CountsDiff: A Diffusion Model on the Natural Numbers }

\begin{document}

\twocolumn[
  \icmltitle{CountsDiff: A Diffusion Model on the Natural Numbers \\ for Generation and Imputation of Count-Based Data}



  \icmlsetsymbol{equal}{*}

  \begin{icmlauthorlist}
    \icmlauthor{Renzo G. Soatto}{mit,lids,broad}
    \icmlauthor{Anders Hoel}{mit,lids}
    \icmlauthor{Greycen Ren}{mit}
    \icmlauthor{Shorna Alam}{mit}
    \icmlauthor{Stephen Bates}{mit,lids}
    \icmlauthor{Nikolaos P. Daskalakis}{bu}
    \icmlauthor{Caroline Uhler}{mit,lids,broad}
    \icmlauthor{Maria Skoularidou}{mit,broad}
  \end{icmlauthorlist}
    \icmlaffiliation{mit}{Massachusetts Institute of Technology (MIT), Cambridge, MA, USA}
  \icmlaffiliation{lids}{Laboratory for Information and Decision Systems (LIDS), MIT, Cambridge, MA, USA}
  \icmlaffiliation{broad}{Eric and Wendy Schmidt Center, Broad Institute, Cambridge, MA, USA}
  \icmlaffiliation{bu}{Boston University, Boston, MA, USA}
  \icmlcorrespondingauthor{Maria Skoularidou }{mskoular@broadinstitute.org}

  \icmlkeywords{Machine Learning, ICML}

  \vskip 0.3in
]



\printAffiliationsAndNotice{}  

\begin{abstract}
Diffusion models have excelled at generative tasks for both continuous and token-based domains, but their application to discrete ordinal data remains underdeveloped. We present \emph{CountsDiff}, a diffusion framework designed to model distributions on the natural numbers. CountsDiff extends the Blackout diffusion framework by simplifying its formulation through a direct parameterization in terms of a survival probability schedule and an explicit loss weighting. This introduces flexibility through design parameters with direct analogues in existing diffusion modeling frameworks.
Beyond this reparameterization, CountsDiff introduces features from modern diffusion models, previously absent in counts-based domains, including continuous-time training, classifier-free guidance, and churn/remasking reverse dynamics that allow non-monotone reverse trajectories. 
We propose an initial instantiation of CountsDiff and validate it on natural image datasets (CIFAR-10, CelebA), exploring the effects of the introduced design parameters in a complex, well-studied, and interpretable data domain. We then highlight biological count assays as a natural use case, evaluating CountsDiff on single-cell RNA-seq imputation in fetal and heart cell atlases. Remarkably, we find that even this simple instantiation matches or surpasses the performance of a state-of-the-art discrete generative model and leading scRNA-seq imputation methods, while leaving substantial headroom for further gains through optimized design choices in future work. 
\end{abstract}

\section{Introduction}
Diffusion modeling \citep{sohl2015deep, ho2020denoising} is the state-of-the-art generative modeling framework, producing diverse and high-quality samples across various domains, including but not limited to images \citep{saharia2022photorealistic, labs2025flux}, audio \citep{lemercier2025diffusion}, videos \citep{ho2022video}, and proteins \citep{abramson2024accurate, watson_novo_2023}. These models define a forward noising process that iteratively corrupts data samples, transforming the data distribution into an easily sampled ``noise'' distribution, and then learn to reverse the noising process. Diffusion models have been well studied and developed in both continuous \citep{ho2020denoising,songscore,songdenoising} and discrete, categorical \citep{austin2021structured,hoogeboom2021argmax,campbell2022continuous} data types, including for popular use cases such as images and tokenized text. However, data from biological assays such as whole-genome sequencing, RNA sequencing (including single-cell RNA sequencing; scRNA-seq), ATAC-seq (including single-cell ATAC-seq), and metagenomic read counts are direct measurements of abundance in the form of natural numbers. Like the reals, the natural numbers are an unbounded ordered set. However, the natural numbers are clearly discrete. Given this, both approaches must be adapted to count-based data. One can either (1) relax the natural numbers to the reals and train a continuous diffusion model \citep{kotelnikov2023tabddpm, jolicoeur2024generating} or (2) treat each number up to some maximum as an independent class and train a discrete diffusion model. Both of these solutions present potential pitfalls. Training a continuous diffusion model optimizes over the much larger space of real-valued distributions, only to quantize at inference time, which, as we illustrate using a simple toy dataset, can be ineffective. On the other hand, the categorical adaptation ignores the natural ordering of numerical data, and can quickly become computationally expensive as the maximum value increases since a distinct category for each possible value is required.

To address these challenges, we introduce \emph{CountsDiff}, a modern diffusion model that operates on the set of natural numbers $\mathbb{N}_0:= \{0, 1, 2, \dots\}$. CountsDiff builds on the theoretical underpinnings of Blackout Diffusion \citep{santos2023blackout}, but, for greater clarity and generality, reparameterizes the forward process in terms of a survival probability $p(t)$, 
stabilizes the outputs via random rounding, and introduces analogs to the complete toolkit of contemporary diffusion models, including continuous-time training and sampling \citep{campbell2022continuous}, weighted objectives \citep{kingma2023understanding}, guidance \citep{dhariwal2021diffusion,ho2022classifier,nisonoff2025unlocking}, and churn/remasking \citep{songdenoising,karras2022elucidating, wang2026remasking}. Furthermore, we evaluate CountsDiff across three settings. 
First, we use synthetic count data to illustrate the limitations of existing diffusion frameworks on $\mathbb{N}_0$ compared to CountsDiff. 
Next, we test CountsDiff on natural images (CIFAR-10 \citep{krizhevsky2009learning} and CelebA \citep{liu2015deep}) to stress its ability to scale to high-dimensional distributions and examine the relative effects of the design parameters we introduce: noise schedules, loss weighting, and attrition. Finally, we turn to single-cell RNA-seq imputation, benchmarking CountsDiff on fetal and heart cell atlases \citep{cao2020human, litvivnukova2020cells}, to demonstrate a natural application for our count-based model. An implementation of CountsDiff and all baselines necessary to reproduce experiments is available at \href{https://github.com/rsoatto/countsdiff}{\texttt{https://github.com/rsoatto/countsdiff}}.

\section{Background and notation}

\subsection{Generative latent variable models}

Given data $\mathcal{D} = \{x^{(i)}\}_{i=1}^N$ sampled from an unknown distribution $P_{\text{data}}$, the goal of generative modeling is to learn a distribution $P_{\text{gen}}$ to approximate $P_{\text{data}}$, often by minimizing negative log-likelihood (NLL, see Appendix~\ref{appendix:KL-NLL}). A common approach to this is latent variable modeling, where one samples from a simple distribution $P_{\text{noise}}$ (\textit{e.g.} unit Gaussian) and learns a transformation to the data domain. 

Diffusion models follow this paradigm by defining a sequence of forward corruption kernels $\{q(x_t \mid x_{t-1})\}_{t=1}^T$ that gradually transform samples from $\rx_0 \in P_{\text{data}}$ into $\rx_T \in P_{\text{noise}}$. By composition, this defines a forward process $q(x_{1:T} \mid x_0)$ that maps the data to ``noise" {\em e.g.,} samples from an isotropic Gaussian. Diffusion modeling seeks to approximate the corresponding reverse kernels $q(x_{t-1} \mid x_t)$, which ideally invert the corruption at each step. For instance, as we detail in Appendix \ref{appendix:gaussian-diff}, when the support $\mathcal{X} = \mathbb{R}$, \textit{Gaussian diffusion models} \citep{ho2020denoising} use Gaussian transitions as forward kernels.

\subsection{Discrete diffusion models}

For categorical data, one can define a diffusion process over a finite support with $|\mathcal{X}| = N$ via forward transition kernel matrices $\{\mQ_t\}_{t=1}^T$, where $[\mQ_t]_{i,j} = q(x_t=j \mid x_{t-1}=i)$. By composition, 
\[
q(x_t \mid x_0=i) = \ve^{(i)} \,\bar{\mQ}_t, \quad \bar{\mQ}_t = \mQ_1 \mQ_2 \dots \mQ_t,
\]
where $\ve^{(i)}$ is the $i$-th standard basis vector.
\\
This general formulation admits a variety of forward processes that converge to simple $P_{\text{noise}}$'s. A common choice is to choose a simple categorical distribution $\mathrm{Cat}(\mathcal{X},F)$ with $F \in \Delta^{N-1}$ and define
\[
\mQ_t = (1-\beta_t)\,\mathbf{I} + \beta_t\,\mathbf{1}F^\top,
\]
with schedule $\{\beta_t\}$. This yields marginals with $\bar{\mQ}_t = \bar{\alpha}_t \mathbf{I} + (1-\bar{\alpha}_t)\,\mathbf{1}F^\top$, where $\bar{\alpha}_t = \prod_{s=1}^t (1-\beta_s)$. Two important special cases are uniform discrete diffusion \citep{hoogeboom2021argmax} when $F = \tfrac{1}{N}\mathbf{1}$ and masked diffusion \citep{austin2021structured, sahoo2024simple, shi2024simplified} when $F = \ve^{(\mathrm{mask})}$. In both cases, the model is trained to approximate the reverse kernels $q(x_{t-1} \mid x_t)$ \citep{austin2021structured}. We will refer to these models operating on {\em finite} categorical spaces as ``categorical diffusion models" to contrast with CountsDiff, which is also formally discrete diffusion.

These processes extend naturally to continuous time \citep{shi2024simplified,sahoo2024simple}. As $T \to \infty$, cumulative transition matrices $\bar{\mQ}(t)$ evolve according to the Kolmogorov forward equations \citep{Norris_1997}:
\begin{equation}\label{kolmogorov-forward}
\textstyle
    \frac{d}{dt}\bar{\mQ}(t) = \bar{\mQ}(t)\,\mR(t), \quad
q(x_t \mid x_0=i) = \ve^{(i)} \bar{\mQ}(t).
\end{equation}
$\mR(t)$ specifies infinitesimal transition rates via 
\[
q(x_{t+dt}=j \mid x_t=i) = \updelta_{i,j} + \mR_{i,j}(t)\,dt + o(dt),
\]
where $\updelta$ is the Kronecker delta and $o(dt)$ represents terms that tend to zero faster than $dt$. This continuous-time formulation via transition rates enables the extension of diffusion principles beyond finite categorical data, see \cite{benton2024denoising,holderrieth2025generator}.

\subsubsection{Discrete diffusion on $\mathbb{N}_0$} \label{blackout_backround}

To model data on $\mathbb{N}_{0}$, a natural choice is the family of birth–death processes \citep{karlin1957classification,feller1971introduction}, in which each state can only increase by one with birth rate: $\mR_{i,{i+1}}(t)=\lambda_i(t)$, decrease by one with death rate $\mR_{i,{i-1}}(t) = \mu_i(t)$ or stay the same with rate $ \mR_{i,i}(t) = -\left(\lambda_i(t) + \mu_i(t)\right)$. Explicitly,
\begin{align*}
    \mR_{i,j}(t) = \lambda_i(t) (\updelta_{i+1, j} - \updelta_{i,j}) + \mu_i(t) (\updelta_{i-1,j} - \updelta_{i,j}).
\end{align*}
\citet{santos2023blackout} restricts this to a pure-death process with $\lambda_i(t)=0$ and $\mu_i(t)=i$, which is an absorbing-state forward process (\textit{e.g.} masked categorical diffusion), {\em i.e.,} where $P_{\text{noise}}$ is a Dirac delta. Solving the Kolmogorov forward (\eqref{kolmogorov-forward}) yields binomial marginals:
\begin{equation*}
\label{eq:binomial_density_q}
q(x_t \mid x_0) = \binom{x_0}{x_t} p(t)^{x_t}(1-p(t))^{x_0-x_t}, 
\quad p(t) = \mathrm{e}^{-t}.
\end{equation*}
The corresponding reverse process is a pure-birth process with maximum state $x_0$ with rates
\begin{align}
\label{eq:reverse_process}
    \mR^{(\text{rev})}_{i,j}(t)   
&= \textstyle (x_0 - i)\frac{\mathrm{e}^{-t}}{1-\mathrm{e}^{-t}}(\updelta_{i+1,j} - \updelta_{i,j}).
\end{align}
Learning this reverse process amounts to predicting the number of remaining elements $y_t = x_0-x_t$ given $(x_t,t)$. This can be optimized by minimizing $\sum_{x_0 \in \mathcal{D}}\mathbb{E}_{t}\!\left[(\mathrm{e}^{-t_{k-1}}-\mathrm{e}^{-t_k}) \big(\hat{y}_t - y_t \log \hat{y}_t \big)\right]$, which is equivalent to minimizing the NLL \citep{santos2023blackout}.

\section{Methods}
Herein, we formally introduce the CountsDiff framework, defining a forward process parameterized by a $p$-schedule, a weighted loss, a family of reverse processes parameterized by an attrition schedule, predictor-free guidance, and a stochastic rounding algorithm that prevents a failure mode of Blackout Diffusion. 

\subsection{CountsDiff forward process}
For our forward process, we consider the following \textit{inhomogeneous} pure-death process
\begin{equation}
\mR^{(\text{fw})}_{i,j}(t) = i\mu(t)(\updelta_{i-1, j} - \updelta_{i, j}). 
\footnote{Blackout diffusion is the special case with $\mu(t) \equiv 1$ and with time rescaled to $[0,\infty)$.}
\label{eq:forward-rates} \quad 
\end{equation}
We define a CountsDiff forward process with \textit{$p$-schedule} $p(t)$ as a pure death process with transitions given by \eqref{eq:forward-rates}, with $\mu(t)$ such that the process has marginals 
    \begin{equation}
q(x_t \mid x_0) = \binom{x_0}{x_t}\, p(t)^{x_t}\,(1-p(t))^{x_0-x_t},
\label{eq:marginals}
\end{equation}
and conditionals
 \begin{equation}
 \label{eq:forward-conditionals}
q(x_t \mid x_s) = \binom{x_s}{x_t}\, \left(\frac{p(t)}{p(s)}\right)^{x_t}\left(1-\left(\frac{p(t)}{p(s)}\right)\right)^{x_s-x_t}.
\end{equation}
We have the following existence proposition, proven in Appendix \ref{appendix:proof-of-forward}:
\begin{proposition}
\label{prop:forward-marginals}
    Given $p:[0,1] \rightarrow[0,1]$ differentiable, monotonically decreasing, and with endpoints $p(0) = 1$, $p(1) = 0$, there exists a CountsDiff forward process with $p$-schedule $p(t)$.
\end{proposition}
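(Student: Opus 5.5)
We construct the death rate explicitly and then check that the pure-death process it generates has the transition kernels \eqref{eq:forward-conditionals}; the marginals \eqref{eq:marginals} are then the case $s=0$. Since $p$ is differentiable and decreasing with $p(0)=1$, the natural choice is
\[
\mu(t) \;=\; -\frac{p'(t)}{p(t)} \;=\; -\frac{d}{dt}\log p(t) \;\ge 0, \qquad t\in[0,1),
\]
so that $p(t)=\exp\!\big(-\int_0^t \mu(u)\,du\big)$ and $p(t)/p(s)=\exp\!\big(-\int_s^t\mu(u)\,du\big)$ for $s\le t$. The rate is well defined wherever $p(t)>0$. If $p$ vanishes earlier than $t=1$, we set $\mu$ to be $0$ after the first zero $t^\ast$ of $p$; by then the state has been absorbed at $0$, so this choice is irrelevant.

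The main step is verifying that the time-inhomogeneous chain with generator \eqref{eq:forward-rates} has kernels \eqref{eq:forward-conditionals}. We plan to use particle independence. Under $\mR^{(\text{fw})}$, a state $i$ dies at total rate $i\mu(t)$, which is exactly the superposition of $i$ independent units, each dying at rate $\mu(t)$. For a single unit alive at time $s$, the survival probability to time $t$ solves $\frac{d}{dt}S=-\mu(t)S$ with $S(s)=1$, so $S=p(t)/p(s)$. With $x_s$ independent units, the number surviving is $\mathrm{Binomial}(x_s,p(t)/p(s))$, which is \eqref{eq:forward-conditionals}.

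To make this rigorous without a coupling argument, we will instead check the Kolmogorov forward equation \eqref{kolmogorov-forward} directly. Write $r=p(t)/p(s)$ and $P_{ij}(s,t)=\binom{i}{j}r^j(1-r)^{i-j}$. Since $\partial_t r=-\mu(t) r$, differentiating gives
\[
\partial_t P_{ij}=\binom{i}{j}\big[-j\mu\, r^j(1-r)^{i-j} + (i-j)\mu\, r^{j+1}(1-r)^{i-j-1}\big].
\]
Using the identity $\binom{i}{j}(i-j)=\binom{i}{j+1}(j+1)$, the second term equals $(j+1)\mu\,P_{i,j+1}$. Hence
\[
\partial_t P_{ij} = -j\mu P_{ij} + (j+1)\mu P_{i,j+1} = [P\,\mR^{(\text{fw})}]_{ij},
\]
with initial condition $P(s,s)=\mathbf{I}$.

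Uniqueness of the solution follows because, for fixed $i$, the state space reachable from $i$ is the finite set $\{0,\dots,i\}$. On this set the forward equation is a finite linear ODE with coefficients $\mu(t)$, and we assume $\mu$ is locally integrable on $[0,t^\ast)$. The solution is therefore unique, and the Markov process with these rates has kernels $P(s,t)$. The Chapman–Kolmogorov consistency also follows directly, since binomial thinnings compose: $\frac{p(u)}{p(s)}\cdot\frac{p(t)}{p(u)}=\frac{p(t)}{p(s)}$.

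The main obstacle is the endpoint behaviour. Near $t=1$, $p\to0$ forces $\mu(t)\to\infty$, possibly non-integrably: $\int_0^1\mu=-\log p(1)=\infty$. We handle this by constructing the process on $[0,1)$ and defining $x_1:=\lim_{t\to1}x_t$. This limit exists because paths are nonincreasing in $\mathbb{N}_0$. It equals $0$ almost surely, since $\Pr(x_t>0\mid x_0)=1-(1-p(t))^{x_0}\to0$, which is consistent with the $\mathrm{Binomial}(x_0,0)$ marginal at $t=1$.

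Differentiability of $p$ also guarantees that $\mu$ is finite on $\{p>0\}$. Local integrability of $\mu$ on compact subintervals follows because there $\mu=-(\log p)'$ and $\log p$ is differentiable and hence bounded; if needed, we will use absolute continuity of $\log p$ to justify the integral identity.
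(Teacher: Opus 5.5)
Your proof is correct, but its rigorous core differs from the paper's. Both arguments use the same death rate $\mu(t)=-p'(t)/p(t)$.

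\textbf{The paper's route.} The paper proves the result entirely through the particle picture that you use only as motivation. It writes $x_t$ as the number of survivors among $x_0$ independent two-state chains with death rate $\mu(t)$. It solves the scalar survival ODE to get $\exp(-\int_0^t\mu)=p(t)$ and reads off the binomial marginal as a sum of i.i.d.\ Bernoullis. It then derives the conditionals by Bayes' rule, using $\mathbb P(y_s=1\mid y_t=1)=1$.

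\textbf{Your route.} You verify directly that the binomial kernel satisfies the forward Kolmogorov equation of the count chain, with the identity $\binom{i}{j}(i-j)=\binom{i}{j+1}(j+1)$ doing the work. You then conclude by uniqueness on the finite reachable set $\{0,\dots,i\}$. This gives the conditionals for every $s$ at once, with the marginals as the case $s=0$. It also avoids the lumpability claim (that the survivor count is itself Markov with rate $i\mu(t)$), which the paper asserts without proof.

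You are also more careful at the boundary. The paper takes $p>0$ on $(0,1)$ for granted, which non-strict monotonicity does not ensure, and it dismisses the blow-up of $\mu$ at $t=1$ informally. You treat an early zero of $p$ and define $x_1$ as a monotone limit.

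What the paper's route buys is a concrete path-level construction of the process. That construction is also the easiest source of the nonincreasing paths you invoke to define $x_1$. The same Bernoulli-particle representation is reused in the proof of Proposition~\ref{prop:churn_preserves}.

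\textbf{One small correction.} Local integrability of $\mu$ on compact subsets of $\{p>0\}$ does not follow from boundedness of $\log p$: a bounded differentiable function can have a non-integrable derivative. It follows from monotonicity instead. Since $\mu\ge 0$, you get $\int_a^b\mu\le\log p(a)-\log p(b)<\infty$. An everywhere-differentiable function with integrable derivative is absolutely continuous, so $\int_s^t\mu=\log\bigl(p(s)/p(t)\bigr)$ holds. Your forward-equation check uses only the pointwise relation $\partial_t r=-\mu r$, so this point matters only for the uniqueness step.
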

This forward process is visualized in Figure \ref{fig:full_blackout}. Blackout diffusion's forward process is a special case of CountsDiff's; see Appendix \ref{appendix: converting blackout noise schedule}. Intuitively, this $p(t)$ controls the rate at which information is destroyed in the noising process. As we show in Appendix~\ref{appendix:noise_schedule_derivation}, the signal-to-noise ratio of a Gaussian diffusion proccess as a function of its noise schedule $\bar{\alpha}_t$ matches the signal-to-noise ratio of a CountsDiff forward process as a function of its $p$-schedule exactly. This correspondence enables direct application of any noise schedule from Gaussian diffusion to CountsDiff; for this work, we propose $p(t) = \cos (\tfrac{\pi t}{2})^2$ from \citet{nichol2021improved}, which also has some theoretical advantages over the schedule in Blackout diffusion (see Appendix \ref{appendix: comparing_blackout_and_cosine}).


\begin{figure*}[!h]
    \centering
    \includegraphics[width=0.55 \linewidth]{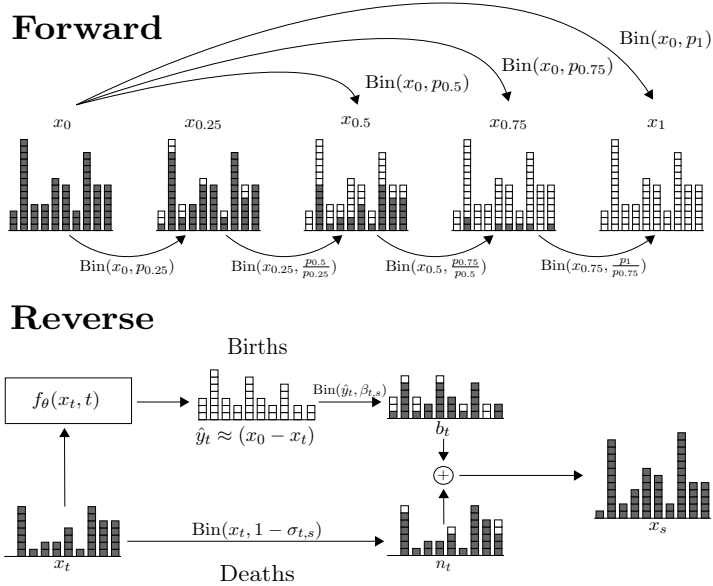}
    \caption{Visualization of CountsDiff's forward corruption process (top) and reverse sampling process (bottom). The top diagram depicts the progression of a $p$-schedule, a pure death process. The bottom shows a single step of the generalized, birth-death sampling process.}
    \label{fig:full_blackout}
\end{figure*}

\subsection{CountsDiff objective}
Our general objective takes the form 
\begin{equation}
\label{eq:our_objective}
        \mathbb{E}_{t \sim \phi}\left[w(t)(\hat{y}_t - y_t \log \hat{y}_t)\right], \hspace{0.5cm} \hat{y}_t = (\text{NN}_{\theta}(x_t, t))^+
\end{equation}
where $\text{NN}_{\theta}$ is a neural network, $w(t) > 0$ is a weighting term, and $(a)^+ := \mathrm{softplus}(a) = \log(1 + \mathrm{e}^a)$. 

When $w(t) = \frac{-p'(t)}{(1-p(t))\,\phi(t)}$, where $\phi(t)$ is the distribution from which $t$ is sampled, the loss recovers the NLL (see Appendix~\ref{appendix: training objective}). Since this objective is minimized pointwise by taking $\hat{y}_t = y$, the minimizer remains unchanged for any $w(t) > 0$; the choice of weight only influences our training dynamics. We refer to \citet{kingma2023understanding} for a more in-depth discussion on the weighting of diffusion model losses and their correspondence with importance sampling.

For our cosine $p$-schedule $p(t) = \cos(\tfrac{\pi t}{2})^2$ we propose a weighting term $w(t) = \frac{\pi}{2}\sin(\pi t)$, which can be derived using two orthogonal heuristics: matching the sigmoid weighting commonly used in Gaussian diffusion, or by matching the form $w(t) = -p'(t)$ in Blackout diffusion. In fact, the choice of $-p'(t)$ also results in equation \ref{eq:our_objective} corresponding to an exact NLL (see prop \ref{prop:weighting_importance} in Appendix \ref{appendix:weighting_importance}). For a detailed discussion, see Appendix \ref{appendix:weight_schedule_derivation}.

\subsection{CountsDiff reverse process with attrition}
\label{section:churn/remasking}

The reverse process (proof in Appendix \ref{appendix:reverse_process derivation}) corresponding to equation~\ref{eq:forward-rates},
\begin{equation}
\textstyle
\mR^{(\text{rev})}_{i,j}=(x_0-i)\frac{p'(t)}{1-p(t)}(\updelta_{i,j} - \updelta_{i+1,j}) ~ ,
\label{eq:reverse-rates}
\end{equation}
generates a monotonic trajectory via a pure-birth process. This monotonicity, analogous the irreversibility of unmasking in masked diffusion models, is undesirable since if a model ``overshoots", the error cannot be corrected and can accumulate. Discrete diffusion models overcome this through remasking \citep{wang2026remasking}. Similarly, we generalize the reverse process to allow \textit{attrition}, a nonzero death rate compensated with births, yielding a \textit{non-monotonic process} with the same binomial marginals:

\begin{proposition}[Reverse step with attrition]\label{prop:churn_preserves}
Given $x_t$, a $p$-schedule $p(t)$, and an attrition rate $\sigma_{t,s}\in [0,\sigma_{t,s}^\mathrm{max}]$, where $\sigma_{t,s}^\mathrm{max} \coloneq \mathrm{min}(1, \frac{1-p(s)}{p(t)})$, let $\beta_{t,s} = \frac{p(s) - (1 - \sigma_{t,s}) p(t)}{1 - p(t)}$. Then the following sampling procedure preserves the marginal distribution of $\rx_s$ defined in \eqref{eq:marginals}:
\begin{equation}
\label{eq:attrition_step}
\begin{gathered}
x_s = n_t + b_t \\
n_t \sim \mathrm{Bin}(x_t, 1 - \sigma_{t,s}), \quad
b_t \sim \mathrm{Bin}(x_0 - x_t, \beta_{t,s})
\end{gathered}
\end{equation}
\end{proposition}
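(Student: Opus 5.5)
Since everything here is conditional on $x_0$, I will work with a fixed $x_0$ and assume $x_t \sim \mathrm{Bin}(x_0, p(t))$ as in \eqref{eq:marginals}. The goal is to show that $x_s = n_t + b_t$ has law $\mathrm{Bin}(x_0, p(s))$. The cleanest route is a thinning argument on the $x_0$ original units. Label them $1,\dots,x_0$ and let $A_k \in \{0,1\}$ record whether unit $k$ survives to time $t$. Under the marginal, the $A_k$ are i.i.d.\ Bernoulli$(p(t))$ and $x_t = \sum_k A_k$.

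The sampling step \eqref{eq:attrition_step} can then be realized unit by unit. Each surviving unit (with $A_k=1$) is kept independently with probability $1-\sigma_{t,s}$, which reproduces $n_t \sim \mathrm{Bin}(x_t, 1-\sigma_{t,s})$. Each dead unit (with $A_k=0$) is revived independently with probability $\beta_{t,s}$, which reproduces $b_t \sim \mathrm{Bin}(x_0-x_t, \beta_{t,s})$. Conditional on $x_t$, sums of independent Bernoullis over the respective groups have exactly these laws, so this realization gives the same conditional law of $x_s$ given $x_t$. Let $B_k$ be the indicator that unit $k$ is alive at time $s$. The pairs $(A_k, B_k)$ are i.i.d.\ across $k$, since each $B_k$ depends only on $A_k$ and fresh independent randomness. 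Hence the $B_k$ are i.i.d.\ Bernoulli with
\[
\Pr(B_k=1) = p(t)(1-\sigma_{t,s}) + (1-p(t))\beta_{t,s}.
\]
Substituting the definition of $\beta_{t,s}$, the second term equals $p(s) - (1-\sigma_{t,s})p(t)$, so $\Pr(B_k=1) = p(s)$. It follows that $x_s = \sum_k B_k \sim \mathrm{Bin}(x_0, p(s))$, which is the claim.

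A more computational alternative is to use probability generating functions. One has $\E[z^{x_s}\mid x_t] = (1-\sigma+\sigma' z)^{x_t}(1-\beta+\beta z)^{x_0-x_t}$, where $\sigma' = 1-\sigma$. Averaging this over $x_t\sim\mathrm{Bin}(x_0,p(t))$ gives $\bigl(p(t)(\sigma+(1-\sigma)z) + (1-p(t))(1-\beta+\beta z)\bigr)^{x_0}$, whose coefficient of $z$ is $p(s)$. This gives the same conclusion.

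The remaining step, and the only real obstacle, is checking that the parameters are valid probabilities. We need $1-\sigma_{t,s}\in[0,1]$, which holds because $\sigma\le 1$. We also need $\beta_{t,s}\in[0,1]$. Here I take $s<t$ (the reverse direction), so $p(s)\ge p(t)$ by monotonicity. Then $\beta \ge 0$, because $p(s) - (1-\sigma)p(t) \ge p(s)-p(t) \ge 0$. And $\beta\le 1$ is equivalent to $p(s) + \sigma p(t) - p(t) \le 1 - p(t)$, that is, $\sigma \le (1-p(s))/p(t)$, which is exactly the second term in $\sigma^{\max}_{t,s}$. One edge case needs a remark. When $p(t)=1$, the units are all alive at time $t$, so $b_t$ is a binomial on $x_0 - x_t = 0$ trials, and $\beta$ may be taken arbitrary.
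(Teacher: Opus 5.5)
Your proposal is correct and follows the paper's own proof. Like the paper, you split $x_t$ into $x_0$ independent two-state units, keep each live unit with probability $1-\sigma_{t,s}$ and revive each dead one with probability $\beta_{t,s}$, so every unit is alive at time $s$ with probability $(1-\sigma_{t,s})p(t)+\beta_{t,s}(1-p(t))=p(s)$; you then check $\beta_{t,s}\in[0,1]$, and your explicit use of $p(s)\ge p(t)$ for $\beta_{t,s}\ge 0$ fills in what the paper only calls ``easily checked''. One small slip in the optional PGF remark: the factor for $n_t$ should be $(\sigma+(1-\sigma)z)^{x_t}$, not $(1-\sigma+\sigma' z)^{x_t}$, and you do use the correct form in the very next line.
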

See Appendix~\ref{appendix:churnproof} for a proof of this proposition.
Varying this $\sigma_{t,s}$, which is analogous to the churn parameter in Gaussian diffusion \citep{songdenoising, karras2022elucidating}
 and remasking in discrete diffusion \citep{wang2026remasking}, yields a family of birth-death processes. All elements of this family are valid given a trained model because the training procedure (Algorithm \ref{alg:train} in Appendix \ref{ap: algorithms}) depends only on the marginals. Then, given an attrition schedule $\sigma$, we can generate samples by iteratively performing \eqref{eq:attrition_step}, where $x_0 - x_t$ is replaced by the prediction $\hat{y}$ from a neural network trained to optimize \eqref{eq:our_objective}. This sampling procedure is visualized in Figure \ref{fig:full_blackout} and described in Algorithm \ref{alg:generate} in Appendix \ref{ap: algorithms}.
 
 Both $\sigma_{t,s}^\mathrm{\text{max}}$ and $\beta_{t,s}$ as a function of $\sigma_{t,s}$ have the same form as their analogs in ReMDM \citep{wang2026remasking}, providing an interpretation of a birth as an \textit{unmasking} event and a death as a \textit{remasking} event. Thus, we borrow a remasking strategy from ReMDM as a starting point: ReMDM-rescale, which sets $\sigma_{t,s} = \eta_{\text{rescale}} \sigma_{t,s}^{\mathrm{max}}$, where $\eta_{\text{rescale}}$ is a tunable sampling hyperparameter.

\subsection{Guidance}
Classifier-free guidance is a widely used technique in continuous and categorical diffusion models \citep{dhariwal2021diffusion,ho2022classifier}, and was recently extended to discrete state spaces in continuous time by \citet{nisonoff2025unlocking,schiff2025simple, liderivative}. We adapt the method of \citet{nisonoff2025unlocking} to enable guidance for diffusion models on the natural numbers.
Formally, given a conditional reverse rate $\mR^{\text{(\text{rev})}}_{i,j}(t\mid c)$ for class $c$ and its unconditional counterpart $\mR^{(\text{rev})}_{i,j}(t)$, the guided rate $\mR^{(\text{rev})}_{i,j}(t;\gamma \mid c)$ with strength $\gamma \geq 0$ is defined as
\vspace{-0.14cm}
\[
\mR^{(\text{rev})}_{i,j}(t; \gamma \mid c) \;=\; \mR^{(\text{rev})}_{i,j}(t \mid c)^\gamma \, \mR^{(\text{rev})}_{i,j}(t)^{1-\gamma}.
\]
For CountsDiff, this simplifies to
\[
\hat{y}^{(\gamma)} = (\hat{y} \mid c)^\gamma \, \hat{y}^{\,1-\gamma},
\]
where guided samples are obtained by substituting $\hat{y}^{(\gamma)}$ directly into the binomial reverse process. As in other diffusion frameworks, we implement predictor-free guidance by training a single neural network that outputs both conditional and unconditional predictions, achieved by randomly zeroing out class embeddings with probability $p_{\text{uncond}}$.

\subsection{Rounding}
\label{subsection:rounding}
At inference time, rounding is necessary to convert the real-valued output of neural networks into predictions $\hat{y} \in \mathbb{N}_0$, as required by the reverse process. Naively rounding to the nearest integer causes mode collapse at $0$ when $\hat{y}<0.5$ (which occurs frequently for near-zero counts). \citet{santos2023blackout} and \citet{chen2023learning} consider a scheme based on a Poisson approximation; however, this expression is an unfaithful approximation of the binomial distribution in this small $y$ setting. Instead, we adopt a randomized rounding scheme that preserves the expectation of $\hat{y}$ while keeping exact binomial draws, preventing $0$-collapse (appendix~\ref{ap: random rounding}) in a principled manner.
\[
\hat{y}_{\text{clipped}} \;=\; \lfloor \hat{y} \rfloor \;+\; \xi, 
\qquad \xi \sim \mathrm{Bernoulli}(\hat{y} - \lfloor \hat{y} \rfloor).
\]

\subsection{Adapting CountsDiff to data imputation}

We adapt CountsDiff to imputation using the RePaint algorithm \citep[Algorithm 1]{lugmayr2022repaint}, originally developed for image inpainting. RePaint requires no retraining: after each reverse step during sampling, observed entries are reset to their noised ground-truth values, and only masked entries are resampled. This procedure has been successfully repurposed in other domains (\textit{eg.} Forest Diffusion \citep{jolicoeur2024generating}), and we adopt it here for biological count data. 


\section{Experiments}
\vspace{-0.25cm}
We validate CountsDiff in three experimental settings. We begin with a small toy dataset of counts, comparing it with Gaussian and masked (categorical) diffusion. We follow with experiments on digital images (CIFAR-10 and CelebA \citep{krizhevsky2009learning,liu2015deep}) to show that CountsDiff is capable of modeling complex, high-dimensional distributions, and to probe the relative effect of different $
p$-schedules, guidance, and reverse process dynamics in a visually interpretable setting with well-defined benchmarks. Finally, we propose scRNA-seq imputation as a natural real-world use-case for CountsDiff and benchmark it against existing imputation methods.

\subsection{Simulated counts}
To validate CountsDiff's strength in generating counts, we train three simple models on synthetic 10-dimensional sparse count vectors: a Gaussian diffusion model operating in log-space, a (categorical) masked diffusion model (MDLM \citep{sahoo2024simple}, which is equivalent to ReMDM \citep{wang2026remasking} with no remasking), and CountsDiff without attrition. Data are generated from negative binomial distributions with multiplicative size factors, yielding $\approx 50\%$ zeros and maximum counts near 50. Each model uses a small multi-layer perceptron (MLP) backbone with matching parameter counts. We evaluate sample quality by computing Maximum Mean Discrepancy (MMD) \citep{gretton2012kernel} and sliced Wasserstein-1 distance (SWD), a scalable approximation of the Wasserstein-2 distance, to the ground truth data. SWD is computed using $100$ random projections following \citet{bonneel2015sliced}. We also computed the MMD and Wasserstein-1 distance between the true and generated marginal distributions in each dimension, plot distributions of a subset of pairs of dimensions using Kernel Density Estimator (KDE) plots, and compute the variances of each dimension.
Full distributional parameters and architecture details are in Appendix~\ref{ap: simulated counts}.

\subsection{Natural images}
We train CountsDiff on CIFAR-10 and CelebA using U-Net architectures adapted from prior diffusion baselines \citep{songscore}, following the hyperparameters of \citet{santos2023blackout} wherever possible. 
We report three main experiments: 
\vspace{-0.2cm}\begin{enumerate}[leftmargin=*, itemsep=0pt, topsep=0pt, parsep=0pt, partopsep=0pt]
    \item \textbf{Guidance.} We implement predictor-free guidance with $p_{\text{uncond}}=0.1$ and evaluate conditional sampling across guidance scales. We measure Fréchet Inception Score (FID) \cite{heusel2017gans} and Inception Score (IS) \cite{salimans2016improved} and inspect CIFAR-10 samples qualitatively.
    \item \textbf{Reverse-process dynamics.} We introduce nonzero attrition during sampling and assess its effect on FID/IS, as well as any visual effects on images. To validate robustness, we illustrate this on both CIFAR-10 and CelebA.
    \item \textbf{Quantitative results.} We compare the FI discrete schedule implied by \citep{santos2023blackout}, its continuous analog, and our proposed cosine schedule, and report the best-performing combinations of $p$-schedule, guidance, and attrition on 50k CIFAR-10 samples. 
\end{enumerate}
\subsection{Single cell RNA-Seq imputation}
\label{experiments:scrnaseq}
We evaluate CountsDiff on three imputation tasks on heart cell \cite{litvivnukova2020cells} and fetal cell \cite{cao2020human} atlases: 50\% missing complete at random (MCAR) for fetal cells, a 25\% low-biased missing not at random (MNAR) regime (with low counts masked out at a higher rate) for fetal cells, and 50\% MCAR for heart cells. See Appendix~\ref{appendix:missingness} for further discussion of missingness and imputation.  We preprocess our data to select a subset of genes that are commonly but differentially expressed across cells (see Appendix~\ref{ap: preprocessing} for preprocessing details). A test set containing masked-out target sites is then passed into CountsDiff and baselines for imputation.

\textbf{Baselines:}
We compare CountsDiff to a series of baselines. For context, we include naive baselines, which impute without any learning: imputing zeroes, the mean expression of the missing gene in the training set (Mean imputation), or the mean expression of the missing gene for all samples with matching conditions (Conditional Mean). We also compare to specialized methods that are explicitly designed for scRNAseq data or for imputation representing the state-of-the-art applications of a range of modeling frameworks to scRNAseq imputation: a graph-smoothing approach (MAGIC, \citet{van2018recovering}), a generative adversarial networks (GAN) based approach (GAIN, \citet{yoon2018gain}), a variational autoencoder (VAE) based approach (HI-VAE, \citet{nazabal2020handling}) with both Gaussian and Poisson likelihoods, a Gaussian diffusion-based approach (\citet{zhang2024scidpms}, and two masked-autoencoder (MAE) based approaches (xTrimoGene, \citet{gong2023xtrimogene}, scGPT \citet{cui2024scgpt}). Finally, for a more direct comparison of diffusion frameworks, we compare CountsDiff to other diffusion methods, adapted to imputation using the RePaint algorithm: Forest-Diffusion \citep{jolicoeur2024generating}, which is based on Gaussian diffusion, Remasking Masked Diffusion Models (ReMDM) \citep{wang2026remasking}, a state-of-the-art, masking-based discrete diffusion model, and Blackout diffusion \citep{santos2023blackout}, a precursor and special case of CountsDiff. See Appendix \ref{ap: baselines} for more details.

\textbf{Metrics:} CountsDiff and the baselines are evaluated with both pointwise metrics: Root mean-squared error (RMSE), bias, and Spearman's rank correlation, and distributional metrics: Energy Distance (ED), Maximum Mean Discrepancy (MMD) \citep{gretton2012kernel}, Sliced Wasserstein Distance (SWD) \cite{bonneel2015sliced}, and scFID \citep{rizvi2025scaling}, an adaptation of FID for single cell data. Pointwise metrics are computed individually for each sample and averaged over the evaluation set, while distributional metrics are computed on the entire evaluation set.  For each metric, we obtain standard error via ten bootstrap resamples (50k data points for the fetus cell atlas and 20k for the heart cell atlas). Implementation details and further discussion of metrics are detailed in Appendix \ref{appendix:metrics_discussion}.


\section{Results}
\subsection{Toy example}
Both CountsDiff and masked diffusion are easily able to learn the marginals of the toy distribution, with comparably low marginal MMD and Wasserstein-1 distances across all dimensions (Figure \ref{fig:simulated-counts}). Both also perform well on the joint MMD, indicating they capture the dominant modes well. 

However, masked diffusion performs poorly on SWD, indicating it may be overfitting to outliers in the training set and is therefore more prone to generating excessive outliers/low-quality ``hallucinations". We confirm this by noting that the variance in each dimension of masked diffusion's samples is far greater than the other two models and the real data (see Figure \ref{fig:simulated-counts} and Appendix \ref{appendix:simulated_experiments}). The higher joint-SWD relative to marginal Wasserstein-distance also indicates masked diffusion is less effective in learning the correlations between dimensions; we observe this in the joint KDE plots between the first two dimensions (Figure \ref{fig:kdeplots}).

Gaussian diffusion was entirely unable to learn these sparse, discrete, ordinal data and suffered from extreme mode collapse. Increasing model capacity and training time and varying learning rate did not affect this result.
\begin{figure}[ht]
    \centering
    \begin{minipage}[c]{1.0\linewidth}
        \centering
        \includegraphics[width=\linewidth]{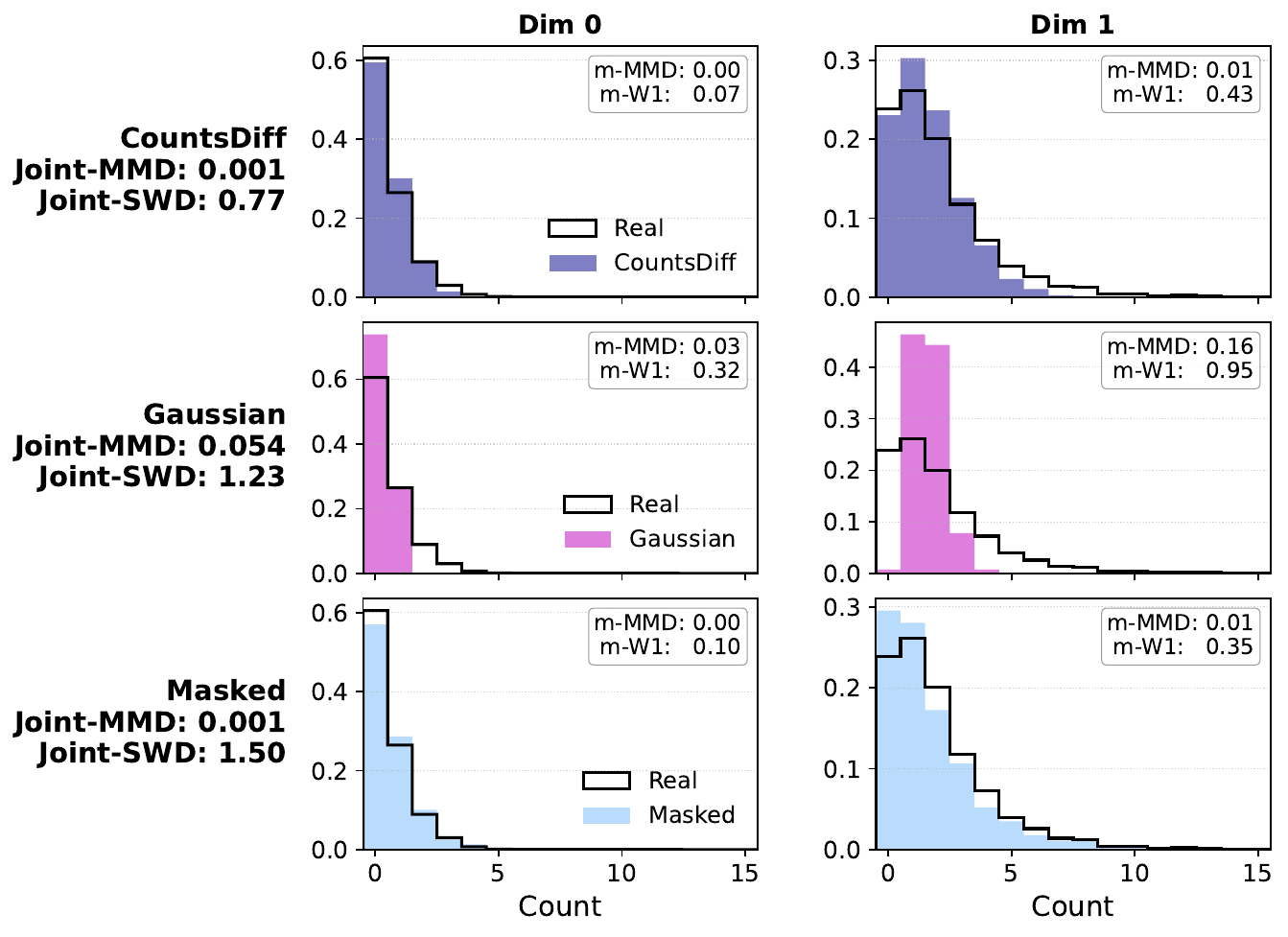}
    \end{minipage}
       \centering
        \footnotesize 
        \begin{tabular}{lccccc}
            \toprule
            & \multicolumn{5}{c}{\textbf{Variance} ($\sigma^2$) -- \textit{Closer to True is Better}} \\
            \cmidrule(l){2-6} 
            \textbf{Model} & \textbf{Dim 0} & \textbf{Dim 1} & \textbf{Dim 2} & \textbf{Dim 3} & \textbf{Dim 4} \\
            \midrule
            True \textit{(Target)} & 0.78 & 4.71 & 0.10 & 0.12 & 0.28 \\
            \textbf{CountsDiff} & \textbf{0.55} & \textbf{1.99} & \textbf{0.07} & \textbf{0.08} & \textbf{0.21} \\
            Gaussian      & 0.19 & 0.46 & 0.01 & 0.02 & 0.05 \\
            Masked        & 3.06 & 9.22 & 1.89 & 2.49 & 7.27 \\
            \bottomrule
        \end{tabular}
    \caption{Histogram of model-generated samples versus ground truth and distributional distance metrics (top); variance statistics (bottom) for a subset of dimensions. Existing diffusion models exhibit failure cases even in a simple toy dataset: Gaussian diffusion suffers from mode collapse, while masked diffusion overfits outliers (inflated variance). Full results for all ten dimensions can be found in Appendix \ref{appendix:simulated_experiments}.}
    \label{fig:simulated-counts}
\end{figure}
\vspace{-0.2cm}
\subsection{Natural images}




\begin{figure}[!h]
\centering
    \includegraphics[width=1.0\linewidth]{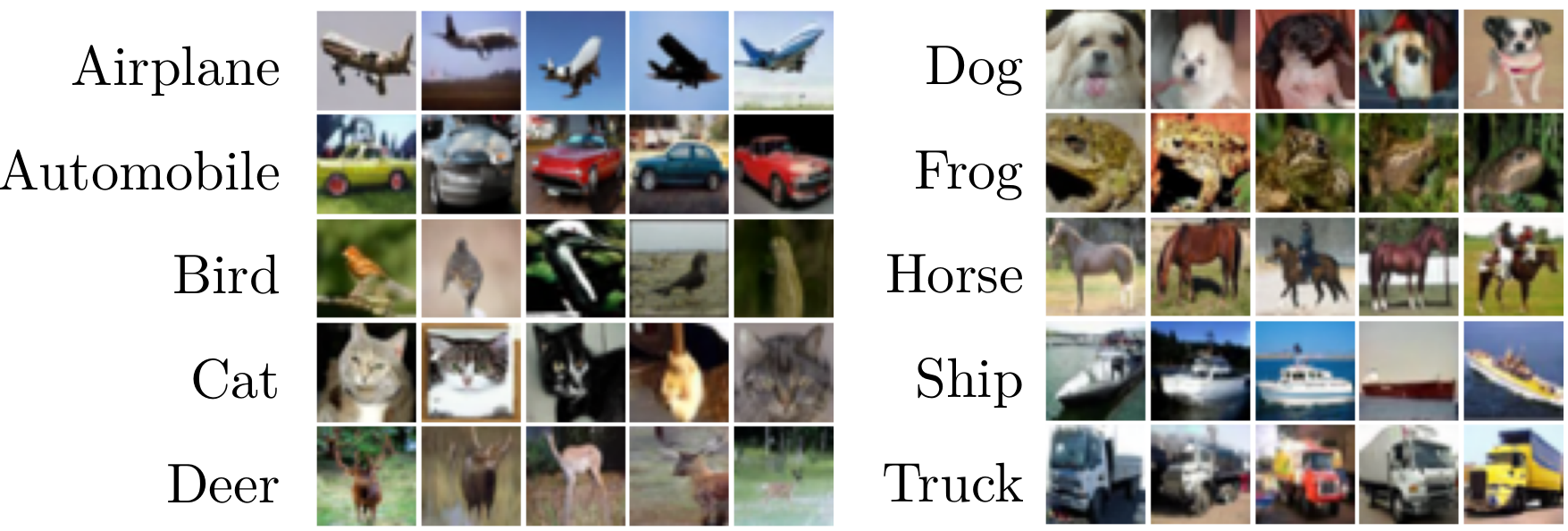}
    \caption{5 images guided by each Cifar10 class sampled from CountsDiff with guidance scale $2.0$ and $\eta_{\text{rescale}} = 0.005$.  }
    \label{fig:cifar-guided}
\end{figure}

\textbf{Guided Image Generation} performs as expected, enabling class-conditioned sampling for CIFAR-10 (Figure \ref{fig:cifar-guided}). Moderate levels of guidance also improve FID and IS (Table~\ref{table:best-metrics}).

\textbf{Increasing attrition rate has a smoothing effect}:
rough or noisy parts of a generated image can be interpreted as ``overshooting" the correct value; allowing for these values to decrement manifests as smoothing. Taken to the extreme, we see dramatic oversmoothing, which results in a complete removal of texture and perspective as $\eta_{\text{rescale}} \rightarrow 1$. See Figure \ref{fig:eta_rescales} for CIFAR-10 and Figure \ref{fig:eta_rescales_celeba} for CelebA.

\vspace{-0.1cm}

\begin{figure}[!h] 
    \centering
    \includegraphics[width=1.0\linewidth]{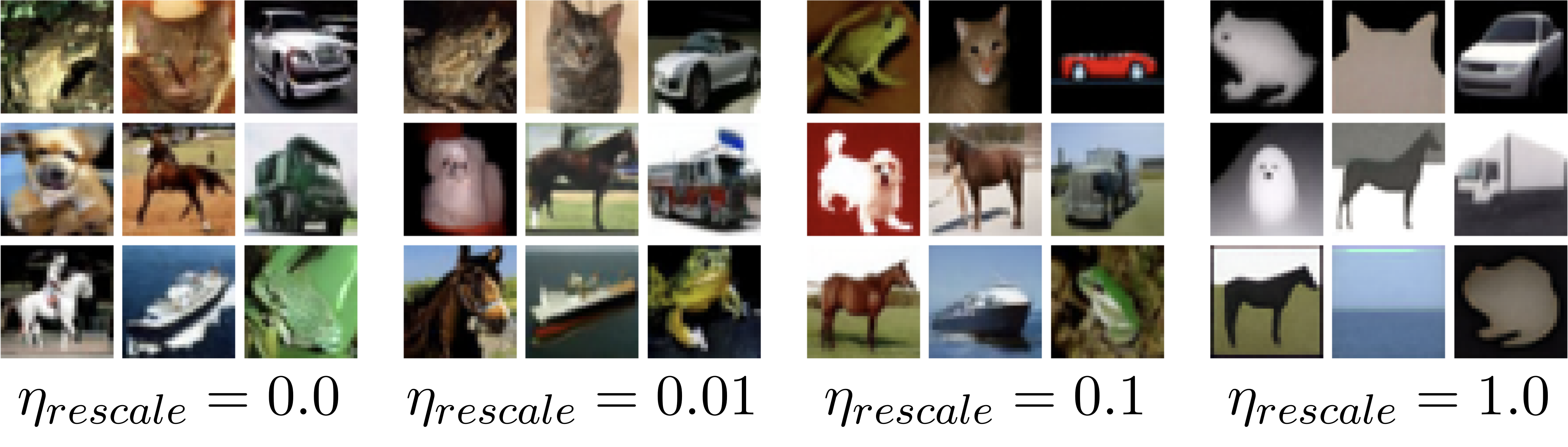}
    \caption{Nine images drawn from CountsDiff trained on CIFAR-10 with $\eta_\text{rescale}$ attrition schedule for varying levels of $\eta_\text{{rescale}}$}
    \label{fig:eta_rescales}
\end{figure}
\subsubsection{Quantitative results}
We find both moderate guidance and small, nonzero attrition to improve the FID and IS of samples generated by CountsDiff. Generalizing the Fisher Information (FI) $p$-schedule from \citet{santos2023blackout} to continuous time improves FID. Across hyperparameters, FI noise schedule generally results in slightly better FID, while the cosine schedule results in slightly better IS, indicating the cosine schedule generates slightly higher fidelity samples at the expense of slightly poorer sample diversity. Notably, even this limited exploration of our extended design space with choices drawn directly from existing diffusion frameworks resulted in significant improvements in sample quality and diversity, underscoring the potential for the elucidated CountsDiff design space to enable rapid, systematic advances in count-based diffusion.

Training curves were also more stable for our cosine noise schedule (see Appendix~\ref{appendix:cosine_schedule stabilizes}), consistent with the original motivation of this schedule in Gaussian diffusion. For more extensive ablations on $\gamma$ and attrition see Appendix~\ref{appendix:guidance_sweep}, \ref{appendix:attrition_sweep}. For  results on CelebA see Appendix \ref{ap: CelebA}.
\begin{table}[h]
    \centering
    \footnotesize
        \caption{FID and IS of 50k images sampled with from CountsDiff trained on CIFAR-10 for a selected set of sampling hyperparameters. FI Discrete with unconditional generation and no attrition is equivalent to Blackout  \citep{santos2023blackout}}
        \begin{tabular}{lllccc}
            \toprule
            \textbf{$p$-schedule} & $\gamma$ & $\eta_{\mathrm{rescale}}$ & \textbf{FID} $\downarrow$ & \textbf{IS} $\uparrow$\\
            \midrule
             FI Discrete  & uncond & none & $5.73$ & $9.12 \pm 0.05$\\
             \midrule
             FI Continuous & uncond & none & $5.44$ & $9.09 \pm 0.16$\\
             FI Continuous & $1.0$ & $0.01$ & $\textbf{5.20}$ & $9.64 \pm 0.17$\\
             FI Continuous & $2.0$ & $0.02$ & $9.71$ & $9.77 \pm 0.11$\\ 
             \midrule
             Cosine & uncond & none & $5.76$ & $9.29 \pm 0.18$\\
             Cosine & $1.0$ & $0.01$ & $5.26$ & $9.85 \pm 0.08$\\
             Cosine & $2.0$ & $0.02$ & $11.55$ & $\mathbf{9.93} \pm 0.13$\\ 

            \bottomrule
        \end{tabular}
    \label{table:best-metrics}
\end{table}
\vspace{-0.7cm}

\subsection{scRNA-seq imputation}
Our imputation results for the MCAR and MNAR scenarios on the fetus cell atlas \citep{cao2020human} are shown below in Table~\ref{tab:fetus_mcar} and Table~\ref{tab: fetus_mnar}. Further results on the heart cell atlas \citep{litvivnukova2020cells} can be found in Table~\ref{tab:heart_mcar} in Appendix~\ref{ap: heart imputation}; more metrics can be found in Appendix \ref{ap:all_metrics} 

\begin{table*}[t]\centering\footnotesize\setlength{\tabcolsep}{4pt}
\caption{Benchmarking on scRNA-seq imputation, human fetus cell atlas with 50\% MCAR. Mean (standard error). Methods are grouped into three categories: naive baseline (top), imputation/scRNAseq-specific (middle), and general generative (bottom). Best performance in each category for each metric is bolded, and second best is italicized.}
\label{tab:fetus_mcar}
\begin{tabular}{l ccc c cccc}
\toprule
 & \multicolumn{3}{c}{\textbf{Pointwise}} & & \multicolumn{4}{c}{\textbf{Distributional}} \\
\cmidrule(lr){2-4}\cmidrule(lr){6-9}
\textbf{Method} & Spearman$\uparrow$ & RMSE$\downarrow$ & Bias & & ED$\downarrow$ & log(scFID)$\downarrow$ & log(MMD)$\downarrow$ & SWD$\downarrow$ \\
\midrule
Zero imputation & N/A & 1.91(0.01) & \textit{-1.31(0.00)} & & 1.44(0.00) & -2.35(0.00) & -4.08(0.00) & 0.17(0.00) \\
Mean imputation & \textit{0.17(0.00)} & \textit{1.31(0.04)} & \textbf{0.00(0.00)} & & \textit{0.17(0.00)} & \textit{-5.01(0.01)} & \textit{-7.03(0.00)} & \textit{0.12(0.01)} \\
Conditional Mean & \textbf{0.20(0.00)} & \textbf{1.12(0.01)} & \textbf{0.00(0.00)} & & \textbf{0.15(0.00)} & \textbf{-6.23(0.01)} & \textbf{-7.49(0.01)} & \textbf{0.08(0.00)} \\
\midrule
MAGIC & \textbf{0.21(0.00)} & 1.88(0.01) & -1.31(0.00) & & 1.44(0.00) & -2.35(0.00) & -4.07(0.00) & 0.17(0.00) \\
scIDPMs, 1-sample & 0.08(0.00) & 2.25(0.02) & 0.86(0.00) & & 0.68(0.00) & -3.04(0.01) & -3.39(0.00) & 0.20(0.00) \\
scIDPMs, 5-sample & 0.10(0.00) & 1.89(0.01) & 0.86(0.00) & & 0.88(0.00) & -2.92(0.01) & -3.65(0.00) & 0.18(0.00) \\
GAIN & 0.04(0.00) & 1.87(0.02) & -1.27(0.00) & & 1.44(0.00) & -2.34(0.00) & -4.07(0.00) & 0.17(0.00) \\
HI-VAE (Poisson) & 0.15(0.00) & 1.27(0.02) & \textit{0.02(0.00)} & & \textbf{0.11(0.00)} & \textit{-6.26(0.01)} & -6.68(0.00) & 0.11(0.00) \\
HI-VAE (Gaussian) & 0.14(0.00) & 1.27(0.03) & \textbf{-0.01(0.00)} & & \textit{0.12(0.00)} & -5.78(0.01) & \textbf{-6.93(0.01)} & 0.10(0.01) \\
scGPT (scratch) & 0.17(0.00) & \textbf{1.05(0.02)} & -0.20(0.00) & & 0.25(0.00) & -5.95(0.01) & -6.20(0.00) & \textit{0.09(0.01)} \\
scGPT (pretrained) & 0.13(0.00) & 1.19(0.02) & -0.44(0.00) & & 0.51(0.00) & -4.63(0.00) & -5.33(0.00) & 0.11(0.00) \\
xTrimoGene & \textit{0.18(0.00)} & \textit{1.08(0.02)} & -0.11(0.00) & & 0.37(0.00) & \textbf{-6.71(0.02)} & \textit{-6.75(0.01)} & \textbf{0.08(0.00)} \\
\midrule
Forest-Diffusion & 0.03(0.00) & 27.18(0.06) & 8.41(0.01) & & 2.18(0.00) & -1.54(0.00) & -0.80(0.00) & 4.94(0.01) \\
ReMDM, 1-sample & \textit{0.11(0.00)} & 1.66(0.06) & \textbf{0.00(0.00)} & & \textbf{0.01(0.00)} & \textit{-8.98(0.05)} & \textit{-11.69(0.03)} & 0.12(0.01) \\
ReMDM, 5-sample & \textbf{0.12(0.00)} & \textit{1.20(0.03)} & \textbf{0.00(0.00)} & & 0.28(0.00) & -8.44(0.04) & -8.76(0.01) & \textit{0.08(0.01)} \\
Blackout & \textit{0.11(0.00)} & 1.39(0.02) & \textit{0.03(0.00)} & & \textit{0.02(0.00)} & -7.37(0.01) & -10.78(0.02) & \textbf{0.07(0.01)} \\
\textbf{CountsDiff (Ours),} 1-sample & 0.09(0.00) & 1.42(0.03) & \textbf{0.00(0.00)} & & \textbf{0.01(0.00)} & \textbf{-9.18(0.05)} & \textbf{-11.94(0.02)} & \textit{0.08(0.01)} \\
\textbf{CountsDiff (Ours),} 5-sample & \textbf{0.12(0.00)} & \textbf{1.17(0.03)} & \textbf{0.00(0.00)} & & 0.30(0.00) & -7.60(0.03) & -8.64(0.01) & \textit{0.08(0.01)} \\
\bottomrule\end{tabular}
\end{table*}
\begin{table*}[h]\centering\footnotesize\setlength{\tabcolsep}{4pt}
\caption{Benchmarking on scRNA-seq imputation, human fetus cell atlas with 25\% low-biased missingness (MNAR). Mean (standard error). Methods are grouped into three categories: naive baseline (top), imputation/scRNAseq-specific (middle), and general generative (bottom). Best performance in each category for each metric is bolded, and second best is italicized.}
\label{tab: fetus_mnar}
\begin{tabular}{l ccc c cccc}
\toprule
 & \multicolumn{3}{c}{\textbf{Pointwise}} & & \multicolumn{4}{c}{\textbf{Distributional}} \\
\cmidrule(lr){2-4}\cmidrule(lr){6-9}
\textbf{Method} & Spearman$\uparrow$ & RMSE$\downarrow$ & Bias & & ED$\downarrow$ & log(scFID)$\downarrow$ & log(MMD)$\downarrow$ & SWD$\downarrow$ \\
\midrule
Zero imputation & N/A & 1.00(0.00) & -0.99(0.00) & & \textit{1.41(0.00)} & \textit{-5.29(0.01)} & -6.64(0.00) & 0.03(0.00) \\
Mean imputation & \textit{0.26(0.00)} & \textit{0.51(0.00)} & \textit{0.32(0.00)} & & \textbf{0.12(0.00)} & -4.81(0.01) & \textit{-7.99(0.01)} & \textit{0.02(0.00)} \\
Conditional Mean & \textbf{0.36(0.00)} & \textbf{0.47(0.00)} & \textbf{0.28(0.00)} & & \textbf{0.12(0.00)} & \textbf{-5.82(0.01)} & \textbf{-8.69(0.01)} & \textbf{0.01(0.00)} \\
\midrule
MAGIC & 0.09(0.02) & 1.00(0.00) & -0.99(0.00) & & 1.41(0.00) & -5.28(0.00) & -6.63(0.00) & 0.03(0.00) \\
scIDPMs, 1-sample & 0.03(0.00) & 2.22(0.01) & 1.20(0.00) & & 0.98(0.00) & -3.61(0.00) & -4.20(0.00) & 0.14(0.00) \\
scIDPMs, 5-sample & 0.03(0.00) & 1.77(0.00) & 1.20(0.00) & & 1.17(0.00) & -3.52(0.00) & -4.41(0.00) & 0.11(0.00) \\
GAIN & 0.02(0.00) & 0.91(0.00) & -0.91(0.00) & & 1.40(0.00) & -5.42(0.01) & -6.64(0.00) & 0.03(0.00) \\
HI-VAE (Poisson) & 0.03(0.00) & 0.59(0.00) & 0.39(0.00) & & 0.22(0.00) & -5.24(0.00) & -8.23(0.00) & 0.02(0.00) \\
HI-VAE (Gaussian) & 0.02(0.00) & 0.73(0.00) & 0.45(0.00) & & 0.37(0.00) & -4.83(0.00) & -7.55(0.00) & 0.03(0.00) \\
scGPT (scratch) & \textbf{0.33(0.00)} & \textit{0.28(0.00)} & \textit{0.12(0.00)} & & \textit{0.07(0.00)} & \textit{-7.79(0.01)} & \textbf{-11.92(0.01)} & \textit{0.01(0.00)} \\
scGPT (pretrained) & \textit{0.31(0.00)} & \textbf{0.24(0.00)} & \textbf{-0.09(0.00)} & & \textbf{0.04(0.00)} & \textbf{-8.71(0.02)} & \textit{-11.19(0.01)} & \textbf{0.00(0.00)} \\
xTrimoGene & 0.29(0.00) & 0.38(0.00) & 0.25(0.00) & & 0.12(0.00) & -6.93(0.01) & -11.06(0.00) & \textit{0.01(0.00)} \\
\midrule
Forest-Diffusion & 0.03(0.00) & 30.53(0.27) & 9.80(0.03) & & 2.36(0.00) & \textbf{-7.26(0.02)} & -4.13(0.00) & 0.88(0.01) \\
ReMDM, 1-sample & \textbf{0.47(0.00)} & 1.01(0.05) & \textit{0.31(0.00)} & & 0.31(0.00) & \textit{-6.72(0.01)} & -7.56(0.01) & 0.05(0.01) \\
ReMDM, 5-sample & 0.38(0.00) & \textit{0.66(0.01)} & \textit{0.31(0.00)} & & \textit{0.28(0.00)} & -6.61(0.00) & \textit{-8.64(0.00)} & \textbf{0.02(0.00)} \\
Blackout & \textit{0.45(0.00)} & 0.88(0.01) & \textbf{0.30(0.00)} & & 0.31(0.00) & -6.32(0.01) & -7.73(0.00) & \textit{0.03(0.00)} \\
\textbf{CountsDiff (Ours),} 1-sample & 0.42(0.00) & 0.87(0.00) & \textbf{0.30(0.00)} & & 0.30(0.00) & -6.41(0.01) & -7.62(0.00) & \textit{0.03(0.00)} \\
\textbf{CountsDiff (Ours),} 5-sample & 0.35(0.00) & \textbf{0.58(0.00)} & \textbf{0.30(0.00)} & & \textbf{0.26(0.00)} & -6.24(0.01) & \textbf{-8.78(0.00)} & \textbf{0.02(0.00)} \\
\bottomrule\end{tabular}\end{table*}

CountsDiff (CountsDiff 1-sample) is consistently a top performer across all distributional metrics in MCAR tasks, both in the fetus data (Table \ref{tab:fetus_mcar}) and the heart data (Table \ref{tab:heart_mcar}), comparable only to ReMDM (ReMDM 1-sample), a state-of-the-art discrete generative model. The higher RMSE and RMSE standard error, significantly higher SWD in 1-imputation indicates that ReMDM likely exhibits similar over-sampling of outliers to what we observe in the simulated data (Figure \ref{fig:simulated-counts}), though to a less catastrophic extent than standard masked diffusion. This tendency is particularly undesirable in scientific settings, where outliers are precisely the observations used to infer signal; over-sampling is therefore likely to generate spurious findings and false conclusions. Furthermore, CountsDiff has about half as many parameters (one-fourth in the heart cell atlas task) as ReMDM, due to ReMDM's output layer size depending on the max count.

In the pointwise metrics, CountsDiff is outperformed by some existing metrics, but \textit{this outcome is expected}. Pointwise metrics reward models that predict conditional expectations rather than full distributions; for example, RMSE is minimized by the true conditional mean, while CountsDiff and other generative models sample from an approximate conditional distribution. See Appendix \ref{appendix:metrics_discussion} for a more complete discussion. As such, we expect non-generative approaches such as xTrimoGene and MAGIC to perform well; even naive baselines like imputing the conditional mean, which requires no learning whatsoever, are strong. Despite strong point estimate performance, these non-generative methods perform poorly in distributional metrics and less suitable for tasks that require preservation of distributional structure across cells, such as learning gene programs within cell types or gene regulation. Nonetheless, we recognize that pointwise performance is sometimes valuable, and show that taking the mean of just 5 samples from CountsDiff (CountsDiff 5-sample) improves these metrics. We expect taking more samples too further improve the pointwise metrics, though this makes inference more costly.

In the MNAR task, ScGPT and xTrimoGene are quite strong in all metrics apart from Spearman correlation. However, this apparent advantage can be explained largely by fortuitous miscalibration of these models. While all methods, regardless of architecture, exhibit a shift in bias between the MCAR and low-biased MNAR tasks, the negative bias of scGPT and xTrimoGene, due to their tendency of over-sampling zeros, coincidentally aligns with the low-biased MNAR missingness pattern, resulting in inflated metrics across the board. We note that this is not an indication of robustness to MNAR-tasks: a high-biased MNAR task would  amplify scGPT and xTrimoGene's bias. Furthermore, both models perform poorly in Spearman correlation, where they are outperformed by ReMDM, Blackout diffusion, CountsDiff, and the simple conditional mean baseline, indicating ineffectiveness at differentiating genes. Among the generative models, CountsDiff has the strongest performance in all metrics but Spearman correlation and scFID.

Guidance and moderate levels of attrition improve sample quality, matching trends in imaging data. The cosine-schedule results in slightly better performance and yields more significant improvements from guidance/attrition than the Blackout noise schedule; see appendix \ref{appendix:imputation_ablations}.

With further optimization in $p$-scheduling, loss weighting, and attrition scheduling beyond the scope of this work, there is substantial room for empirical improvement.

\section{Related work}
\subsection{Generative Models}
Our work is most closely related to Blackout Diffusion \citep{santos2023blackout}, which can be interpreted as a special case of CountsDiff with no guidance, fixed $p$-schedule and loss weighting, and no sampling with attrition. \citet{santos2023blackout} prove the NLL objective and validity of the reverse process only in this special case.

JUMP \citep{chen2023learning} models positive, real-valued data by projecting it into counts ($z_0 \sim \mathrm{poisson}(\lambda x_0)$), then noises through a binomial thinning of $z_0$, resulting in a similar noising process, parametrized by $\alpha_t$. Their loss objective, derived from the ELBO, resembles \eqref{eq:our_objective} but with a different predictive target and constant loss weighting. For natively count-based data, \citet{chen2023learning} also propose Binomial-JUMP, which can be interpreted as another special case of CountsDiff with constant weights, no guidance, and no attrition, and using the Poisson sampling scheme mentioned in section \ref{subsection:rounding}. JUMP's primary advantage lies in its ability to handle continuous non-negative data, which is outside the scope of the present work. The underlying noising and denoising resembles Blackout Diffusion and has a similarly limited design space. JUMP and CountsDiff are complementary approaches, and CountsDiff's improvements on modeling counts (extended design space, continuous-time formulation, and exact loss) can be readily extended to non-negative reals using JUMP's Poisson data randomization trick (Appendix \ref{appendix:extension_to_continuous}).

We would also like to point the reader towards relevant works in Gaussian and categorical diffusion that can help inform design choices of the CountsDiff design space. In particular, \citep{karras2022elucidating} and \citep{kingma2023understanding} explore noise schedules and loss weighting in Gaussian diffusion; \citet{wang2026remasking} introduces remasking for masked discrete diffusion, which is analogous to attrition; and \citet{sahoo2025diffusion} introduces a framework to bridge Gaussian diffusion with discrete diffusion in order to more easily transfer design choices.
 
\subsection{scRNA-seq imputation} 
Due to the high sparsity and missingness of scRNA-seq data (as discussed in Appendix~\ref{appendix:missingness}), imputation of scRNA-seq data is an important but challenging problem. Methods based on various modeling paradigms, which we benchmark against, have been proposed to address this issue; see the full list in section \ref{experiments:scrnaseq}. 

ScDiffusion \citep{luo2024scdiffusion} and Squidiff \citep{he2026squidiff} are also diffusion models trained on scRNAseq. However, both are latent diffusion models that output whole transcriptomes, meaning they cannot be as easily adapted to imputation tasks. Recently, some single-cell foundation models such as xTrimoGene \citep{gong2023xtrimogene} and scGPT \citep{cui2024scgpt}, which aim to yield informative representations of cells from their expression profiles, have used masked expression prediction (which is effectively imputation) as the training objective. As a result, these models can also be adapted to impute scRNAseq values.

\section{Discussion}
In this paper, we introduced CountsDiff, a diffusion framework designed to handle discrete ordinal data using birth/death processes as the noising and denoising mechanisms. Our main contribution is an elucidated and deconvolved design space, where each design parameter, noise schedule, loss weighting, reverse-process modifications, and guidance, has a direct and interpretable analogue in modern continuous and categorical diffusion models. This framing both extends and clarifies the design space of Blackout Diffusion \citep{santos2023blackout}. Concretely, we unlocked continuous-time training, reparameterized the model with a more intuitive $p$-schedule, introduced a principled loss weighting, derived attrition as the counterpart to churn/remasking, and incorporate classifier-free guidance.

We proposed principled starting points for each of these new design parameters, demonstrating how our unified design space enables seamless transfer across diffusion families. Through experiments on a range of applications, from image generation to scRNA-seq imputation, we demonstrated that this initial instantiation of CountsDiff matches the performance of a state-of-the-art discrete diffusion model while avoiding a key failure case in count-based regimes. Further, 
CountsDiff also outperformed specialized scRNA-seq imputation methods across multiple metrics. 

Our work yields promising results for scRNA-seq imputation, and we expect further hyperparameter optimization and task-specific adaptations to unlock the potential of the CountsDiff framework for this and other large-scale biology applications, such as ATAC-Seq imputation, perturbation effect prediction, and single-cell foundation modeling.

\section{Future Work and Limitations}

Below we outline limitations and directions of future work for CountsDiff in both the generative modeling and the biological application spaces:

\textbf{Generative Modeling}: CountsDiff requires a moderate number of sampling steps, matching Denoising Diffusion Implicit Models (DDIM)-style \citep{songdenoising} Gaussian diffusion in sampling speed, but is significantly slower than recent few- and one-step samplers such as Consistency Models \cite{song2023consistency}, Mean Flows \cite{geng2025mean}, and Categorical Flow Maps \citep{roos2026categorical}). Extending such ideas to CountsDiff is an important avenue to explore. We also only consider one transferred candidate per introduced design choice (noise schedule, loss weighting, attrition). Further optimization, particularly attrition, which is the least studied of the three, is likely to yield meaningful modeling gains. 

\textbf{scRNAseq/Biological applications}: For our scRNA-seq imputation task specifically, our evaluation is limited to a subset of highly variable genes, in line with related methods in scRNA-seq. We also do not explore pre-training on multiple scRNA-seq datasets or task/domain-specific adaptations to the training objective or model architecture, such as those developed for Gaussian diffusion-based models in scIDPMs \cite{zhang2024scidpms} and Squidiff \cite{he2024squidiff}.

\section*{Impact Statement}
In addition to advancing the field of machine learning, the goal of the work presented in this paper is to improve generative modeling for count-valued biological data. By enabling practical diffusion modeling on the natural numbers, this work aims to support more faithful generation and imputation of biological count data in order to facilitate improved understanding of underlying biological systems.

However, the use of generative AI in biological settings carries important considerations: generated or imputed values misinterpreted as direct measurements may lead to incorrect scientific conclusions if model limitations are not carefully accounted for. These risks are not unique to our approach, instead applying broadly to generative modeling in biology.

\section*{Acknowledgements}
This work was originally proposed by Valentin De Bortoli and would not have been possible without his mathematical insight and guidance throughout. 


\bibliography{icml2026_conference}
\bibliographystyle{icml2026}

\newpage
\appendix
\onecolumn
\section{Extended background}

\subsection{Kullback-Leibler Divergence equivalence with Negative Log Likelihood}
\label{appendix:KL-NLL}
Because the data distribution is unknown and unknowable, requiring that $P_{\text{gen}} \approx P_{\text{data}}$ is ill-defined. This problem is commonly addressed by approximating the objective using Monte-Carlo sampling. For example, if the error is quantified by the Kullback-Leibler divergence \citep{kullback1951information}:
\[
\KL(P_{\text{data}}\vert P_{\text{gen}}) = \mathbb{E}\left[ \log \frac{P_{\text{data}}(x)}{P_{\text{gen}}(x)}\right],
\]
we can approximate it via 
\begin{align*}
    &\mathbb{E}\left[ \log \frac{P_{\text{data}}(x)}{P_{\text{gen}}(x)}\right] \approx \frac{1}{|D|} \sum_{x \in D} \log(P_{\text{data}}(x)) - \log(P_{\text{gen}}(x)),
\end{align*}
which is minimized with respect to $P_{\text{gen}}$ when the Negative Log Likelihood (NLL) of the data with respect to $P_{\text{gen}}$, $NLL = - \sum_{x \in D}\log(P_{\text{gen}}(x))$, is minimized, since the first term is constant with respect to $P_{\text{gen}}$.

\subsection{Gaussian diffusion models}
\label{appendix:gaussian-diff}
Gaussian diffusion models are diffusion models where the forward kernels are Gaussian transitions
\[
q(\rx_t|\rx_{t-1}) = \mathcal{N}(\sqrt{1 - \beta_t} \rx_{t-1}, \  \beta_t \mathbf{I}),
\]
where the sequence $\{\beta_t\}_{t=1}^T$ is a monotonic \textit{variance schedule} with $\beta_0 =0$ and $\beta_T = 1$. The Gaussian diffusion forward process can be rewritten in the following closed form:
\begin{equation}
\rx_t = \sqrt{\bar{\alpha}_t} \rx_0 + \sqrt{1 - \bar{\alpha}_t}\epsilon, \ \ \ \epsilon \sim \mathcal{N}(0, \mathbf{I}) \label{eq:closed-form} ,
\end{equation}
with $\bar{\alpha}_t = \Pi_{s=1}^t (1 - \beta_t)$, resulting in $P_{\text{noise}}(z|\rx)$ = $q(\rx_T | \rx_0) = \mathcal{N}(0, \mathbf{I})$. Optimizing $p_\theta(\rx_{t-1} | \rx_t)$ to minimize the NLL of the observed $\rx_{t-1}$ can be reduced to predicting the added noise $\epsilon$, the original signal $\rx_0$, or some hybrid of the two. Though these objectives are equivalent up to reweighting of the loss objective, their empirical performance can vary \citep{kingma2023understanding}. This process and the corresponding objectives can also naturally be extended to the continuous time domain by taking the limit as $T \rightarrow \infty$ and $\Delta t \rightarrow 0$. The forward and reverse processes become stochastic differential equations (SDEs) \citep{songscore}, but the marginals can still be written in closed form, similar to \eqref{eq:closed-form}:
\begin{equation}
   \rx_t = \sqrt{\alpha(t)} \rx_0 + \sqrt{1 - \alpha(t)}\epsilon, \ \ \ \epsilon \sim \mathcal{N}(0, \mathbf{I}) \label{eq:closed-form-continuous},
\end{equation}
where $\alpha(t)$ is commonly referred to as a \textit{noise schedule}, and is a continuous monotonic function of $t$. Although the resulting training objectives are nearly identical, the continuous extension allows for more flexibility at the generation stage: one can sample using numerical stochastic differential equations / ordinary differential equations solvers \citep{hartman2002ordinary} (ODE), or if they choose to discretize the reverse SDE, they are no longer bound to a specific number of time steps. Due to the similarity in the training objectives and the Gaussianity of the marginals $q(x_t | x_0)$ in these continuous extensions, we will consider them a subclass of Gaussian diffusion models, namely \textit{continuous-time} (as opposed to \textit{discrete-time}) Gaussian diffusion models. 

\subsection{Data missingness} \label{appendix:missingness}
The standard theory for missing data depends on the notion of ``missing at random" (MAR, \citep{rubin1976inference,  seaman_what_2013, schafer1997analysis}). There are three types of missingness mechanisms: (i)  \textit{missing completely at random} (MCAR), when the process determining missingness is assumed to be independent of (the values of) the variables; (ii) \textit{missing at random} (MAR), when the missingness mechanism depends only on the observed variables, and  (iii) \textit{missing not at random} (MNAR), when the missingness depends on both observed and unobserved (missing) variables \citep{little_statistical_2020}.

More formally, suppose there is some ground truth vector of counts $\rx^{\rm true}$, and some binary mask $\ro$, and we observe $\rx^{\rm obs} = \rx^{\rm true} \cdot \ro$.

The MCAR assumption is simply that each position $\ro_i$ of $\ro$ is distributed according to 
\[
\ro_i \overset{i.i.d}{\sim} \mathrm{Bernoulli}(1 -d)
\]
for some dropout probability $d$.

The MAR assumption is that $\ro \ind \rx^{\rm true}$, and MNAR covers all remaining cases. In particular, we are interested in the setting where for each $\ro_i$, we have 
\[
\ro_i {\sim} \mathrm{Bernoulli}(1 -d_i),
\]
where $d_i$ is larger for smaller values of $x_i$. This form of MNAR is relevant for scRNA-seq imputation tasks, as missingness can be induced by low read counts \citep{qiu2020embracing}.




\subsection{Imputation} \label{ap: imputation}
Missingness can be addressed with either \textit{single} or \textit{multiple imputation}. Multiple imputation (MI) \citep{rubin_multiple_1987} is a widely studied method \citep{sterne2009multiple, hayati2015rise} that uses the distribution of observed data to estimate a set of likely values for missing data. By contrast, single imputation generates only a single point. 

Several methods have been developed to handle data missingness. Early methods include complete case analysis, in which samples with missing data are simply removed from the dataset, and mean imputation, where missing values are filled with the per-variable mean across all (or a subset satisfying a specific condition) of the observed data points. Early machine learning methods include random forest imputation \citep{hastie_elements_2009,stekhoven2012missforest,shah2014comparison}, which recursively splits the data via a predictor from known samples to estimate unobserved values.

More recently, generative models, including variational autoencoders (VAEs) \citep{kingma2013auto}, generative adversarial networks (GANs) \citep{NIPS2014_f033ed80}, and diffusion models \cite{sohl2015deep, ho2020denoising}, have been explored for data imputation. These methods rely on the principle that generative models are capable of learning the underlying data-generating distributions. In \citet{burda2016importance,nazabal2020handling, roskams2023leveraging}, the authors extend VAEs, initially replacing missing values with zeros and training a neural network to predict these values. Similarly, \citet{li2019misgan, yoon2018gain, xu2019modeling} use GANs, setting up a game between a generator $G$ that generates \textit{both} observed and imputed values and a discriminator $D$ that decides whether a particular data point was imputed or not.  Diffusion models can be designed for imputation \citep{tashiro2021csdi, alcaraz2022diffusion} or be adapted for imputation through algorithms such as RePaint \citep{lugmayr2022repaint}, which passes the noised ground truth at each time step of denoising to fix the imputation target sites \citep{jolicoeur2024generating}, or methods such as DiffPuter \citep{zhang2025diffputer}, which uses the Expectation-Maximization algorithm to guide a diffusion model to fill in missingness. 

\subsubsection{Multiple imputation} \label{ap: multiple imputation}
As described in \citet{huque2018comparison}, there are two general approaches for imputing incomplete data: (a)  joint modeling (JM) \citep{hughes2014joint}  and (b) fully conditional specification (FCS) or multiple imputation using chained equations (MICE)  \citep{van2006fully, van2011mice}. In JM a multivariate distribution of the missing data is sampled using Markov chain Monte Carlo (MCMC) \citep{gilks_markov_1995}. In cases where this multivariate distribution is suitable for the data, this method is appealing. FCS employs a set of conditional densities, one for each partially observed variable, and performs imputation in a variable-by-variable manner. This is done by starting from an initial imputation and then imputing by iterating a few times (usually 10-20) over the conditional densities. 

\subsection{Extending CountsDiff to continuous data}
\label{appendix:extension_to_continuous}
While our work focuses on modeling count data (which allows us to preserve the exact NLL), the Poisson-based data randomization trick from \citet{chen2023learning} can be combined with CountsDiff via the following procedure:
\begin{enumerate}
    \item Nonnegative inputs $x_0$ are mapped to latent counts via $z_0 \sim \mathrm{Poisson}(\lambda x_0)$, $\lambda \geq 1$.
    \item CountsDiff is applied directly to model the distribution of $z_0$
    \item Generated samples are divided by $\lambda$ at inference time. \citet{chen2023learning} show that the original distribution of $x_0$ is recovered as $\lambda \rightarrow \infty$.
\end{enumerate}
This simple procedure would extend the benefits of CountsDiff (guidance, schedule design, loss weighting, and attrition) to JUMP and therefore provide a principled way to model continuous, non-negative domains. The continuous time formulation also in principle unlocks fast ODE/SDE solvers \citep{ren2026fast} for JUMP. We note that this would not be a \textit{strict} generalization of JUMP, as the model would operate directly in the latent counts space, as opposed to combining the Poisson randomization with binomial thickening/thinning at each forward and reverse step, and the training target would be $z_0 - z_t$ as opposed to $x_0$ in JUMP.

\section{Proofs and derivations}

\subsection{Proof of Proposition \ref{prop:forward-marginals}}
\label{appendix:proof-of-forward}
We restate the proposition here for clarity:

 Given $p:[0,1] \rightarrow[0,1]$ differentiable, monotonically decreasing, and with endpoints $p(0) = 1$, $p(1) = 0$, there exists a CountsDiff forward process with $p$-schedule $p(t)$.
\begin{proof}
   Fix $x_0\in\mathbb{N}_0$ and consider $x_0$ independent, two-state (0/1) time-inhomogeneous Markov processes
\[
\ry^{(m)}_t\in\{0,1\},\qquad m=1,\dots,x_0,
\]
each with transition rates $\mR^{(\text{fw})}(t) =\begin{bmatrix}0&0\\ \mu(t)&-\mu(t)\end{bmatrix}$. Let $\rx_t=\sum_{m=1}^{x_0} \ry^{(m)}_t$ be the number of ones at time $t$; then $\rx_t$ is governed by the pure-death process in \eqref{eq:forward-rates}.

For a single particle, consider the survival probability $\mathbb P(\ry_t=1\mid \ry_0=1)$. The Kolmogorov forward equation (\ref{kolmogorov-forward}) then yields 
\[
\frac{d}{dt}\mathbb P(\ry_t=1\mid \ry_0=1)=-\mu(t)\mathbb P(\ry_t=1\mid \ry_0=1),\qquad  P(\ry_0=1\mid \ry_0=1)=1,
\]
which is a separable ODE with solution $\mathbb P(\ry_t=1\mid \ry_0=1)=\exp\!\big(-\int_0^t \mu(u)\,du\big)$. Let $\mu(t):=-\frac{p'(t)}{p(t)}$ for $t\in(0,1)$, and $\mu(0)=\mu(1)=0$. This choice is always valid since $p$ is differentiable and non-increasing, and is positive at $t\in(0,1)$. Furthermore, the choice of value of $\mu$ at $t\in\{0,1\}$ does not influence the dynamics of the process, as the Lebesgue integral $\exp\!\big(-\int_0^t \mu(u)\,du\big)$ is invariant to function values on sets of measure zero.
Thus we can by inserting our ansatz derive
\[
\mathbb P(\ry_t=1\mid \ry_0=1)=\exp\!\Big(\int_0^t \frac{p'(u)}{p(u)}\,du\Big)=\exp\!\Big(\log p(t)-\log p(0)\Big)=\frac{p(t)}{p(0)}=p(t).
\]
Thus $\ry^{(m)}_t\sim\mathrm{Bernoulli}(p(t))$ i.i.d. across $m$. 

Note that as $t\to1$ implies $p(t)\to0$ and hence $\mu(t)\to+\infty$. However, this divergence is not a problem as $\mu(t)$ only interacts with the process through $\exp\!\big(-\int_0^t \mu(u)\,du\big)$, where the divergence implies a rapid convergence to zero.

Since the sum of i.i.d Bernoulli's is Binomial, we have
\[
\mathbb P(\rx_t=x_t\mid \rx_0=x_0)=\binom{x_0}{x_t}p(t)^{x_t}\,(1-p(t))^{x_0-x_t},
\]
which is \eqref{eq:marginals}.

For $0\le s<t\le 1$, we have by Bayes theorem
\[
\mathbb P(\ry_t=1\mid \ry_s=1)=\frac{\mathbb P(\ry_s=1\mid \ry_t=1)\mathbb P(\ry_t=1\mid  \ry_0=1)}{\mathbb P(\ry_s=1\mid \ry_0=1)}=\frac{1\cdot p(t)}{p(s)}=\frac{p(t)}{p(s)},
\]
 where $\mathbb P(\ry_s=1\mid \ry_t=1)=1$ because a pure-death process alive at $t$ has to have been alive at $s$. Conditioning on $X_s=x_s$, the $x_s$ survivors evolve independently, so
\[
X_t\mid X_s=x_s \sim \mathrm{Bin}\!\left(x_s,\frac{p(t)}{p(s)}\right),
\]
which gives the binomial conditionals in \eqref{eq:forward-conditionals}.
\end{proof}

\subsection{Reparameterization of Blackout diffusion time schedule as a $p$-schedule}\label{appendix: converting blackout noise schedule}

\citet{santos2023blackout} work with a pure death-noising process with constant individual death rate $\mu\equiv1$. As such, in order to adjust their corruption process for a constant decay in Fisher Information (FI), they define the following time schedule:
\begin{equation}
    t_k = -\log \left[ \sigma \left(\mathrm{Logit}(1 - \mathrm{e}^{-t_T}) + \frac{k-1}{T-1}[\mathrm{Logit}(\mathrm{e}^{-t_T}) - \mathrm{Logit}(1 - \mathrm{e}^{-t_T})]\right)\right], ~~k=1, \dots T.
\end{equation}
However, note that the $\log$ term undoes the $\mathrm{exp}$ in their $p$-schedule, so this schedule is effectively a workaround to allow for a non-exponential $p$-schedule 
\[
p_k = \sigma \left(\mathrm{Logit}(1 - \mathrm{e}^{-t_T}) + \frac{k-1}{T-1}[\mathrm{Logit}(\mathrm{e}^{-t_T}) - \mathrm{Logit}(1 - \mathrm{e}^{-t_T})]\right), ~~ k = 1, \dots T.
\]
However, using the time-inhomogeneous pure-death process in equation \ref{eq:forward-rates}, we can bypass the time schedule trick, so that configurability lies directly in $p$ space, which we find to be a more intuitive schedule that better matches existing diffusion literature. For greater consistency with continuous-time diffusion frameworks, we have also rescaled the time steps to be on the closed unit interval. As a concrete example, the $p$-schedule from blackout diffusion becomes
\[
p(t) =\sigma \left(\mathrm{Logit}(1 - p_{min}) + t \cdot [\mathrm{Logit}(p_{min}) - \mathrm{Logit}(1 - p_{min})]\right), ~~ t \in [0,1],
\]
where $p_{min}$ defines the values at the endpoints and is set to $\mathrm{e}^{-15}$. Note that despite the extension of $p(t)$ to a continuous function, sampling $T$ uniform values from $0$ to $1$ exactly recovers the original formulation.

\subsection{Equivalence of $p$-schedule and Gaussian diffusion noise schedule}
\label{appendix:noise_schedule_derivation}


Our $p$-schedule is inspired by the cosine noise schedule in \cite{nichol2021improved}, where their noise schedule takes the following form:
$$
\bar{\alpha}(t) = \cos\left(\frac{t/T +s}{1+s}\frac{\pi}{2}\right)^2.
$$
Taking the most canonical form, with $s = 0$ and $T = 1$, we have
\[
\bar{\alpha}(t) = \cos\left(\frac{t \pi}{2}\right)^2.
\]
To find the CountsDiff analog, we match the signal-to-noise ratio (SNR) of the cosine noise schedule in Gaussian diffusion with the SNR of the pure death process and solve for $p(t)$. In Gaussian diffusion, this takes the form
\[\text{SNR}(t) = \frac{\bar{\alpha}(t)}{1 - \bar{\alpha}(t)} = \frac{\cos^2(\pi t/2)}{\sin^2(\pi t/2)}.\]
In our pure-death process with $p$-schedule $p(t)$, the SNR can be expressed as $\frac{p(t)}{1-p(t)}$. Using the same signal to noise definition $\mathrm{SNR}(t) = \frac{\mathbb{E}[\rx_t]^2}{\text{Var}(\rx_t)}$ as for the gaussian case. Thus for the independent Bernoullis underlying our pure-death process, we have
\[
\text{SNR}(t)=\frac{p(t)^2}{p(t)(1-p(t))}=\frac{p(t)}{1-p(t)}.
\]

Clearly then, $\bar{\alpha}_t$ is analogous to $p(t)$, so choosing $p(t) = \cos\left(\frac{t \pi}{2}\right)^2$ is a sensible choice. A similar exercise can be done for any $\bar{\alpha}$ schedule in Gaussian Diffusion, yielding a CountsDiff equivalent.

\subsection{Deriving Training objective; Proof of proposition \ref{prop:weighting_importance}}\label{appendix: training objective}

\citet{santos2023blackout} derives a continuous time loss function by taking the Kullback-Leibler divergence between Bernoulli distributions corresponding to instantaneous transitions of the ground truth reverse process $\mR^{(\text{rev})}_{i, i+1} \Delta t$, and the reverse process induced by the model predictions $\kappa_\theta(t)\Delta t$, at time $t$, where $\Delta t$ is an infinitesimal time differential. This corresponds to the negative log-likelihood and in our notation takes the form
\[\text{NLL}(t)=\mR^{(\text{rev})}_{x_t, x_t+1} \Delta t\log{\frac{\mR^{(\text{rev})}_{x_t, x_t+1}\Delta t}{\kappa_\theta(t)\Delta t}}+(1-\mR^{(\text{rev})}_{x_t, x_t+1} \Delta t)\log{\frac{1-\mR^{(\text{rev})}_{x_t, x_t+1}\Delta t}{1-\kappa_\theta(t)\Delta t}}.\]
We simplify by splitting the logarithm of the fraction in the second term, and Taylor expanding, yielding
\[
 -(1-\mR^{(\text{rev})}_{x_t, x_t+1} \Delta t)\log{(1-\kappa_\theta(t)\Delta t)}=\kappa_\theta(t)\Delta t + \mathcal{O}(\Delta t^2).
\]
 Collecting terms that do not depend on our model parameters $\theta$ into the ``constant" $C(t)$, and omitting the higher order terms of $\Delta t$ gives us the representation
\[
\text{NLL}(t)=\Delta t \left(\kappa_\theta(t)-\mR^{(\text{rev})}_{x_t, x_t+1}\log{\kappa_\theta(t)}\right)+ C(t).
\]
For the full negative log-likelihood, we take the integral over all times and get
\[
\int_0^1 \left(\kappa_\theta(t)-\mR^{(\text{rev})}_{x_t, x_t+1}\log{\kappa_\theta(t)}\right)dt + C,
\]
where C is the integral of all the $\theta$-independent terms, which is omitted in the sequel. Now, we can multiply and divide by any probability density function $\phi:[0,1]\to[0,1]$ over $t$ 
\[
\int_0^1 \frac{1}{\phi(t)}\left(\kappa_\theta(t)-\mR^{(\text{rev})}_{x_t, x_t+1}\log{\kappa_\theta(t)}\right) \phi(t)dt,
\]
which allows us to approximate this integral with the usual one-sample Monte Carlo estimate with $t \sim \phi(t)$, resulting in the objective
\[
\frac{1}{\phi(t)}\left(\kappa_\theta(t)-\mR^{(\text{rev})}_{x_t, x_t+1}\log{\kappa_\theta(t)}\right),\quad t\sim \phi(t).
\]

Notice from equation \ref{eq:reverse-rates}, that, given $t$ and $x_t$, only unknown element of the reverse rate is $(x_0 - x_t) =:y_t$. Consequently, we train a neural network $\text{NN}_\theta(x_t,t)$ to output $\hat{y}(t,x_t) \approx y_t$.

Then, we have $\kappa_\theta(t) = \hat{y}(t,x_t) \frac{p'(t)}{1-p(t)}$, which together with equation \ref{eq:reverse-rates} yields the objective 
\[
-\frac{p'(s)}{\phi(t)(1-p(s))}\left(\hat{y}(x_t,t)-(x_0-x_t)\log{\left(-\hat{y}(x_t,x_t)\frac{p'(s)}{1-p(s)}\right)}\right).
\]
Finally, dropping the $\hat{y}$-independent term, we get the objective
\[
w(t)\left(\hat{y}(x_t,t)-(x_0-x_t)\log{\left(\hat{y}(x_t,t)\right)}\right),\quad w(t) = -\frac{p'(s)}{\phi(t)(1-p(s))},
\]

\subsection{NLL for chosen weighting function}
\label{appendix:weighting_importance}
\begin{proposition}
\label{prop:weighting_importance}
Let $p:[0,1]\to(0,1)$ be a continuously differentiable, strictly decreasing $p$-schedule of a CountsDiff forward process $C_p$. Define the training weight
\[
w(t) = -p'(t),
\]
and consider the Countsdiff objective, restated below
\[
\mathcal{L}(\theta)
=
\E_{t\sim\mathrm{Unif}(0,1)}
\big[
w(t)\,\big(\hat{y}_t - y_t \log \hat{y}_t\big)
\big],
\qquad
y_t = x_0 - x_t.
\]
Then the continuous-time negative log-likelihood of the CountsDiff process $C_q$ with schedule
\[
q(t)
=
1 - \exp\!\Big(\,p(1) - p(t)\Big),
\]
 coincides with $\mathcal{L}(\theta)$ up to a $\theta$-independent scaling and additive constant.

Moreover, there exists $F:[0,1]\to[0,1]$ (the CDF of a density $\phi$) such that
\[
p(u) = q\big(F^{-1}(u)\big), \quad \text{for all }u\in[0,1].
\]
Consequently, for a uniform grid $0 = u_0 < \dots < u_K = 1$, running the forward or reverse sampler of $C_p$ at times $\{u_k\}$ is equivalent to running the sampler of $C_q$ at the warped grid $\{t_k\}$ with $t_k = F^{-1}(u_k)$.
\end{proposition}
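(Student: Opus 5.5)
The plan has two parts, following the two claims in Proposition~\ref{prop:weighting_importance}. For the first claim, I will use the continuous-time NLL derived in Appendix~\ref{appendix: training objective}. For a CountsDiff process with schedule $r$, that NLL is, up to a $\theta$-independent constant,
\[
\int_0^1 \frac{-r'(t)}{1-r(t)}\,\E\big[\hat{y}_t - y_t\log\hat{y}_t\big]\,dt ,
\]
which is the objective~\eqref{eq:our_objective} with $\phi\equiv 1$ and weight $-r'/(1-r)$. So the first step is a direct computation with $r=q$. We have $1-q(t)=\exp(p(1)-p(t))$ and $q'(t) = p'(t)\exp(p(1)-p(t))$, hence
\[
\frac{-q'(t)}{1-q(t)} = -p'(t) = w(t).
\]
The NLL of $C_q$ therefore has exactly the weight $w(t)=-p'(t)$ under uniform $t$. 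I will also check that $q$ is a legitimate schedule: it is differentiable and decreasing because $p$ is strictly decreasing, and $q(1)=0$. The endpoint value $q(0)=1-e^{p(1)-p(0)}$ need not equal $1$, so I will remark that this does not matter. The NLL integrand only uses the rates on $(0,1)$, and Proposition~\ref{prop:forward-marginals} goes through with $p(0)$ replaced by $q(0)$, since the survival probability is $q(t)/q(0)$; any resulting normalization is absorbed into the stated scaling.

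The subtle point is that $\mathcal L(\theta)$ draws $x_t\sim\mathrm{Bin}(x_0,p(t))$, whereas the NLL of $C_q$ at time $s$ draws $x_s\sim \mathrm{Bin}(x_0,q(s))$. The integrands therefore agree in weight but not in the noise level at which $\hat y$ is evaluated. This mismatch is why the time warp $F$ is needed, and it is how I intend to connect the two claims. I will define $F := p^{-1}\circ q$, which is possible since $p$ is a strictly decreasing $C^1$ bijection onto its range, and set $\phi=F'$. Then $p(u)=q(F^{-1}(u))$ holds by construction. The substitution $s=F^{-1}(u)$, $ds = du/\phi(s)$, rewrites the $C_q$ NLL as
\[
\int \frac{w_q(s)}{\phi(s)}\,\E_{x\sim\mathrm{Bin}(x_0,p(u))}\big[\hat{y}-y\log\hat{y}\big]\,du ,
\]
which is exactly the importance-sampled form from Appendix~\ref{appendix: training objective}, expressed in $p$-time. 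Comparing it with $\mathcal L(\theta)$ then gives equality up to a positive $\theta$-independent factor and an additive constant. I will also verify $\phi\ge 0$: both $p^{-1}$ and $q$ are decreasing, so their composition is increasing.

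The sampler statement then follows from the conditionals~\eqref{eq:forward-conditionals}. These depend on the times only through the ratios $p(u_{k})/p(u_{k-1}) = q(t_k)/q(t_{k-1})$, and the reverse/attrition step of Proposition~\ref{prop:churn_preserves} likewise depends only on the values $p(u_k)=q(t_k)$. Hence the two discretized chains are identical in law.

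The main obstacle is the range and endpoint bookkeeping needed for $F$ to be a genuine CDF onto $[0,1]$. Since $p$ takes values in $(0,1)$ and $q$ takes values in $[0,1-e^{p(1)-p(0)}]$, the composition $p^{-1}\circ q$ may not be defined on all of $[0,1]$, or may not map onto it. I will first try to handle this by restricting to the common range and renormalizing, so that $F(0)=0$ and $F(1)=1$, and by absorbing the resulting constant Jacobian factor into the ``$\theta$-independent scaling''. If that fails, I will instead reparametrize $q$ affinely so that its range matches that of $p$, and argue that an affine change of $q$ only rescales the weight $-q'/(1-q)$ by a constant, leaving the argument otherwise unchanged.
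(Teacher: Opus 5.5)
Your first step, the identity $-q'(t)/(1-q(t)) = -p'(t)$, is exactly the paper's proof, which adds only the importance-sampling rewrite and the choice $\phi\propto\frac{d}{dt}p^{-1}(q(t))$. You rightly notice that $\mathcal{L}$ draws $x_t\sim\mathrm{Bin}(x_0,p(t))$ while the $C_q$ likelihood draws $x_s\sim\mathrm{Bin}(x_0,q(s))$. The paper's argument ignores this mismatch. Your repair, however, fails at the sentence ``comparing it with $\mathcal{L}(\theta)$ then gives equality up to a positive $\theta$-independent factor.'' Carry out the computation. From $p(F(s)) = q(s)$ you get $\phi(s) = F'(s) = q'(s)/p'(u) = p'(s)e^{p(1)-p(s)}/p'(u)$ with $u=F(s)$, so your warped weight is
\[
\frac{w_q(s)}{\phi(s)} = -p'(u)\,e^{p(s)-p(1)} = \frac{-p'(u)}{1-q(s)} = \frac{-p'(u)}{1-p(u)} .
\]
This is the likelihood weight of $C_p$ itself, not $-p'(u)$, and the ratio $1/(1-p(u))$ is not constant. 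No other warp helps, because the continuous-time NLL is invariant under time reparametrization. In noise-level coordinates $\rho$, every CountsDiff NLL reads $\int \frac{1}{1-\rho}\,\E_{x\sim\mathrm{Bin}(x_0,\rho)}[\hat y - y\log\hat y]\,d\rho$ over the range of its schedule, whereas $\mathcal{L} = \int \E_{x\sim\mathrm{Bin}(x_0,\rho)}[\hat y - y\log\hat y]\,d\rho$. So ``weight $-p'$'' and ``marginals $\mathrm{Bin}(x_0,p(u))$'' cannot hold together for a single CountsDiff likelihood. What is actually provable, and all the paper's proof verifies, is the identity of weight functions under uniform time.

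The endpoint problem is also not mere bookkeeping. Since $1-e^{-x}\le x$, we have $q(0) = 1-e^{-(p(0)-p(1))}\le p(0)-p(1) < p(0)$, and also $q(1)=0<p(1)$. Hence $q$ never attains $p(0)$, and no CDF $F$ with $p(u) = q(F^{-1}(u))$ for all $u\in[0,1]$ exists; the paper's $\phi$ has the same defect. Your fallbacks do not rescue this:
\begin{itemize}
\item Truncating to the common range discards a $\theta$-dependent part of $\mathcal{L}$, which cannot be absorbed into an additive constant.
\item An affine $\tilde q = aq+b$ rescales $-q'/(1-q)$ by a constant only when $1-\tilde q = a(1-q)$. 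Matching both endpoints of $p$ then forces $Ae^{-A} = Be^{-B}$ with $A = 1-p(1) \neq B = 1-p(0)$ in $(0,1)$. This is impossible because $xe^{-x}$ is strictly increasing there.
\item Your remark that $q(0)\neq 1$ is harmless is also wrong. The death process with rate $-q'/q$ started at $x_0$ has marginals $\mathrm{Bin}(x_0,q(t)/q(0))$. Its likelihood weight is $-q'/(q(0)-q)$, not a constant multiple of $-q'/(1-q) = -p'$.
\end{itemize}
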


\begin{proof}

    To prove the statement, we note that
    \[
    \frac{q'(t)}{1-q(t)} = \frac{\exp(p(1) - p(t))p'(t))}{\exp(p(1)-p(t))} = p'(t)
    \]
    Therefore, we have 
    \[
    \E_{t\sim\mathrm{Unif}(0,1)}
\left[
-p'(t)\,\big(\hat{y}_t - y_t \log \hat{y}_t\big)
\right] = \E_{t\sim\mathrm{Unif}(0,1)}
\left[
-\frac{q'(t)}{1-q(t)}\,\big(\hat{y}_t - y_t \log \hat{y}_t\big)
\right],
    \]
and for any valid importance sampling density $\phi$, this is equal to 
\[
\E_{t\sim\phi}
\left[
-\frac{q'(t)}{\phi(t)(1-q(t))}\,\big(\hat{y}_t - y_t \log \hat{y}_t\big)
\right],
    \]
which is the exact negative log-likelihood for $C_q$. 
Then, to show the final part of the proposition it suffices to take $\phi(t) \propto \frac{d}{dt}(p^{-1}(q(t)))$
    
\end{proof}

\subsection{Motivating weighting function}
\label{appendix:weight_schedule_derivation}
Our first method of motivating is from the weighting term in blackout diffusion, $w(t_k) = (t_k - t_{k-1}) \mathrm{e}^{-t_k}$. As we take the limit $t_{k-1} \rightarrow t_k$, we approach our continuous case, and we get the weight $w(t) = |p'(t)dt|$, where $p(t) = \mathrm{e}^{-t}$. 

In our case, we have a uniform time schedule, $dt$ is constant, so we simply reweight by $w(t) = |p'(t)| = \frac{\pi}{2} \sin(\pi t)$. This is an intuitive choice, since time derivative of $p(t)$ can be thought of (informally) as the rate of information decrease, and a steeper decrease corresponds to a more difficult task to undo. Thus, it is sensible to weight by the magnitude $p'(t)$. 

An alternate way to motivate this weighting using the sigmoid weighting function commonly used in Gaussian diffusion \citep{kingma2021variational} . When the training task is $\epsilon$-prediction, the sigmoid weight, defined as a function of the log-SNR $\ell(t)$, is $w(\ell) = \sigma(b -\ell)$, where $\sigma(x) = \frac{1}{1 + \mathrm{e}^{-x}}$ is the sigmoid function and $b$ is a chosen constant. When the training task is $x_0$-prediction (which is equivalent to $\epsilon$-prediction with additional reweighting), the sigmoid weighting $\hat{w}(\ell) = \mathrm{e}^b \sigma(\ell - b)$. 
Since the log SNR takes the form $\ell(t) = \ln (\frac{p(t)}{1-p(t)})$, plugging in $\ell$ and $b=0$ into the two sigmoid weightings above, we get that

\begin{align*}
    w(\ell(t)) = \sigma\left(-\ln\left(\frac{p(t)}{1-p(t)}\right)\right) 
    = \frac{1}{1 + \frac{p(t)}{1-p(t)}} 
    = 1 - p(t),
\end{align*}
and
\begin{align*}
    \hat{w}(\ell(t)) = \sigma\left(\ln\left(\frac{p(t)}{1-p(t)}\right)\right) 
    = \frac{1}{1 + \frac{1 - p(t)}{p(t)}} 
    = p(t).
\end{align*}

Heuristically, $(x_0 - x_t)$ is an interpolation of predicting the initial state $x_0$ and predicting the step-wise additive noise $\epsilon$. In fact, taking a log-space interpolation between $w$ and $\hat{w}$:
\begin{align*}
    w(\ell)^{1/2} \hat{w}(\ell)^{1/2}  &= \sqrt{p(t)(1 - p(t)} \\
    &= \sqrt{\cos^2(\pi t/2) \sin^2 (\pi t/2)} \\
    &= \cos(\pi t/2) \sin (\pi t/2) \\
    &= \frac{1}{2} \sin(\pi t),
\end{align*}
matching our $w(t)$ up to a constant factor of $\pi$.

\subsection{Comparing proposed $p$-schedule and weighting with Blackout Diffusion}
\label{appendix: comparing_blackout_and_cosine}
As was the case with early linear noise schedules in Gaussian Diffusion, the exponential $p$-schedule described in Blackout Diffusion has potentially undesirable properties $0$ and $1$, where $p(t)$ is almost completely flat.
\begin{figure}[!h] 
    \centering
    \includegraphics[width=0.7\linewidth]{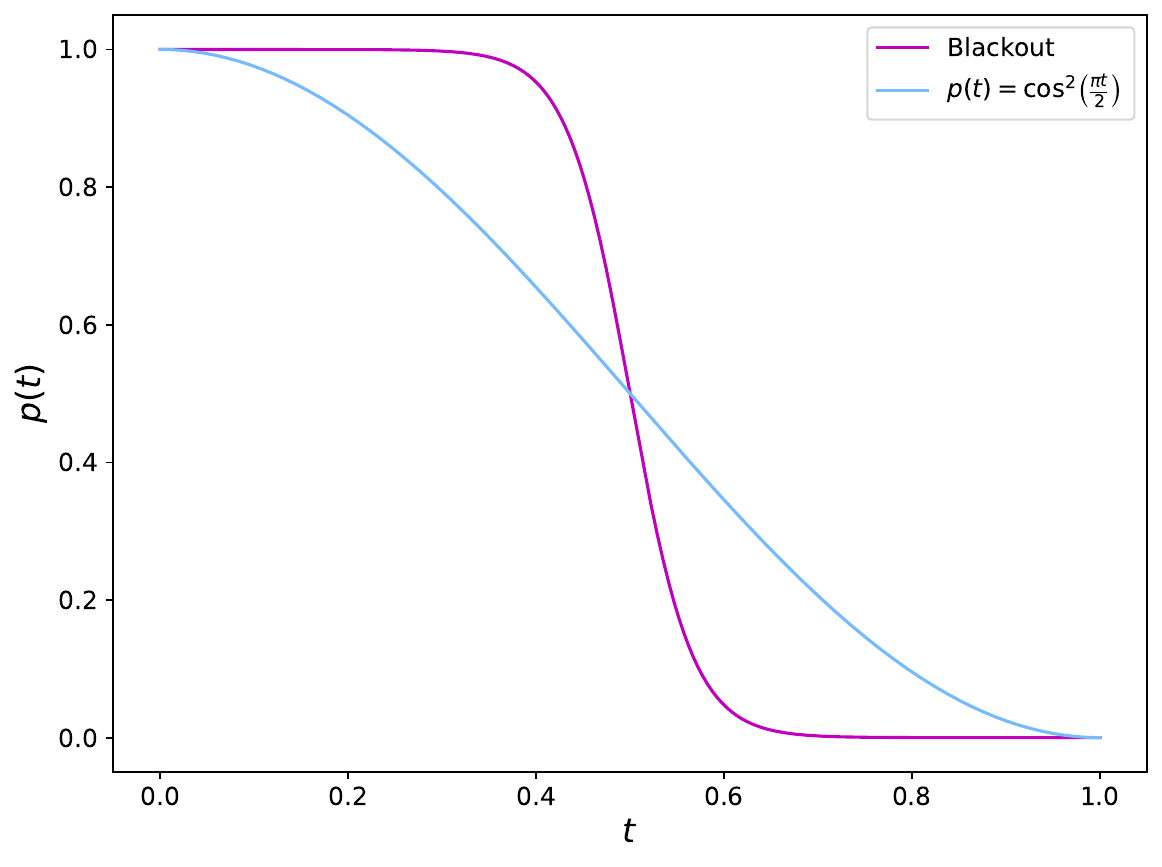}
    \caption{Converted $p$-schedule from Blackout Diffusion (see \ref{appendix: converting blackout noise schedule}) versus cosine $p$-schedule}
    \label{fig:p_sched_comparison}
\end{figure}
The cosine schedule, on the other hand, decreases more gradually (see Figure \ref{fig:p_sched_comparison}). As a result, the corresponding weighting function for Blackout diffusion puts substantially more emphasis on time steps near $0.5$, and close to no emphasis on those near the endpoints, effectively reducing the batch size, while our proposed weighting attributes a non-negligible weight to nearly all time points (Figure \ref{fig:w_sched_comparison}).
\begin{figure}[!h] 
    \centering
    \includegraphics[width=0.7\linewidth]{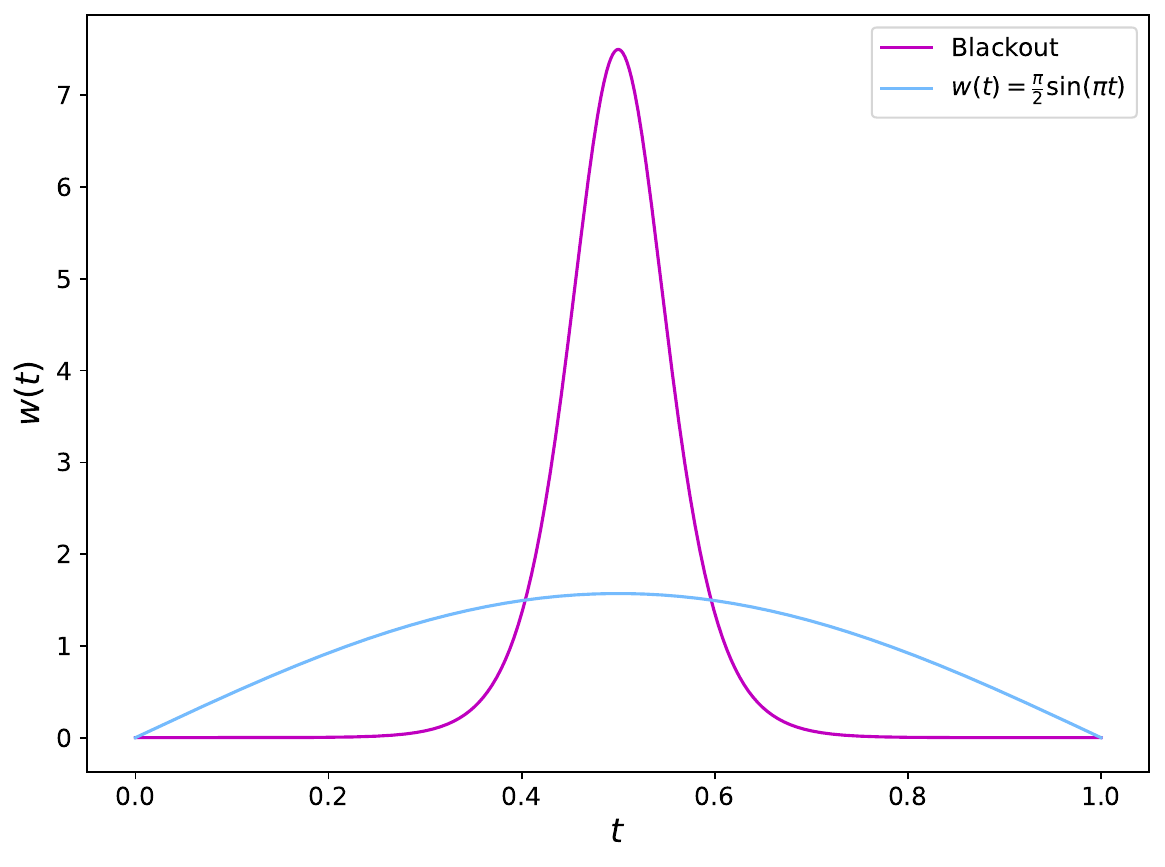}
    \caption{Weights from Blackout Diffusion versus proposed $p$-schedule}
    \label{fig:w_sched_comparison}
\end{figure}
These properties of the $p$-schedules and corresponding weighting schedules explain the improvement in training stability shown in Figure \ref{fig:train_loss_comparison}, and may be a factor in the more stable inception scores of samples generated by CountsDiff when trained with the cosine $p$-schedule.

\subsection{Reverse process derivation}
\label{appendix:reverse_process derivation}
We here construct the form of the rates $\mR^{(\text{rev})}_{i,j}(t)$ for the reverse process. Given the form of the binomial marginals $q(x_t \mid x_0)$ in \eqref{eq:marginals}, we can construct the reverse rate matrix by equating the forward and reverse rates between states $i$ and $i+1$
\[
\mR^{(\text{rev})}_{i, i+1}(s) q(x_s = i| x_0)= \mR^{(\text{fw})}_{i+1, i}(t) q(x_s = i+1 \mid x_0).
\]
We then have the rate of an instantaneous transition from $i$ to $i+1$ as
\begin{align*}
\mR^{(\text{rev})}_{i, i+1}(s)&= \mR^{(\text{fw})}_{i+1, i}(t)\frac{q(x_s = i+1 \mid x_0)}{q(x_s = i \mid x_0)}\\
&=(i+1)\mu(s)\frac{q(x_s = i+1 \mid x_0)}{q(x_s = i \mid x_0)}\\
&=(i+1)\mu(s)\frac{\binom{x_0}{i+1}p(s)^{i+1}(1-p(s))^{x_0-(i+1)})}{\binom{x_0}{i}p(s)^{i}(1-p(s))^{x_0-i})}\\
&=(i+1)\mu(s)\frac{x_0!}{(x_0-(i+1))!(i+1)!}\frac{(x_0-i)!i!}{x_0!}\frac{p(s)}{1-p(s)}\\
&=(i+1)\mu(s)\frac{(x_0-i)}{i+1}\frac{p(s)}{1-p(s)}\\
&=(x_0-i)\mu(s)\frac{p(s)}{1-p(s)}\\
&=-(x_0-i)\frac{p'(s)}{1-p(s)},
\end{align*}
where in the final step we have inserted the explicit solution of $\mu(s)=-\frac{d}{ds}p(s)/p(s)$ expressed as a function of $p(t)$. This is an application of Bayes' theorem, but a more theoretical operator algebra-based treatment yields an analogous result in Appendix A of \citet{santos2023blackout}. Since the reverse process is a pure birth process, the only allowed instantaneous transfers are between $i$ and $i+1$, and $i$ staying in state. Thus $\mR^{(\text{rev})}_{i,i}(s)=-\mR^{(\text{rev})}_{i, i+1}(s)$, $\mR^{(\text{rev})}_{i,j}(s)=0$ otherwise. This yields the full formulation in equation \ref{eq:reverse_process}.

\newpage 

\subsection{Proof of Proposition \ref{prop:churn_preserves}}\label{appendix:churnproof}
We restate the proposition for clarity:

Given $x_t$, a $p$-schedule $p(t)$, and an attrition rate $\sigma_{t,s}\in [0,\sigma_{t,s}^\mathrm{max}]$, where $\sigma_{t,s}^\mathrm{max} \coloneq \mathrm{min}\left(1, \frac{1-p(s)}{p(t)}\right)$, let \[\beta_{t,s} = \frac{p(s) - (1 - \sigma_{t,s}) p(t)}{1 - p(t)}\]
Then the following sampling procedure preserves the marginal distribution of $\rx_s$ according to \eqref{eq:marginals}:
\begin{align*}
   \rx_s = \rn_t + \rb_t,\hspace{1cm} \rn_t \sim \mathrm{Bin}(x_t, 1 - \sigma_{t,s}),  \quad \rb_t \sim \mathrm{Bin}(x_0-x_t, \beta_{t,s}),
\end{align*}

\begin{proof}
In order to prove the proposition, we need to show that if \[q(x_t \mid x_0) = \binom{x_0}{x_t}\, p(t)^{x_t}\,(1-p(t))^{x_0-x_t},\] then we have for the $\rx_s$ as sampled in the proposition statement that \[q(x_s \mid x_0) = \binom{x_0}{x_s}\, p(s)^{x_s}\,(1-p(s))^{x_0-x_s}\]

 As with \ref{appendix:proof-of-forward}, we will model $\rx_t$ and $\rx_s$ as the sum of $x_0$ independent, two-state Markov processes. Then, the sampling procedure proposed in \eqref{eq:attrition_step} is equivalent to 
 \[
\mathbb{P}(\ry_s^{(m)} = 1 | \ry_t^{(m)} = 1) = 1 - \sigma_{t,s}, \qquad \mathbb{P}(\ry_s^{(m)} = 1 | \ry_t^{(m)} = 0) = \beta_{t,s}.
\]

Then, at time $t$, since $\mathbb{P}(\ry_t^{(m)} = 1 | \ry_0 = 1) = p_t$, we have
\begin{align*}
    \mathbb{P}(\ry_s^{(m)} = 1 | \ry_0 = 1) &= (1 - \sigma_{t,s})  p(t) + \beta_{t,s}(1 - p(t)) \\
    &= (1 - \sigma_{t,s})  p(t) + p(s) - (1 - \sigma_{t,s}) p(t)  \\
    &= p(s),
\end{align*}    

where for the second equality we have inserted the form of $\beta_{t,s}=\frac{p(s) - (1 - \sigma_{t,s}) p(t)}{1 - p(t)}$ from the proposition statement. Thus we can conclude that $\rx_s$ has the marginal binomial distribution we set out to prove.

To determine the range of validity for $\sigma_{t,s}$, we test the edge cases
\begin{align*}
    \beta_{t,s}\leq1 &\implies  \sigma_{t,s}\leq \frac{1-p(s)}{p(t)}\\
   \sigma_{t,s}\geq 0\\
\end{align*}
One can easily check that the lower bound on $\beta_{t,s}$ does not impose an additional lower bound on $\sigma_{t,s}$.

Thus with our assumed attrition rate $\sigma_{t,s}\in [0,\sigma_{t,s}^\mathrm{max}]$, where $\sigma_{t,s}^\mathrm{max} \coloneq \mathrm{min}(1, \frac{1-p(s)}{p(t)})$, validity for $\beta_{t,s},\sigma_{t,s}$is guaranteed.
\end{proof}

\newpage

\section{Algorithms} \label{ap: algorithms}
The training algorithm, Algorithm~\ref{alg:train}, largely aligns with that of Blackout Diffusion. The sampling algorithm, including our contributions, is outlined in Algorithm~\ref{alg:generate}.

\begin{algorithm}[H]
    \caption{Training CountsDiff}
    \label{alg:train}
        \begin{algorithmic}
            \WHILE{not converged}
                \STATE Draw $x_0 \sim \mathcal{D}$ from the training set
                \STATE Sample $t \sim \mathrm{Unif}([0,1])$
                \STATE Sample $x_t \sim \mathrm{Bin}(x_0, p(t))$ element-wise
                \STATE Predict: $\hat{y}_t \gets \mathrm{NN}_\theta(x_{t}, t)$      \STATE Compute: $y_t \gets x_0 - x_t$                
                \STATE Compute: $l_\theta \gets \mathcal{L}(\theta; y_t, \hat{y}_t)$ (\ref{eq:our_objective})                
                \STATE Take a gradient step on $\nabla_\theta l$
            \ENDWHILE
        \end{algorithmic}
\end{algorithm}

\begin{algorithm}[H]
    \caption{Generating from CountsDiff }
    \label{alg:generate}
    \begin{algorithmic}
    \STATE \textbf{Input:}  number of timesteps $T$; class $c$, guidance strength $\gamma$, attrition schedule $\sigma$\;
    \STATE \textbf{Output:} $\hat{x}_0$\;
    $x_1 = \Vec{0}$\;
    \FOR{$t_k \in \mathrm{linspace}([1, 0], T)$}
        \STATE $t \gets t_k$ \;
        \STATE $s \gets t_{k-1}$ \;
        \STATE $p(t) \gets \cos^2(\pi t/2)$ \;
        \STATE $p(s) \gets \cos^2(\pi s/2)$ \;
        \STATE $\beta_{t,s} \gets \frac{p(s) - (1 - \sigma_{t,s})\; p(t)}{1 - p(t)}$\;
        \STATE $\hat{y}_{uncond} \gets NN_\theta (x_t, p(t))^+$ \;
        \STATE $\hat{y}_{cond} \gets NN_\theta (x_t, p(t); c)^+$\;
        \STATE $\hat{y} \gets (\hat{y}_{cond}) ^ \gamma(\hat{y}_{uncond})^{(1-\gamma)}$\;
        \STATE $\hat{y}_{clipped} \gets \mathrm{random\_round}(\hat{y})$\ (see \ref{subsection:rounding});
        \STATE Sample $b_t \sim \mathrm{Bin}(\hat{y}_{clipped},  \beta_{t,s})$\;
        \STATE Sample $n_t \sim \mathrm{Bin}(x_t, 1 - \sigma_{t,s})$\;
        $x_s \gets b_t + n_t$ \;
    \ENDFOR
    \end{algorithmic}
\end{algorithm}

\section{Experimental settings} \label{ap: exp settings}

\subsection{Simulated counts settings} \label{ap: simulated counts}

Each of the $d = 10$ dimensions is sampled from a negative binomial distribution with parameters $\mu \sim \text{log-uniform}(0.05, 0.5)$ and $\theta \sim \text{log-uniform}(0.2, 5.0)$, which represent the mean and dispersion of the negative binomial, selected $\log$-uniformly from $[0.05, 0.5]$ and $[0.2, 5.0]$ respectively. Each sample is then multiplied by a size factor $s \sim \text{log-normal}(0, 0.6)$ that breaks the independence between dimensions. The parameters were chosen such that the data was sparse ($> 50\%$ zeros) and the max count was sufficiently large $(\approx 50)$. The Gaussian diffusion algorithm is DDPM with a cosine noise schedule \cite{dhariwal2021diffusion}, trained on log-normalized $(y = \log(1 + x))$ counts. Log-normalization is a common pre-processing technique used for biological counts data before downstream analysis. Since absolute errors in log-space correspond to relative errors in count space, this normalization makes the MSE loss in Gaussian diffusion more sensible for the task at hand, where relative errors are more interesting (predicting 99 instead of 100 should not be penalized as much as predicting 1 instead of 2).The discrete diffusion algorithm is taken from \citet{sahoo2025diffusion} with the linear mutual information interpolating schedule recommended in \citet{austin2021structured}. CountsDiff uses zero death-rate sampling, no guidance, and the continuous, time-inhomogeneous parametrization of the Blackout Diffusion noise schedule.

All three methods were trained with a  1-layer multilayer perceptron. Gaussian Diffusion and CountsDiff have $48$-dimensional hidden layers, and masked diffusion has a $4$-dimensional hidden layer to approximately match the total model weights of the other two models, since the output dimension of masked diffusion is the number of classes, which is $\mathrm{max}(X) + 1 \approx 50$.

All models were trained until convergence, at approximately $4000$ gradient steps, $T = 200$ steps were used at sample time. $4000$ samples were generated from each model and the joint MMD and SWD and marginal MMD and Wasserstein distance to $4000$ samples generated from the ground truth distribution is computed. MMD was computed with the Radial Basis Function (RBF) \citep{buhmann2000radial} kernel with parameter $\gamma = 1$.

 \subsection{Natural Images}
 \label{natural-image-training}
 Because Blackout Diffusion did not release model weights, we elected to re-implement their method using the Unet2D package from Diffusers Huggingface library. The model and training hyperparameters were matched as closely as possible to the default CIFAR-10 hyperparameters in \cite{songscore}, which Blackout Diffusion uses as its base model. We used most of the same hyperparameters for CelebA, except we were forced to reduce the batch size to $100$, consistent with \cite{santos2023blackout} due to memory constraints. We also slightly adjusted the U-Net architecture so that attention is performed at the $16 \times 16$ resolution. 

 Consistent with Blackout Diffusion,  CIFAR-10 models were trained until evaluation metrics stopped improving, or 1 million steps, whichever came first. Similar to Blackout Diffusion, unconditional models stopped improving after approximately 300k gradient steps, while conditional models continued to improve until nearly 1 million steps. 

 We train CelebA for 1.3 million gradient steps, as is implemented in Blackout diffusion, though we expect conditional models to benefit from additional training.

 See code for exact training configs.

\subsection{scRNA-seq Imputation}
\subsubsection{scRNA-seq data preprocessing}\label{ap: preprocessing}
Given the sparsity and dimensionality of scRNA-seq data, we first filter out genes that are rarely expressed across cells. Then, we select the top 1000 (fetus) and 500 (heart) genes, sorted by coefficient of variance. We follow Algorithm 1 from \citep{zhang2024scidpms} to identify missingness sites for each sample. We used an 80/10/10 train/validation/test split. CountsDiff and all baselines that require training were trained only on the training set. Any methods that required hyperparameter tuning were tuned on the validation set, and all evaluations were done on the test set.

\subsubsection{scRNA-seq imputation baselines} \label{ap: baselines}


To capture a diversity of the existing scRNA-seq imputation methodologies, we compare CountsDiff to MAGIC \citep{van2018recovering}, GAIN \citep{yoon2018gain}, HI-VAE \citep{nazabal2020handling}, scIDPMs \citep{zhang2024scidpms}, ForestDiffusion \citep{jolicoeur2024generating}, and ReMDM \citep{wang2026remasking}, along with naive baselines of zero-imputation, mean
imputation, and conditional mean imputation (conditioned on sample covariates). All code implementations for scRNA-seq baselines were trained on the same data splits as CountsDiff. All hyperparameter tuning is on the same validation set as CountsDiff. Default training parameters for models were used where possible, though in some cases slight modifications were made to prevent catastrophic failure.  

GAIN is an adaptation of GANs for imputation that trains an imputation generator and a discriminator to discriminate imputed versus real points. We adapt training loss to permit fully masked samples. We train for 10,000 iterations, consistent with the default in the original manuscript. At test time, we provide the entire masked-out test set (and corresponding missingness mask) in one shot, enabling GAIN to run normalization over the entire dataset during imputation. Hint matrices were constructed as in the original implementation, with a probability $h=0.9$ of providing a hint at each locus in the mask. We remove the final sigmoid activation, since otherwise the heavy sparsity of the data results in saturation of gradients and GAIN performing zero-imputation. Instead, we clamp values at 0 at inference time to ensure no negative counts are outputted. We also provide a PyTorch re-implementation of GAIN for compatibility with the rest of the libary, which we use for the final reported results. 

HI-VAE is a VAE-based method that factorizes the likelihood for imputation, using poisson likelihoods for counts and gaussian likelihood for continuous data. Since HI-VAE only accepts $\mathbb{N}$ for count-based imputation, we alter the model to run on zero counts by incrementing data by one prior to training/imputation and decrementing afterwards. While this enables HI-VAE to model zeros in the data, it slightly alters the count distribution compared to an alternative model with explicit modeling of zeros. We find that both training and validation losses of HI-VAE converge well within 100 epochs, and as such we train for only 100 epochs (compared to 2000 in the original paper). We note that both of our datasets are 1-2 orders of magnitude larger than those in the original paper, so a comparable number of training iterations area performed. We provide a PyTorch re-implementation of HI-VAE as well.

MAGIC is a graph-based data-diffusion method for data denoising, but is commonly applied to scRNAseq imputation. It is highly hyperparameter dependent, so we ran a hyperparameter sweep on a validation set unseen by CountsDiff to pick the optimal K-nearest Neighbor (KNN) parameter, optimizing over scFID values. At evaluation time, we provide the MAGIC with the entire train and test set to construct the neighbor graph and run our evaluation on the imputed sites for the test set cells only. 

scIDPMs is a gaussian diffusion-based model designd for imputation with an imputation-specific training task: models are conditioned on the observed samples and trained to conditionally generate the missing values. We train scIDPMs for 150 hours, which reaches the 65 and 120th epoch for fetus and heart data accordingly out of the default 500 epochs due to time and GPU constraints; this is already an order of magnitude longer than any of the other baselines. Since scIDPMs is a generative model, we consider both single imputation and multiple (5) imputation. While the original scIDPM implementation used multiple imputation, taking the median value item-wise over 100 imputations, this is also prohibitively expensive.
We modify this behavior for a fair comparison between generative models in single imputation (CountsDiff, ForestDiffusion, and ReMDM). 

Forest Diffusion is an adaptation of gaussian diffusion models for the generation of tabular data, trained using gradient-boosted trees, which is found to improve performance relative to comparable gradient-based approaches. However, lack of GPU acceleration means it takes several days to train and perform imputations, so we report only single imputation.

Remasking Diffusion Models (ReMDM) is a state-of-the-art masked diffusion model, trained with cross-entropy loss with implementing remasking to avoid the failure-to-remask limitation of masked diffusion models. We train ReMDM for 100k steps, the same as CountsDiff, and just as with CountsDiff, we hyperparameter sweep on number of sampling steps, remasking rate, and guidance scale.

\subsubsection{Discussion of scRNA-seq imputation metrics} \label{appendix:metrics_discussion}
We note that evaluating scRNA-Seq imputation methods is an open problem beyond the scope of this work, and individual metrics often do not tell the full story. To address this, we have included a breadth of metrics, both pointwise and distributional, that quantify different aspects of sample quality. Spearman correlation is a common metric for evaluating imputation that measures the preservation of the relative ordering of the genes sorted by expression. This metric is particularly relevant in downstream tasks where identifying the most highly differentially expressed genes, and not the degree of differential expression, is relevant. RMSE is an error metric that is sensitive to outliers and therefore tends to be higher for models that are more likely to predict unrealistic outliers. Moreover, RMSE is minimized by the true conditional mean of a distribution, and thus the conditional sample mean used in our experiments is expected to perform well on this metric. Bias is the mean difference between imputed values and real values, and is important to consider in scenarios where data is missing not at random. For example, a model that is negatively biased might show deceptively high performance when low counts are more probable to be missing than high counts.

While it is infeasible to encapsulate the high-dimensional nature of scRNA-seq data in a single number, we evaluate the models on a suite of distributional metrics. Energy Distance (ED), Maximum Mean Discrepancy (MMD) \citep{gretton2012kernel}, Sliced Wasserstein Distance (SWD) \cite{bonneel2015sliced} are generic distributional metrics chosen to measure whether the empirical distribution of an imputation method's samples resembles the true empirical distribution. To complement these general measures, we utilize scFID \citep{rizvi2025scaling}, a domain-specific adaptation of FID \citep{heusel2017gans} for single-cell data. Note that scFID is dependent on underlying pre-trained embeddings used, for our choice of implementation see \ref{ap: scfid}.


\subsubsection{Implementation of scRNA-seq metrics} \label{ap: scfid}
scRNA-seq transcriptome embeddings were obtained using a pre-trained \textit{Homo sapiens} SCVI model \citep{lopez2018deep} (version \textbf{2024-02-12}) from the CZI CELLxGENE Discover platform. SCVI was chosen as the embedding model because it takes raw count data as input. This version was trained on the CELLxGENE Human Census data (release \textrm{2023-12-15}) and implemented using the scvi-tools package \citep{gayoso2022python}. To score imputations, the scFID score is calculated between the full ground truth expression profiles and the full expression profiles, with the targets replaced by the model predictions.

ED is computed on the flattened 1 dimensional pool of all imputed target values using energy distance function in scipy. MMD is computed on the full cell by gene matrix with an RBF kernel across 5 bandwidth scales, $(0.24 \times, 0.5\times, 1\times, 2\times, 4\times)$ the heuristic bandwidth proposed by \citet{gretton2012kernel}. SWD is also computed on the full cell by gene matrix by projecting both the real and imputed matrices onto 1000 random directions, sorts the projections, and computes the RMSE of sorted differences. This is an approximation of the Wasserstein-2 distance in this joint gene space.
\newpage
\section{Additional experiments}

\subsection{Simulated Data}\label{appendix:simulated_experiments}

\subsubsection{Remaining dimensions for simulated experiments}
We consistently observe that CountsDiff and masked diffusion are comparable in joint MMD and marginal metrics, but CountsDiff consistently outperforms masked diffusion on joint-SWD. See Figure \ref{fig:remaining_dims_hist}. CountsDiff also consistently maintains variance closer to (albeit lower than) the real data, whereas Gaussian diffusion collapses, and masked diffusion has much higher variance, indicating the presence of excessive outliers. See Table \ref{tab:variance-comparison}.
\vspace{1cm}
\begin{table}[h]
    \centering
    \caption{Variance of generated samples per dimension. CountsDiff (Ours) consistently maintains variance closer to the Real Data, whereas Gaussian Diffusion collapses (near-zero variance) and masked diffusion suffers from extreme outliers (high variance).}
    \label{tab:variance-comparison}
    \begin{tabular}{lcccc}
        \toprule
        & \textbf{Real Data} & \textbf{CountsDiff} & \textbf{Gaussian} & \textbf{Masked} \\
        \textbf{Dimension} & (Ground Truth) & (\textbf{Ours}) &  &  \\
        \midrule
        Dim 0 & 0.78 & \textbf{0.55} & 0.19 & 3.06 \\
        Dim 1 & 4.71 & \textbf{1.99} & 0.46 & 9.22 \\
        Dim 2 & 0.10 & \textbf{0.07} & 0.01 & 1.89 \\
        Dim 3 & 0.12 & \textbf{0.08} & 0.02 & 2.49 \\
        Dim 4 & 0.28 & \textbf{0.21} & 0.05 & 7.27 \\
        Dim 5 & 1.97 & \textbf{1.10} & 0.24 & 4.78 \\
        Dim 6 & 0.84 & \textbf{0.44} & 0.24 & 5.19 \\
        Dim 7 & 16.09 & 4.50 & 1.04 & \textbf{19.83} \\
        Dim 8 & 0.62 & \textbf{0.46} & 0.17 & 11.63 \\
        Dim 9 & 1.77 & \textbf{0.96} & 0.22 & 4.09 \\
        \bottomrule
    \end{tabular}
\end{table}

\newpage

\begin{figure}[H]
    \centering
    \begin{subfigure}{1.0\linewidth}
        \centering
        \includegraphics[width=\linewidth, height=5.6cm]{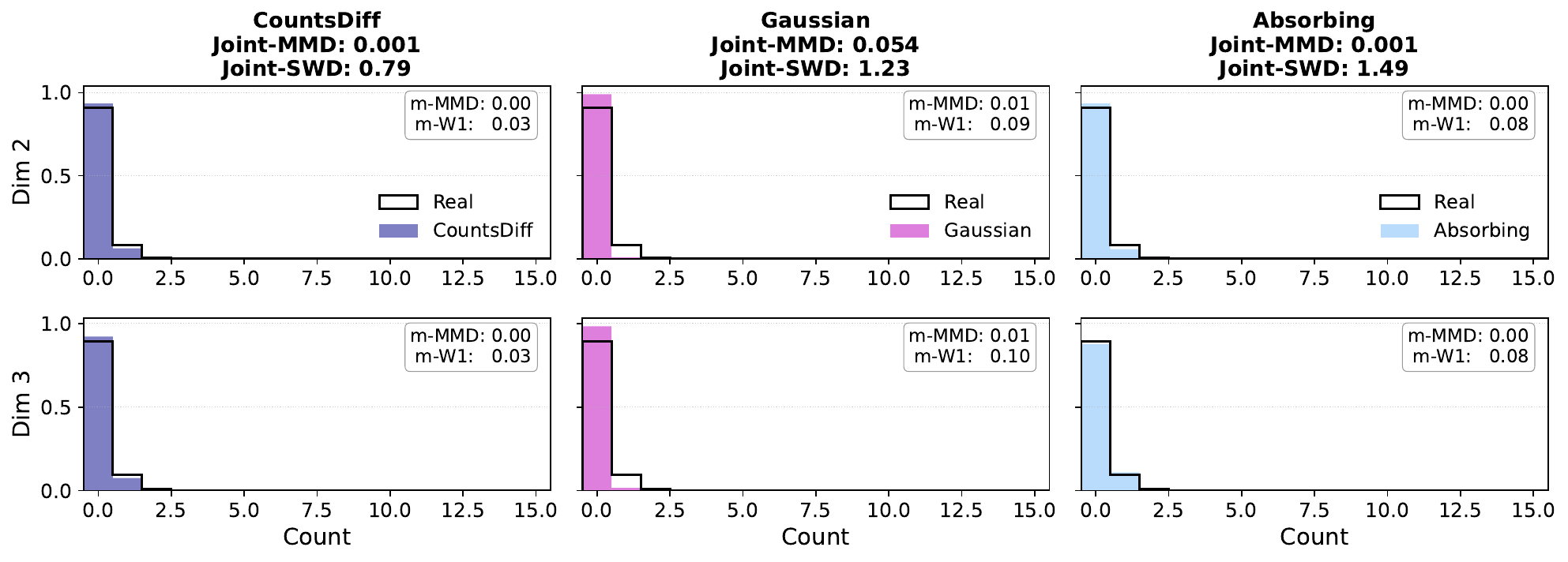}
    \end{subfigure}
    \begin{subfigure}{1.0\linewidth}
        \centering
        \includegraphics[width=\linewidth,height=5.6cm]{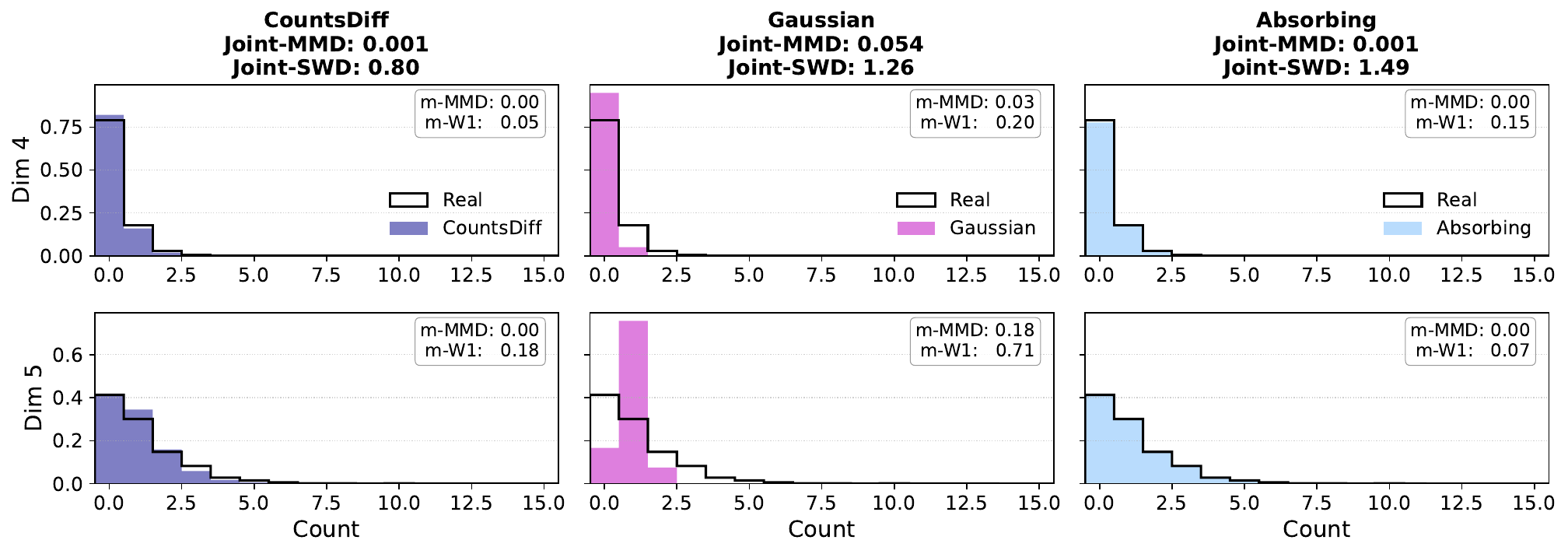}
    \end{subfigure}
    \begin{subfigure}{1.0\linewidth}
        \centering
        \includegraphics[width=\linewidth,height=5.6cm]{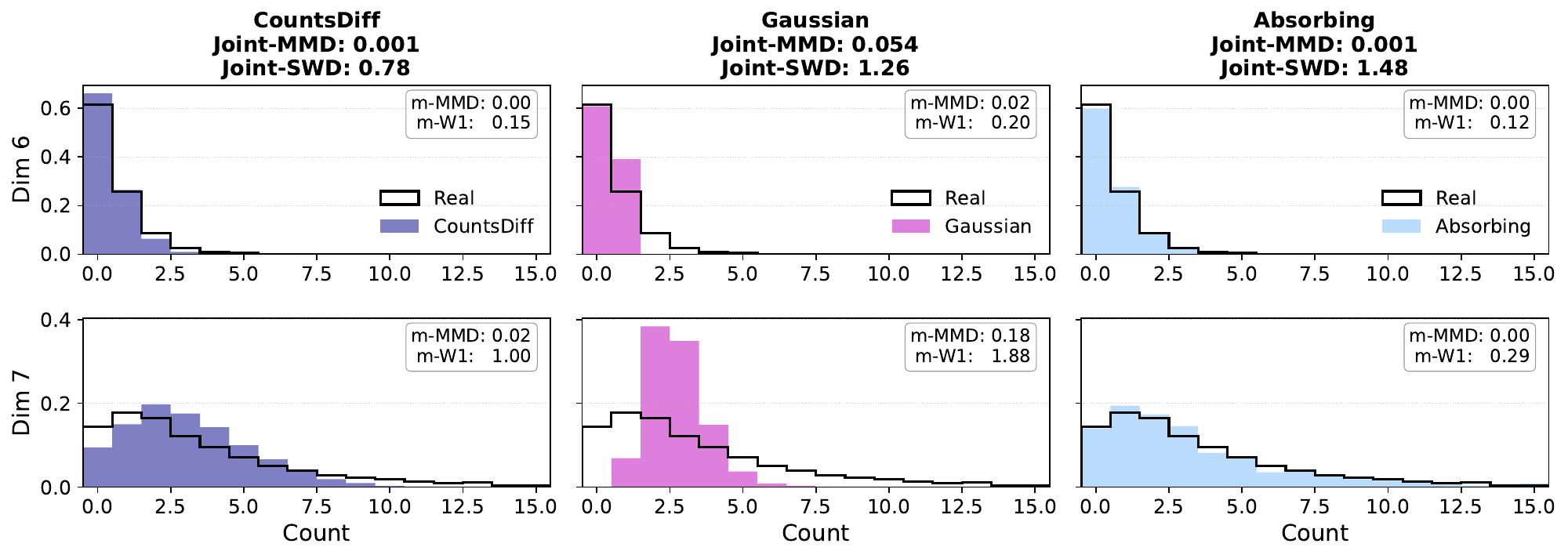}
    \end{subfigure}
    \begin{subfigure}{1.0\linewidth}
        \centering
        \includegraphics[width=\linewidth,height=5.6cm]{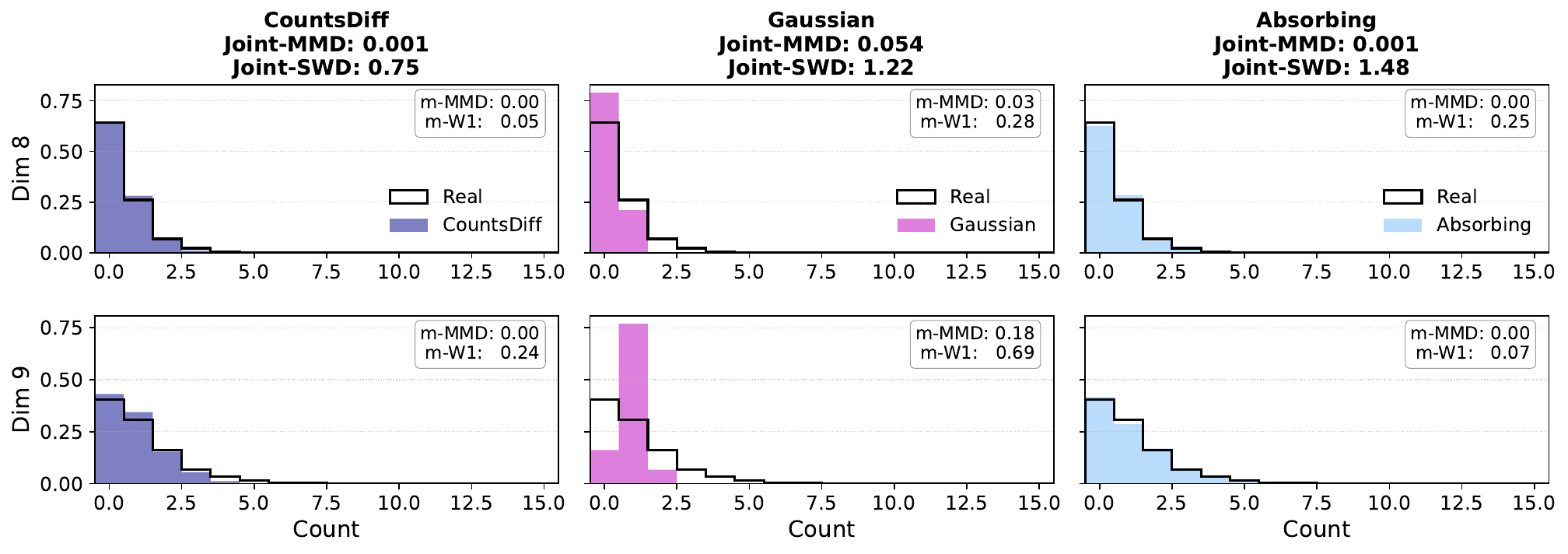}
    \end{subfigure}
    \caption{Histograms of marginals of dimensions 2-9}
    \label{fig:remaining_dims_hist}
\end{figure}

\subsubsection{Plots of joint distributions of toy data}\label{appendix:joint_dists}

\begin{figure}[H]
    \centering
    \begin{subfigure}[b]{0.40\linewidth} 
        \centering
        \includegraphics[width=\linewidth]{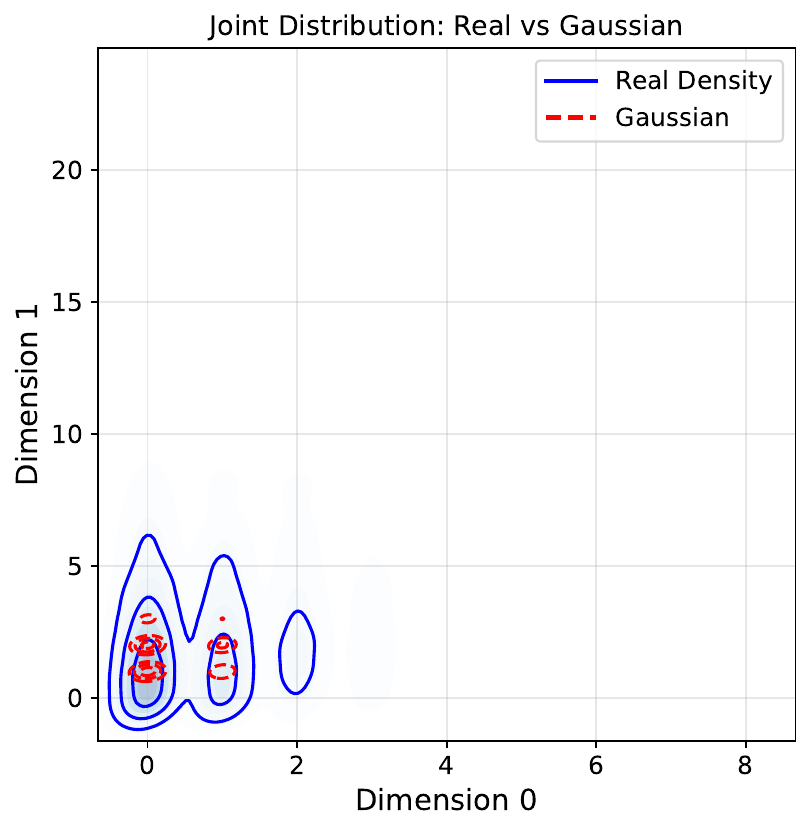}
        \caption{Gaussian Diffusion} 
        \label{subfig:gaussian_kde}
    \end{subfigure}
    \hfill 
    \begin{subfigure}[b]{0.40\linewidth} 
        \centering
        \includegraphics[width=\linewidth]{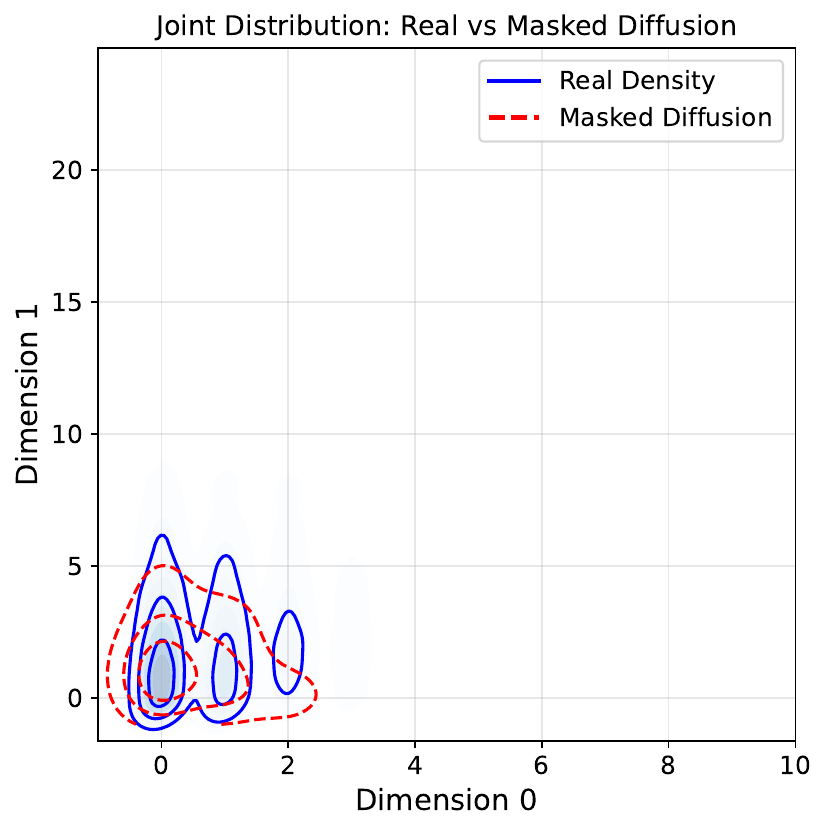}
        \caption{Masked Diffusion} 
        \label{subfig:masked_kde}
    \end{subfigure}
    
    \vspace{1em} 
    
    \begin{subfigure}[b]{0.40\linewidth} 
        \centering
        \includegraphics[width=\linewidth]{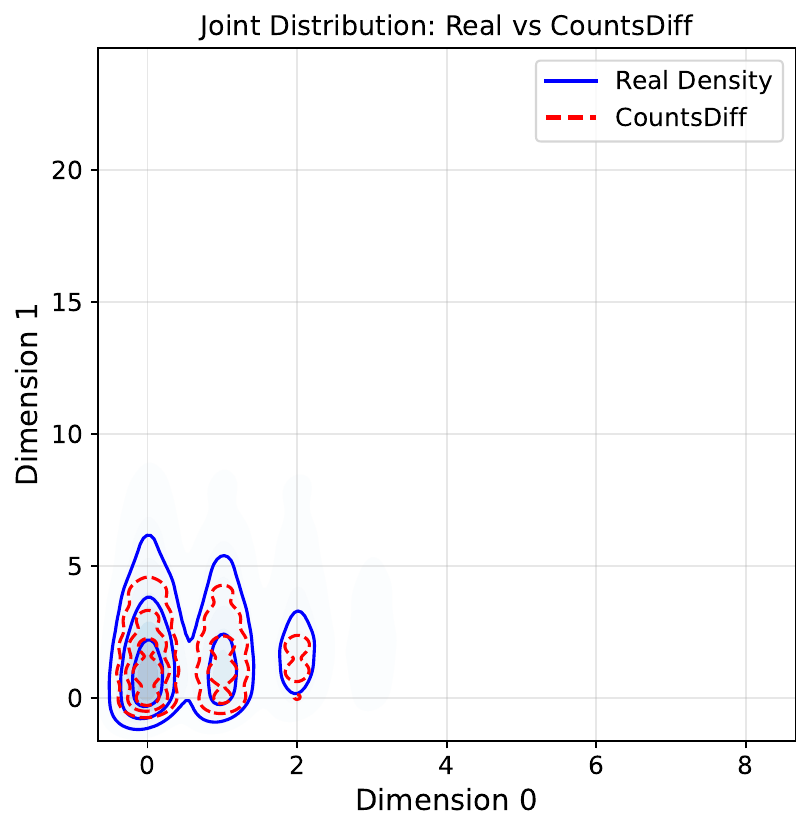}
        \caption{CountsDiff (Ours)} 
        \label{subfig:countsdiff_kde}
    \end{subfigure}
    
    \caption{Joint Kernel Density Estimate (KDE) plots between dimensions 0 and 1 of real data (blue contours) versus model-generated samples (red dashed contours). Gaussian Diffusion suffers from mode collapse, resulting in a much tighter, less diverse distribution. Masked Diffusion exhibits a broader, more diffuse distribution with 'leaked' probability mass and slightly less correlation between the dimensions, indicating outliers and overfitting to the marginals. CountsDiff (Ours) more closely aligns with the true data distribution.}
    \label{fig:kdeplots}
\end{figure}

\subsubsection{Randomized Rounding} \label{ap: random rounding}
\begin{figure}[h] 
    \centering
    \begin{subfigure}{1.0\linewidth}
        \centering
        \includegraphics[width=\linewidth]{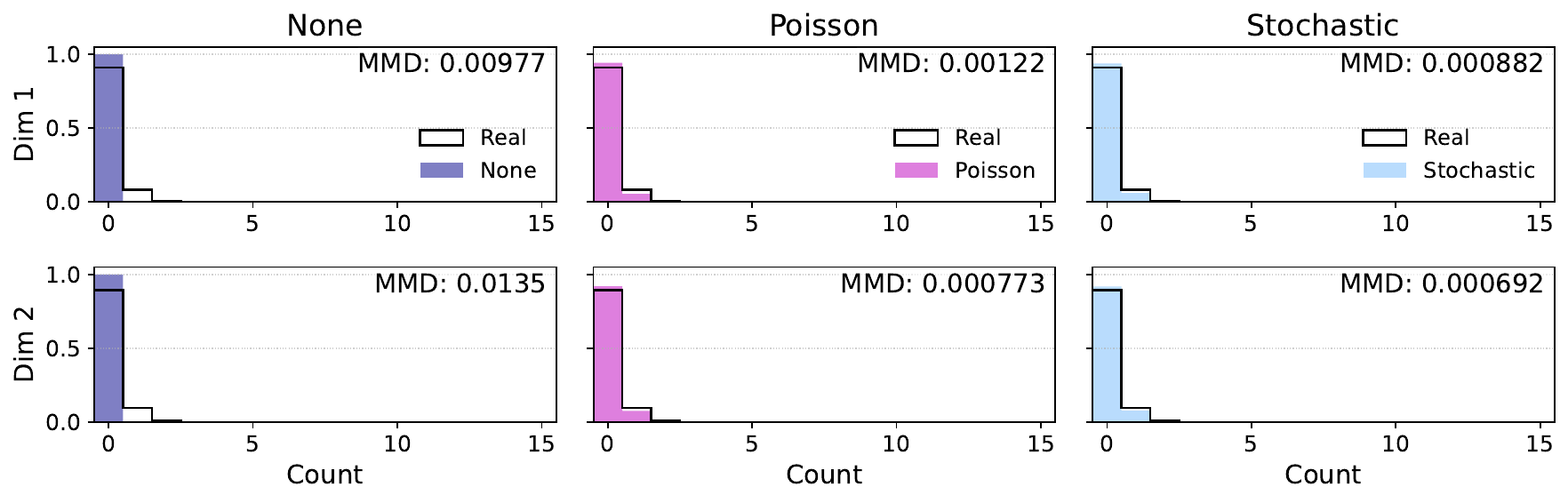}
        \caption{Mostly zero counts}
        \label{fig:samplinglow}
    \end{subfigure}
    \vspace{0.5cm}
    \begin{subfigure}{1.0\linewidth}
        \centering
        \includegraphics[width=\linewidth]{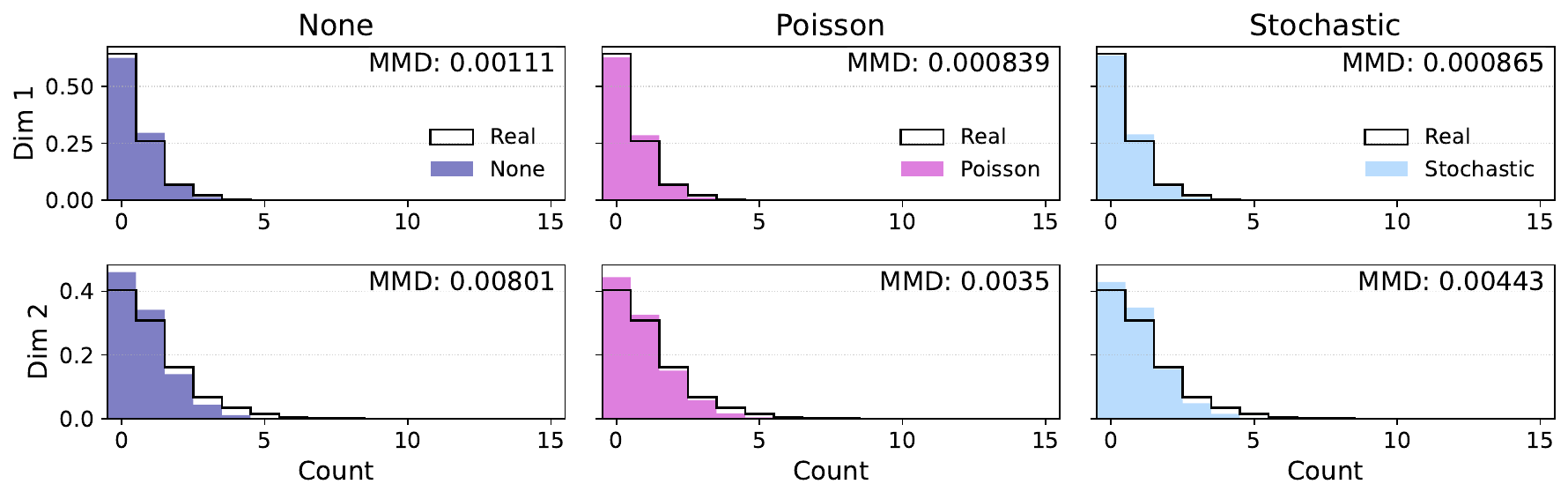}
        \caption{Larger counts}
        \label{fig:samplingnormal}
    \end{subfigure}
    \caption{Comparison of Binomial sampling standard rounding, Poisson with no rounding, and Binomial sampling with stochastic rounding across two different counts regimes.}
    \label{fig:sampling}
\end{figure}

In the low-counts setting (Figure \ref{fig:samplinglow}), we see that the no-rounding method suffers from mode collapse at 0, which fails to preserve the proper marginals. Although both the Poisson approximation and our stochastic random-rounding scheme are empirically effective, in the low-count setting of scRNA-seq data, the Poisson distribution is an unprincipled approximation of a Binomial distribution. 
\newpage

\subsection{CIFAR-10}
\subsubsection{Cosine $p$-schedule stabilizes training}\label{appendix:cosine_schedule stabilizes}
We observe substantially reduced instability in the training curves of models trained with the cosine noise schedule versus the FI continuous noise schedule, though both seem to converge to the same optimum value for CIFAR-10. See Figure \ref{fig:train_loss_comparison}.
\begin{figure}[!h]
    \centering
    \begin{subfigure}[b]{0.5\linewidth}
        \includegraphics[width=\linewidth]{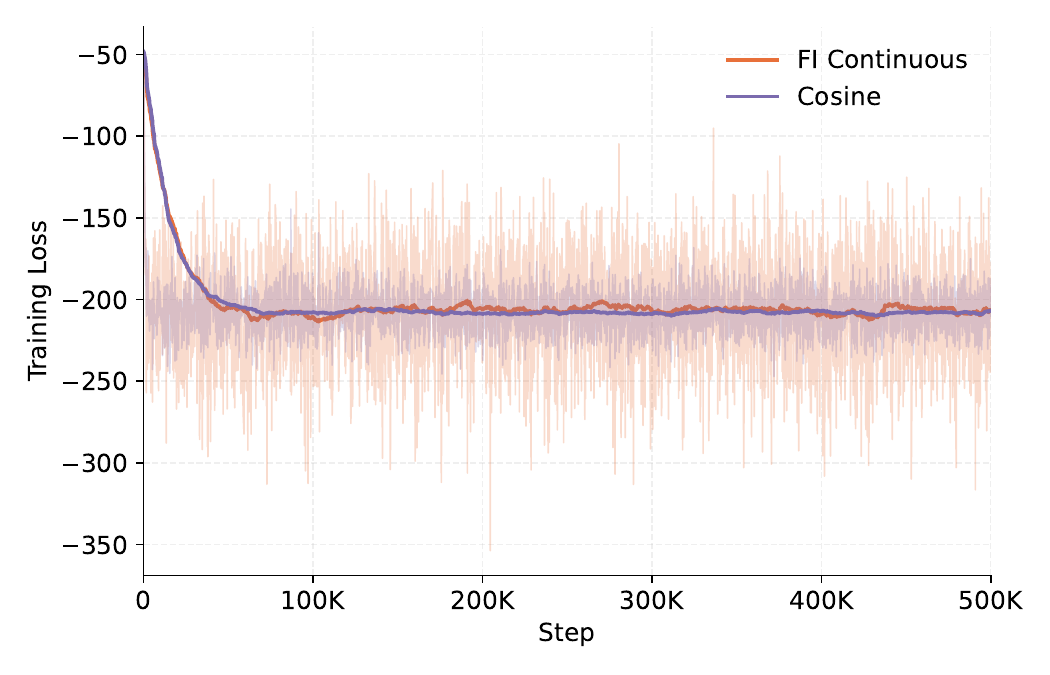}
        \caption{Train loss of CountsDiff over 50k steps for FI continuous and cosine $p$-schedules.}
        \label{fig:train_loss_comparison}
    \end{subfigure}

    \begin{subfigure}[b]{0.5\linewidth}
        \includegraphics[width=\linewidth]{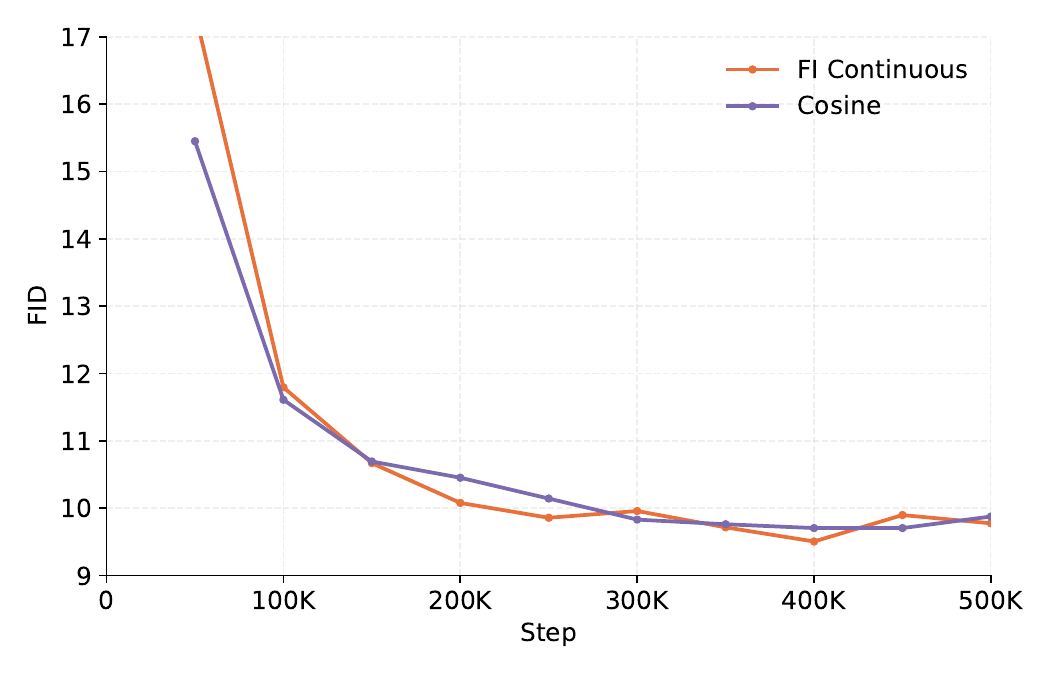}
        \caption{Frechet Inception Distance (FID) of 5000 unconditionally generated samples with no attrition of CountsDiff over 50k steps for FI continuous and cosine $p$-schedules.}
        \label{fig:fid_loss_comparison}
    \end{subfigure}

        \begin{subfigure}[b]{0.5\linewidth}
        \includegraphics[width=\linewidth]{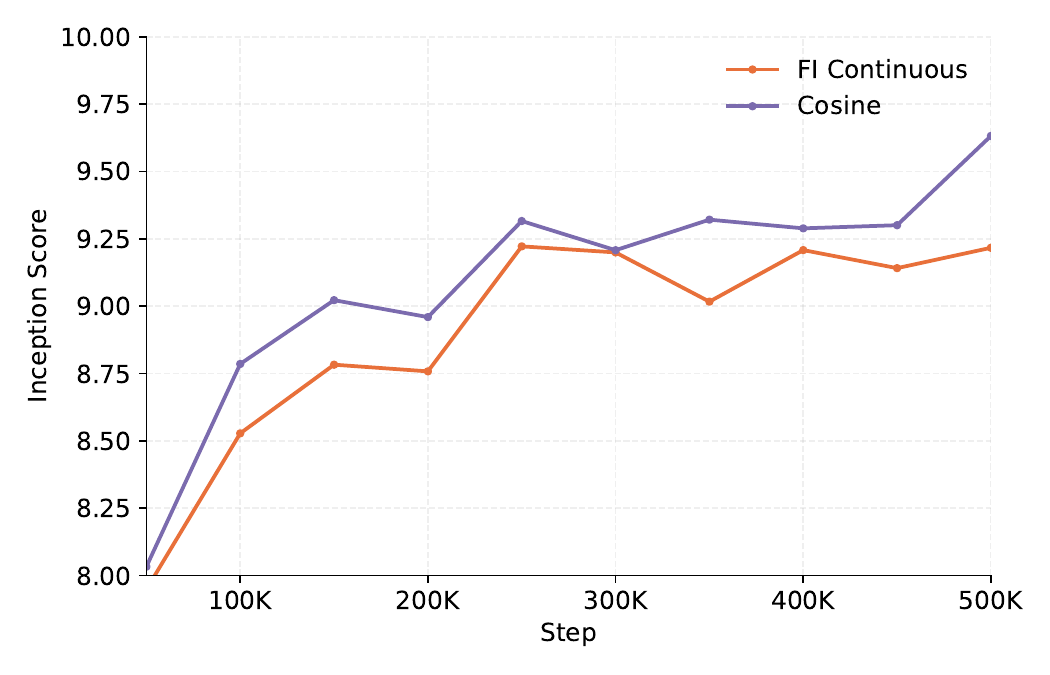}
        \caption{Inception Score (IS) of 5000 unconditionally generated samples with no attrition of CountsDiff over 50k steps for FI continuous and cosine $p$-schedules.}
        \label{fig:is_loss_comparison}
    \end{subfigure}
    \caption{Train losses and validation metrics throughout training}
    \label{fig:losses}
\end{figure}
\subsubsection{CIFAR-10 guidance sweep}
\label{appendix:guidance_sweep}
\begin{table}[!h]
    \centering
    \caption{FID and IS of 5k images sampled with from CountsDiff with the Continuous FI (left) and Cosine (right) $p$-schedules at various guidance scales with $\eta_{\mathrm{rescale}} = 0.01$}
    \label{table:guidance_sweep}
\begin{subtable}{0.45\textwidth}
    \begin{tabular}{lccc}
        \toprule
        \textbf{Guidance Scale} & \textbf{FID} $\downarrow$ & \textbf{IS} $\uparrow$\\
        \midrule
         0.0 & 11.648 & $8.537 \pm 0.391$\\
         0.1 & 11.606 & $8.599 \pm 0.269$\\
         0.2 & 12.078 & $8.636 \pm 0.460$\\
         0.5 & 12.145 & $8.943 \pm 0.439$\\
         1.0 &  9.820 & $9.286 \pm 0.452$\\
         2.0 & 13.296 & $9.467 \pm 0.342$\\
         3.0 & 17.620 & $9.337 \pm 0.394$\\
        \bottomrule
    \end{tabular}
\end{subtable}
    \hfill
\begin{subtable}{0.45\textwidth}
    \begin{tabular}{lccc}
        \toprule
        \textbf{Guidance Scale} & \textbf{FID} $\downarrow$ & \textbf{IS} $\uparrow$\\
        \midrule
         0.0 & 11.233 & $8.952 \pm 0.271$\\
         0.1 & 11.331 & $8.987 \pm 0.498$\\
         0.2 & 11.933 & $8.741 \pm 0.396$\\
         0.5 & 11.985 & $9.001 \pm 0.412$\\
         1.0 &  9.507 & $9.561 \pm 0.368$\\
         2.0 & 14.154 & $9.498 \pm 0.370$\\
         3.0 & 18.542 & $9.641 \pm 0.293$\\
         5.0 & 24.063 & $9.493 \pm 0.167$\\
        \bottomrule
    \end{tabular}
\end{subtable}
    \label{table:fid-is-guided}
\end{table}
See Table \ref{table:guidance_sweep}. We observe that at larger guidance scales, increasing guidance improves the IS at the expense of FID, in line with the effect of guidance in other diffusion frameworks. We also find that the cosine $p$-schedule is more responsive to guidance: both metrics vary more in the model trained with the cosine schedule. The cosine schedule also seems to be more stable at moderate guidance scales, while the FI schedule is more stable at extreme guidance scales.

We observe that both the FID and the IS suffer from small guidance scales, indicating poor unconditional generation compared to the unconditional models. We believe this may be caused by training for the same number of iterations as the unconditional models with a$\mathrm{p\_uncond} = 0.2$, so the model has effectively one-fourth as many train steps in the unconditional case as the conditional case.

\subsubsection{CIFAR-10 attrition sweep}
\label{appendix:attrition_sweep}
\begin{table}[!h]
    \centering
    \caption{FID and IS of 5k images sampled with from CountsDiff various attrition rate strategies}
\begin{tabular}{lccc}
    \toprule
    \textbf{Attrition Rate Strategy} & \textbf{FID} $\downarrow$ & \textbf{IS} $\uparrow$\\
    \midrule
     None & $9.666$ & $9.463 \pm 0.402$\\
     $\eta_{rescale} = 0.005$ & $9.562$ & $9.504 \pm 0.382$\\
     $\eta_{rescale} = 0.01$ & $9.507$ & $9.561 \pm 0.368$\\
     $\eta_{rescale} = 0.02$ & $10.400$ & $9.653 \pm 0.269$\\
     $\eta_{rescale} = 0.05$ & $12.727$ & $9.565 \pm 0.268$\\
    \bottomrule
\end{tabular}
\label{table:fid-is-attrition}
\end{table}
Empirically, we find that small, nonzero $\eta_{rescale}$ improved evaluation metrics. We emphasize that these metrics are reported on only $5,000$ samples, so the FID is substantially poorer than it would be for a larger number of samples, as seen in Table \ref{table:best-metrics}. 

Although the bounds on the attrition schedule for the CountsDiff sampling process resemble those in ReMDM, the strategies that work for remasking in their framework do not necessarily transfer to attrition schedulers in CountsDiff. We hope that future work will shed light on how best to design attrition rate schedules. A particularly exciting direction is value-dependent attrition rates: since the marginal is valid regardless of the attrition rate, one could conceivably set \textit{different attrition rates} for each position in $d$ if, for example, the value in that particular position is deemed ``unfit".

\subsection{CelebA} \label{ap: CelebA}
\subsubsection{
Quantitative Metrics on CelebA
}

\begin{table}[H]
    \centering
        \caption{FID of 10k images sampled with from $30$M-parameter CountsDiff model trained on CelebA for $500$k steps}
        \begin{tabular}{lllcc}
            \toprule
            \textbf{$p$-schedule} & $\gamma$ & attrition schedule & \textbf{FID} $\downarrow$ \\
            \midrule
             FI Continuous & $1.0$ & $\eta_\text{rescale} =0.01$ & $9.844$\\
             Cosine Continuous & $1.0$ & $\eta_\text{rescale} = 0.01$ & $\mathbf{7.637}$\\
            \bottomrule
        \end{tabular}
    \label{table:best-metrics-celeba1}
\end{table}

\begin{table}[H]
    \centering
        \caption{FID of 50k images sampled with from $60$M-parameter CountsDiff model trained on CelebA for $1$M steps}
        \begin{tabular}{lllcc}
            \toprule
            \textbf{$p$-schedule} & $\gamma$ & attrition schedule & \textbf{FID} $\downarrow$ \\
            \midrule
             Cosine Continuous & $1.5$ & $\eta_\text{rescale} = 0.005$ & $4.948$\\
            \bottomrule
        \end{tabular}
    \label{table:best-metrics-celeba2}
\end{table}

\begin{table}[!h]
    \centering
        \caption{FID of 5k images sampled with from $60$M-parameter CountsDiff model trained on CelebA for $1$M steps}
        \begin{tabular}{lllcc}
            \toprule
            \textbf{$p$-schedule} & $\gamma$ & attrition schedule & \textbf{FID} $\downarrow$ \\
            \midrule
             Cosine Continuous & $1.0$ & $\eta_\text{rescale} = 0.005$ & $7.580$\\
             Cosine Continuous & $1.0$ & $\eta_\text{rescale} = 0.01$ & $7.541$\\
             Cosine Continuous & $1.5$ & $\eta_\text{rescale} = 0.005$ & $7.217$\\
             Cosine Continuous & $1.5$ & $\eta_\text{rescale} = 0.01$ & $7.483$\\
            \bottomrule
        \end{tabular}
    \label{table:small-sweep-celeba}
\end{table}

\newpage
\subsubsection{Attrition rate smoothing}\label{ap: death rate smoothing}


\begin{figure}[!h] 
    \centering
    \includegraphics[width=1.0\linewidth]{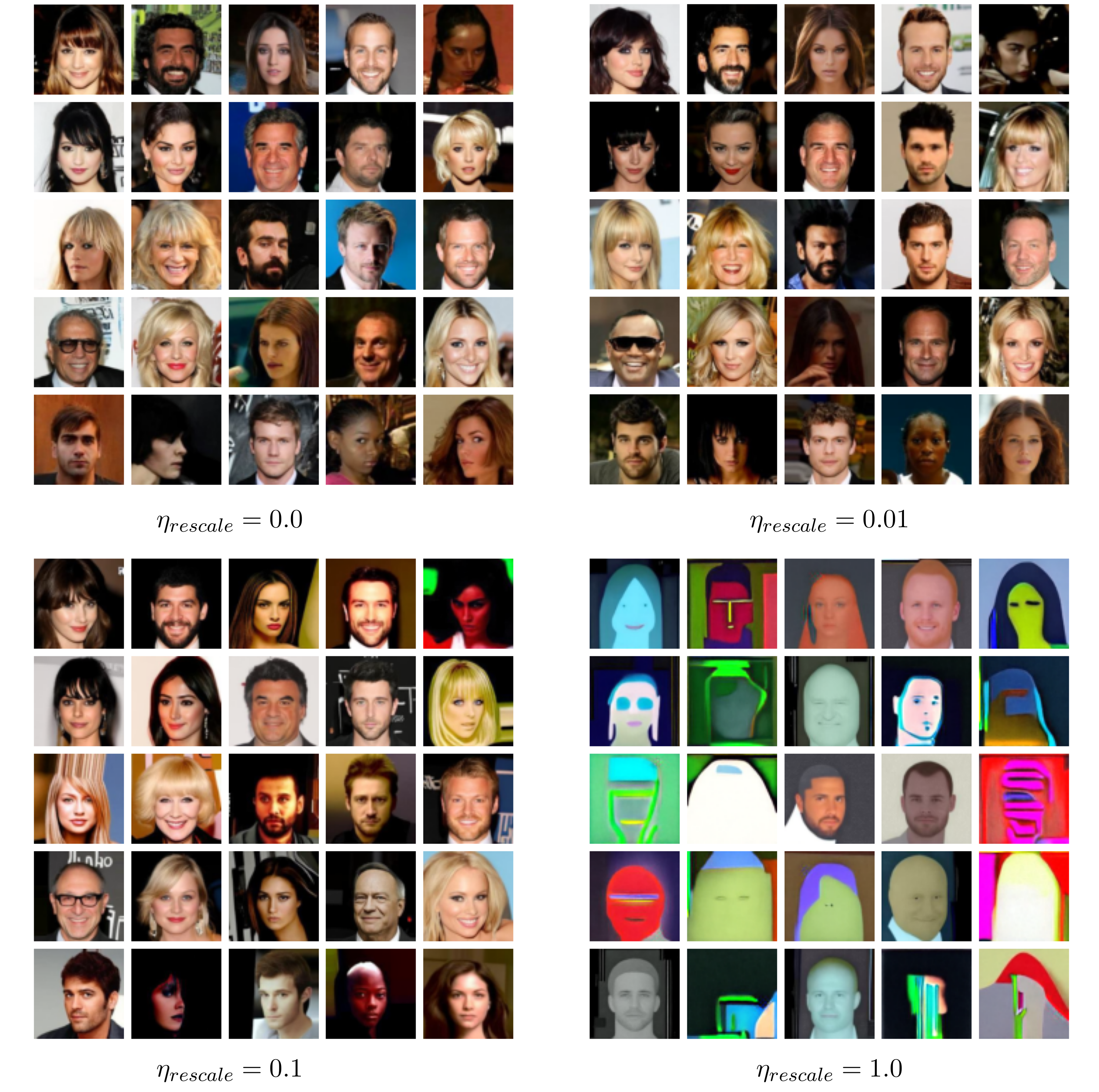}
    \caption{25 images drawn from CountsDiff trained on CelebA with increasing $\eta_{rescale}$.}
    \label{fig:eta_rescales_celeba}
\end{figure}

\subsection{Heart cell imputation} \label{ap: heart imputation}

\begin{table}[H]
\centering\footnotesize\setlength{\tabcolsep}{4pt}
\caption{Benchmarking on scRNA-seq imputation, heart with 50\% MCAR. Mean (standard error). Methods are grouped into three categories: naive baseline (top), scRNAseq/imputation-specific (middle), and general generative (bottom). Best performance in each category for each metric is bolded, and second best is italicized.}
\label{tab:heart_mcar}
\begin{tabular}{l ccc c cccc}
\toprule
 & \multicolumn{3}{c}{\textbf{Pointwise}} & & \multicolumn{4}{c}{\textbf{Distributional}} \\
\cmidrule(lr){2-4}\cmidrule(lr){6-9}
\textbf{Method} & Spearman$\uparrow$ & RMSE$\downarrow$ & Bias & & ED$\downarrow$ & log(scFID)$\downarrow$ & log(MMD)$\downarrow$ & SWD$\downarrow$ \\
\midrule
Zero imputation & N/A & 9.27(0.13) & -3.49(0.01) & & 1.73(0.00) & -0.32(0.00) & -4.50(0.00) & 1.43(0.03) \\
Mean imputation & \textit{0.31(0.00)} & \textit{8.32(0.17)} & \textit{0.03(0.01)} & & \textit{0.87(0.00)} & \textit{-2.89(0.00)} & \textit{-5.19(0.01)} & \textit{1.25(0.04)} \\
Conditional Mean & \textbf{0.49(0.00)} & \textbf{7.62(0.19)} & \textbf{-0.01(0.01)} & & \textbf{0.39(0.00)} & \textbf{-4.29(0.01)} & \textbf{-6.65(0.01)} & \textbf{1.11(0.06)} \\
\midrule
MAGIC & 0.39(0.00) & 9.24(0.18) & -3.45(0.01) & & 1.68(0.00) & -0.38(0.00) & -4.50(0.00) & 1.47(0.05) \\
scIDPMs, 1-sample & 0.31(0.00) & 8.89(0.11) & 1.55(0.01) & & 0.76(0.00) & -3.16(0.01) & -5.11(0.01) & 0.88(0.04) \\
scIDPMs, 5-sample & 0.36(0.00) & 7.34(0.14) & 1.55(0.00) & & 0.87(0.00) & -3.00(0.00) & -5.29(0.01) & \textbf{0.77(0.07)} \\
GAIN & 0.14(0.00) & 7.48(0.22) & -2.02(0.00) & & 1.20(0.00) & -0.73(0.00) & -4.88(0.00) & 0.93(0.08) \\
HI-VAE (Poisson) & \textit{0.44(0.00)} & \textit{6.70(0.08)} & \textit{-0.30(0.00)} & & 0.29(0.00) & \textbf{-5.26(0.01)} & \textbf{-6.54(0.01)} & 0.96(0.03) \\
HI-VAE (Gaussian) & 0.32(0.00) & 7.95(0.12) & \textbf{-0.21(0.01)} & & 0.45(0.00) & -3.67(0.00) & -6.29(0.00) & 1.12(0.04) \\
scGPT (scratch) & \textbf{0.49(0.00)} & \textbf{6.27(0.14)} & -0.54(0.00) & & \textbf{0.19(0.00)} & \textit{-5.12(0.01)} & \textit{-6.36(0.00)} & \textit{0.80(0.05)} \\
scGPT (pretrained) & 0.42(0.00) & 6.72(0.14) & -0.56(0.00) & & \textit{0.21(0.00)} & -4.05(0.00) & -5.91(0.01) & \textit{0.80(0.04)} \\
xTrimoGene & 0.35(0.00) & 8.14(0.25) & -1.22(0.01) & & 0.46(0.00) & -3.82(0.01) & -5.35(0.00) & 1.31(0.08) \\
\midrule
Forest-Diffusion & 0.05(0.00) & 190.69(0.21) & 88.05(0.06) & & 6.65(0.00) & -0.42(0.00) & -0.65(0.00) & 58.76(0.18) \\
ReMDM, 1-sample & \textit{0.33(0.00)} & 8.60(0.36) & \textbf{-0.02(0.00)} & & \textbf{0.01(0.00)} & \textbf{-7.46(0.02)} & \textbf{-10.18(0.01)} & 1.01(0.13) \\
ReMDM, 5-sample & \textbf{0.43(0.00)} & \textbf{6.05(0.15)} & \textbf{-0.02(0.00)} & & 0.16(0.00) & -6.84(0.02) & -8.45(0.01) & \textbf{0.60(0.06)} \\
Blackout & 0.32(0.00) & 9.30(0.22) & 0.66(0.02) & & 0.12(0.00) & -5.94(0.01) & -9.34(0.03) & 1.29(0.08) \\
\textbf{CountsDiff (Ours),} 1-sample & 0.31(0.00) & 7.34(0.25) & \textit{-0.16(0.01)} & & \textit{0.02(0.00)} & \textit{-7.07(0.02)} & \textit{-9.81(0.04)} & \textit{0.87(0.10)} \\
\textbf{CountsDiff (Ours),} 5-sample & \textbf{0.43(0.00)} & \textit{6.32(0.23)} & \textit{-0.16(0.01)} & & 0.18(0.00) & -6.01(0.01) & -7.56(0.01) & 0.89(0.07) \\
\bottomrule\end{tabular}\end{table}

\subsection{Fetus Imputation Ablations}
\label{appendix:imputation_ablations}

We study the effect of varying levels of attrition and guidance on models trained with Cosine schedule, continuous Blackout schedule, and discrete Blackout schedule for 20k steps. Models are evaluated on imputation with 40\% of elements missing completely at random (MCAR). We find that optimal guidance and attrition levels are similar, but not identical, to those in imaging data.

\subsubsection{Attrition Sweep}
Moderate levels of attrition improve sample quality for cosine and Blackout continuous schedules, with a larger effect on the cosine-schedule trained model. Blackout discrete generates subtantially poorer samples than the other two methods. See Table \ref{tab:attrition_sweep}.
\begin{table*}[!h]
    \centering
    \footnotesize
    \setlength{\tabcolsep}{3pt}
    \caption{Attrition sweep across schedules. Metrics are reported as mean (standard error)}
    \label{tab:attrition_sweep}
    \begin{subtable}[t]{0.32\textwidth}
        \centering
        \caption{Cosine}
        \label{tab:attrition_cosine}
        \begin{tabular}{lcc}
            \toprule
            $\eta_{\mathrm{rescale}}$ & \textbf{scFID} & \textbf{ED} \\
            & $\times 10^4$ $\downarrow$ & $\times 10^2$ $\downarrow$ \\
            \midrule
            0.000 & $2.13 (0.34)$ & $1.86 (0.11)$ \\
            0.005 & $1.86 (0.26)$ & $1.89 (0.09)$ \\
            0.010 & $1.70 (0.11)$ & $1.85 (0.06)$ \\
            0.020 & $1.70 (0.16)$ & $\mathbf{1.68 (0.05)}$ \\
            0.050 & $\mathbf{1.54 (0.06)}$ & $2.01 (0.10)$ \\
            \bottomrule
        \end{tabular}
    \end{subtable}
    \hfill
    \begin{subtable}[t]{0.32\textwidth}
        \centering
        \caption{Blackout  continuous}
        \label{tab:attrition_blackout_cont}
        \begin{tabular}{lcc}
            \toprule
            $\eta_{\mathrm{rescale}}$ & \textbf{scFID} & \textbf{ED} \\
            & $\times 10^4$ $\downarrow$ & $\times 10^2$ $\downarrow$ \\
            \midrule
            0.000 & $3.52 (0.15)$ & $2.51 (0.06)$ \\
            0.005 & $3.44 (0.03)$ & $\mathbf{2.47 (0.09)}$ \\
            0.010 & $3.93 (0.58)$ & $2.49 (0.08)$ \\
            0.020 & $4.13 (0.29)$ & $2.54 (0.09)$ \\
            0.050 & $\mathbf{3.39 (0.30)}$ & $2.60 (0.05)$ \\
            \bottomrule
        \end{tabular}
    \end{subtable}
    \hfill
    \begin{subtable}[t]{0.32\textwidth}
        \centering
        \caption{Blackout  discrete}
        \label{tab:attrition_blackout_disc}
        \begin{tabular}{lcc}
            \toprule
            $\eta_{\mathrm{rescale}}$ & \textbf{scFID} & \textbf{ED} \\
            & $\times 10^4$ $\downarrow$ & $\times 10^2$ $\downarrow$ \\
            \midrule
            0.000 & $17.52 (1.38)$ & $10.49 (0.15)$ \\
            0.005 & $20.09 (0.24)$ & $10.37 (0.12)$ \\
            0.010 & $19.58 (1.21)$ & $10.48 (0.14)$ \\
            0.020 & $20.31 (1.42)$ & $10.83 (0.26)$ \\
            0.050 & $20.39 (2.14)$ & $10.89 (0.29)$ \\
            \bottomrule
        \end{tabular}
    \end{subtable}
\end{table*}
\newpage
\subsubsection{Guidance Sweep}
Guidance also improves sample quality on cosine and continuous Blackout noise schedules, again with a larger effect on the cosine model. Blackout discrete again generates substantially worse samples. 

\begin{table*}[!h]
    \centering
    \footnotesize
    \setlength{\tabcolsep}{3pt}
    \caption{Guidance sweep across schedules. Metrics are reported as mean (standard error)}
    \label{tab:guidance_sweep}
    \begin{subtable}[t]{0.32\textwidth}
        \centering
        \caption{Cosine}
        \label{tab:guidance_cosine}
        \begin{tabular}{lcc}
            \toprule
            $\gamma$ & \textbf{scFID} & \textbf{ED} \\
            & $\times 10^4$ $\downarrow$ & $\times 10^2$ $\downarrow$ \\
            \midrule
            0.0 & $2.15 (0.21)$ & $1.90 (0.05)$ \\
            0.1 & $1.98 (0.19)$ & $1.85 (0.11)$ \\
            0.2 & $1.97 (0.26)$ & $1.75 (0.06)$ \\
            0.5 & $2.05 (0.34)$ & $1.83 (0.09)$ \\
            1.0 & $\mathbf{1.61 (0.19)}$ & $1.73 (0.12)$ \\
            2.0 & $6.04 (0.33)$ & $\mathbf{1.21 (0.02)}$ \\
            3.0 & $17.51 (1.19)$ & $1.79 (0.07)$ \\
            \bottomrule
        \end{tabular}
    \end{subtable}
    \hfill
    \begin{subtable}[t]{0.32\textwidth}
        \centering
        \caption{Blackout continuous}
        \label{tab:guidance_blackout_cont}
        \begin{tabular}{lcc}
            \toprule
            $\gamma$ & \textbf{scFID} & \textbf{ED} \\
            & $\times 10^4$ $\downarrow$ & $\times 10^2$ $\downarrow$ \\
            \midrule
            0.0 & $3.39 (0.56)$ & $2.72 (0.15)$ \\
            0.1 & $3.74 (0.52)$ & $\mathbf{2.57 (0.16)}$ \\
            0.2 & $3.83 (0.17)$ & $2.64 (0.08)$ \\
            0.5 & $\mathbf{3.17 (0.43)}$ & $2.63 (0.10)$ \\
            1.0 & $4.12 (0.52)$ & $2.66 (0.03)$ \\
            2.0 & $9.47 (0.92)$ & $2.78 (0.13)$ \\
            3.0 & $16.43 (1.84)$ & $3.85 (0.14)$ \\
            \bottomrule
        \end{tabular}
    \end{subtable}
    \hfill
    \begin{subtable}[t]{0.32\textwidth}
        \centering
        \caption{Blackout  discrete}
        \label{tab:guidance_blackout_disc}
        \begin{tabular}{lcc}
            \toprule
            $\gamma$ & \textbf{scFID} & \textbf{ED} \\
            & $\times 10^4$ $\downarrow$ & $\times 10^2$ $\downarrow$ \\
            \midrule
            0.0 & $19.28 (0.95)$ & $11.01 (0.25)$ \\
            0.1 & $18.73 (1.43)$ & $10.67 (0.28)$ \\
            0.2 & $18.51 (0.95)$ & $11.45 (0.37)$ \\
            0.5 & $18.82 (1.30)$ & $10.43 (0.42)$ \\
            1.0 & $21.98 (1.79)$ & $10.90 (0.27)$ \\
            2.0 & $30.78 (1.20)$ & $10.00 (0.42)$ \\
            3.0 & $49.45 (1.29)$ & $11.86 (0.21)$ \\
            \bottomrule
        \end{tabular}
    \end{subtable}
\end{table*}

\subsubsection{Sweep over number of steps}
\begin{table}[!h]
    \centering
    \footnotesize
    \setlength{\tabcolsep}{2pt}
    \caption{Effect of number of sampling steps on generation quality, measured by scFID and Energy distance (ED) between imputed and ground-truth test set samples. Metrics are reported as mean (standard error)}
    \label{tab:sampling_steps}
    \begin{tabular}{lcc}
        \toprule
        \textbf{CountsDiff Sampling Steps} & \textbf{scFID} $\times 10^4$ $\downarrow$ & \textbf{ED} $\downarrow$ \\
        \midrule
        30 & $2.73 (0.19)$ & $0.01 (0.00)$ \\
        20 & $2.55 (0.28)$ & $0.01 (0.00)$ \\
        15 & $2.32 (0.18)$ & $0.01 (0.00)$ \\
        10 & $2.35 (0.15)$ & $0.02 (0.00)$ \\
        7  & $2.58 (0.18)$ & $0.02 (0.00)$ \\
        5  & $2.97 (0.20)$ & $0.03 (0.00)$ \\
        3  & $3.77 (0.25)$ & $0.05 (0.00)$ \\
        2  & $8.40 (0.49)$ & $0.10 (0.00)$ \\
        1  & $94.72 (8.3)$ & $1.44 (0.00)$ \\
        \bottomrule
    \end{tabular}
\end{table}

\mbox{}
\newpage
\subsection{Additional Metrics for scRNAseq imputation evaluation}
\label{ap:all_metrics}
Methods are grouped into three categories: naive baseline (top), scRNAseq/imputation-specific (middle), and general generative (bottom). Best performance in each category for each metric is bolded, and second best is italicized. Metrics are reported as Mean (standard error).
\begin{table*}[!h]\centering\tiny\setlength{\tabcolsep}{3pt}
\caption{Full metrics (fetus, 50\% MCAR).}
\label{tab:fetus_mcar_full}
\begin{tabular}{l cccccc c cccc}
\toprule
 & \multicolumn{6}{c}{\textbf{Sample-level}} & & \multicolumn{4}{c}{\textbf{Distributional}} \\
\cmidrule(lr){2-7}\cmidrule(lr){9-12}
\textbf{Method} & R$^2\uparrow$ & RMSE$\downarrow$ & MAE$\downarrow$ & Bias & Spearman$\uparrow$ & Pearson$\uparrow$ & & ED$\downarrow$ & log(scFID)$\downarrow$ & log(MMD)$\downarrow$ & SWD$\downarrow$ \\
\midrule
Zero imputation & -7.84(0.01) & 1.91(0.01) & 1.31(0.00) & \textit{-1.31(0.00)} & N/A & N/A & & 1.44(0.00) & -2.35(0.00) & -4.08(0.00) & 0.17(0.00) \\
Mean imputation & \textit{-0.86(0.00)} & \textit{1.31(0.04)} & \textit{0.48(0.00)} & \textbf{0.00(0.00)} & \textit{0.17(0.00)} & \textit{0.25(0.00)} & & \textit{0.17(0.00)} & \textit{-5.01(0.01)} & \textit{-7.03(0.00)} & \textit{0.12(0.01)} \\
Conditional Mean & \textbf{-0.75(0.01)} & \textbf{1.12(0.01)} & \textbf{0.45(0.00)} & \textbf{0.00(0.00)} & \textbf{0.20(0.00)} & \textbf{0.31(0.00)} & & \textbf{0.15(0.00)} & \textbf{-6.23(0.01)} & \textbf{-7.49(0.01)} & \textbf{0.08(0.00)} \\
\midrule
MAGIC & -7.82(0.01) & 1.88(0.01) & 1.31(0.00) & -1.31(0.00) & \textbf{0.21(0.00)} & \textbf{0.52(0.01)} & & 1.44(0.00) & -2.35(0.00) & -4.07(0.00) & 0.17(0.00) \\
scIDPMs, 1-sample & -22.26(0.16) & 2.25(0.02) & 1.18(0.00) & 0.86(0.00) & 0.08(0.00) & 0.13(0.00) & & 0.68(0.00) & -3.04(0.01) & -3.39(0.00) & 0.20(0.00) \\
scIDPMs, 5-sample & -14.81(0.04) & 1.89(0.01) & 1.09(0.00) & 0.86(0.00) & 0.10(0.00) & 0.17(0.00) & & 0.88(0.00) & -2.92(0.01) & -3.65(0.00) & 0.18(0.00) \\
GAIN & -7.17(0.01) & 1.87(0.02) & 1.27(0.00) & -1.27(0.00) & 0.04(0.00) & 0.07(0.00) & & 1.44(0.00) & -2.34(0.00) & -4.07(0.00) & 0.17(0.00) \\
Hi-VAE (Poisson) & -0.40(0.01) & 1.27(0.02) & 0.47(0.00) & \textit{0.02(0.00)} & 0.15(0.00) & 0.25(0.00) & & \textbf{0.11(0.00)} & \textit{-6.26(0.01)} & -6.68(0.00) & 0.11(0.00) \\
Hi-VAE (Gaussian) & -0.69(0.01) & 1.27(0.03) & 0.45(0.00) & \textbf{-0.01(0.00)} & 0.14(0.00) & 0.24(0.00) & & \textit{0.12(0.00)} & -5.78(0.01) & \textbf{-6.93(0.01)} & 0.10(0.01) \\
scGPT (scratch) & \textit{-0.13(0.00)} & \textbf{1.05(0.02)} & \textbf{0.35(0.00)} & -0.20(0.00) & 0.17(0.00) & 0.25(0.00) & & 0.25(0.00) & -5.95(0.01) & -6.20(0.00) & \textit{0.09(0.01)} \\
scGPT (pretrained) & -0.79(0.00) & 1.19(0.02) & 0.46(0.00) & -0.44(0.00) & 0.13(0.00) & 0.15(0.00) & & 0.51(0.00) & -4.63(0.00) & -5.33(0.00) & 0.11(0.00) \\
xTrimoGene & \textbf{-0.03(0.00)} & \textit{1.08(0.02)} & \textit{0.38(0.00)} & -0.11(0.00) & \textit{0.18(0.00)} & \textit{0.28(0.00)} & & 0.37(0.00) & \textbf{-6.71(0.02)} & \textit{-6.75(0.01)} & \textbf{0.08(0.00)} \\
\midrule
Forest-Diffusion & $<\!-10^3$ & 27.18(0.06) & 8.98(0.01) & 8.41(0.01) & 0.03(0.00) & 0.09(0.00) & & 2.18(0.00) & -1.54(0.00) & -0.80(0.00) & 4.94(0.01) \\
ReMDM, 1-sample & -2.46(0.35) & 1.66(0.06) & 0.46(0.00) & \textbf{0.00(0.00)} & \textit{0.11(0.00)} & 0.15(0.00) & & \textbf{0.01(0.00)} & \textit{-8.98(0.05)} & \textit{-11.69(0.03)} & 0.12(0.01) \\
ReMDM, 5-sample & \textit{-0.56(0.02)} & \textit{1.20(0.03)} & \textbf{0.42(0.00)} & \textbf{0.00(0.00)} & \textbf{0.12(0.00)} & \textbf{0.22(0.00)} & & 0.28(0.00) & -8.44(0.04) & -8.76(0.01) & \textit{0.08(0.01)} \\
Blackout & -2.03(0.01) & 1.39(0.02) & 0.48(0.00) & \textit{0.03(0.00)} & \textit{0.11(0.00)} & 0.16(0.00) & & \textit{0.02(0.00)} & -7.37(0.01) & -10.78(0.02) & \textbf{0.07(0.01)} \\
\textbf{CountsDiff (Ours),} 1-sample & -1.78(0.01) & 1.42(0.03) & 0.48(0.00) & \textbf{0.00(0.00)} & 0.09(0.00) & 0.13(0.00) & & \textbf{0.01(0.00)} & \textbf{-9.18(0.05)} & \textbf{-11.94(0.02)} & \textit{0.08(0.01)} \\
\textbf{CountsDiff (Ours),} 5-sample & \textbf{-0.44(0.00)} & \textbf{1.17(0.03)} & \textit{0.44(0.00)} & \textbf{0.00(0.00)} & \textbf{0.12(0.00)} & \textit{0.20(0.00)} & & 0.30(0.00) & -7.60(0.03) & -8.64(0.01) & \textit{0.08(0.01)} \\
\bottomrule\end{tabular}
\vspace{0.3cm}
\caption{Full metrics (fetus, 25\% MNAR (low-biased))}
\label{tab:fetus_mnar_full}
\begin{tabular}{l cccccc c cccc}
\toprule
 & \multicolumn{6}{c}{\textbf{Sample-level}} & & \multicolumn{4}{c}{\textbf{Distributional}} \\
\cmidrule(lr){2-7}\cmidrule(lr){9-12}
\textbf{Method} & R$^2\uparrow$ & RMSE$\downarrow$ & MAE$\downarrow$ & Bias & Spearman$\uparrow$ & Pearson$\uparrow$ & & ED$\downarrow$ & log(scFID)$\downarrow$ & log(MMD)$\downarrow$ & SWD$\downarrow$ \\
\midrule
Zero imputation & -38.25(0.15) & 1.00(0.00) & 0.99(0.00) & -0.99(0.00) & N/A & N/A & & \textit{1.41(0.00)} & \textit{-5.29(0.01)} & -6.64(0.00) & 0.03(0.00) \\
Mean imputation & \textit{-11.12(0.15)} & \textit{0.51(0.00)} & \textit{0.32(0.00)} & \textit{0.32(0.00)} & \textit{0.26(0.00)} & \textit{0.63(0.00)} & & \textbf{0.12(0.00)} & -4.81(0.01) & \textit{-7.99(0.01)} & \textit{0.02(0.00)} \\
Conditional Mean & \textbf{-9.70(0.19)} & \textbf{0.47(0.00)} & \textbf{0.29(0.00)} & \textbf{0.28(0.00)} & \textbf{0.36(0.00)} & \textbf{0.79(0.00)} & & \textbf{0.12(0.00)} & \textbf{-5.82(0.01)} & \textbf{-8.69(0.01)} & \textbf{0.01(0.00)} \\
\midrule
MAGIC & -38.48(0.15) & 1.00(0.00) & 0.99(0.00) & -0.99(0.00) & 0.09(0.02) & 0.09(0.02) & & 1.41(0.00) & -5.28(0.00) & -6.63(0.00) & 0.03(0.00) \\
scIDPMs, 1-sample & -211.47(4.50) & 2.22(0.01) & 1.23(0.00) & 1.20(0.00) & 0.03(0.00) & 0.24(0.00) & & 0.98(0.00) & -3.61(0.00) & -4.20(0.00) & 0.14(0.00) \\
scIDPMs, 5-sample & -117.44(0.68) & 1.77(0.00) & 1.20(0.00) & 1.20(0.00) & 0.03(0.00) & 0.34(0.00) & & 1.17(0.00) & -3.52(0.00) & -4.41(0.00) & 0.11(0.00) \\
GAIN & -31.73(0.14) & 0.91(0.00) & 0.91(0.00) & -0.91(0.00) & 0.02(0.00) & 0.44(0.00) & & 1.40(0.00) & -5.42(0.01) & -6.64(0.00) & 0.03(0.00) \\
Hi-VAE (Poisson) & -16.91(0.60) & 0.59(0.00) & 0.39(0.00) & 0.39(0.00) & 0.03(0.00) & 0.60(0.00) & & 0.22(0.00) & -5.24(0.00) & -8.23(0.00) & 0.02(0.00) \\
Hi-VAE (Gaussian) & -25.97(0.42) & 0.73(0.00) & 0.45(0.00) & 0.45(0.00) & 0.02(0.00) & 0.56(0.00) & & 0.37(0.00) & -4.83(0.00) & -7.55(0.00) & 0.03(0.00) \\
scGPT (scratch) & \textit{-4.30(0.13)} & \textit{0.28(0.00)} & \textbf{0.15(0.00)} & \textit{0.12(0.00)} & \textbf{0.33(0.00)} & \textbf{0.74(0.00)} & & \textit{0.07(0.00)} & \textit{-7.79(0.01)} & \textbf{-11.92(0.01)} & \textit{0.01(0.00)} \\
scGPT (pretrained) & \textbf{-1.73(0.05)} & \textbf{0.24(0.00)} & \textit{0.16(0.00)} & \textbf{-0.09(0.00)} & \textit{0.31(0.00)} & 0.66(0.00) & & \textbf{0.04(0.00)} & \textbf{-8.71(0.02)} & \textit{-11.19(0.01)} & \textbf{0.00(0.00)} \\
xTrimoGene & -8.88(0.15) & 0.38(0.00) & 0.25(0.00) & 0.25(0.00) & 0.29(0.00) & \textit{0.73(0.00)} & & 0.12(0.00) & -6.93(0.01) & -11.06(0.00) & \textit{0.01(0.00)} \\
\midrule
Forest-Diffusion & $<\!-10^3$ & 30.53(0.27) & 10.37(0.03) & 9.80(0.03) & 0.03(0.00) & 0.04(0.00) & & 2.36(0.00) & \textbf{-7.26(0.02)} & -4.13(0.00) & 0.88(0.01) \\
ReMDM, 1-sample & -50.74(3.75) & 1.01(0.05) & 0.32(0.00) & \textit{0.31(0.00)} & \textbf{0.47(0.00)} & 0.44(0.00) & & 0.31(0.00) & \textit{-6.72(0.01)} & -7.56(0.01) & 0.05(0.01) \\
ReMDM, 5-sample & \textit{-25.34(2.00)} & \textit{0.66(0.01)} & \textit{0.31(0.00)} & \textit{0.31(0.00)} & 0.38(0.00) & \textbf{0.61(0.00)} & & \textit{0.28(0.00)} & -6.61(0.00) & \textit{-8.64(0.00)} & \textbf{0.02(0.00)} \\
Blackout & -42.15(1.93) & 0.88(0.01) & \textbf{0.30(0.00)} & \textbf{0.30(0.00)} & \textit{0.45(0.00)} & 0.44(0.00) & & 0.31(0.00) & -6.32(0.01) & -7.73(0.00) & \textit{0.03(0.00)} \\
\textbf{CountsDiff (Ours),} 1-sample & -42.00(1.00) & 0.87(0.00) & \textbf{0.30(0.00)} & \textbf{0.30(0.00)} & 0.42(0.00) & 0.40(0.00) & & 0.30(0.00) & -6.41(0.01) & -7.62(0.00) & \textit{0.03(0.00)} \\
\textbf{CountsDiff (Ours),} 5-sample & \textbf{-19.44(0.40)} & \textbf{0.58(0.00)} & \textbf{0.30(0.00)} & \textbf{0.30(0.00)} & 0.35(0.00) & \textit{0.58(0.00)} & & \textbf{0.26(0.00)} & -6.24(0.01) & \textbf{-8.78(0.00)} & \textbf{0.02(0.00)} \\
\bottomrule\end{tabular}
\vspace{0.3cm}
\caption{Full metrics (heart, 50\% MCAR). Mean (standard error)}
\label{tab:heart_mcar_full}
\begin{tabular}{l cccccc c cccc}
\toprule
 & \multicolumn{6}{c}{\textbf{Sample-level}} & & \multicolumn{4}{c}{\textbf{Distributional}} \\
\cmidrule(lr){2-7}\cmidrule(lr){9-12}
\textbf{Method} & R$^2\uparrow$ & RMSE$\downarrow$ & MAE$\downarrow$ & Bias & Spearman$\uparrow$ & Pearson$\uparrow$ & & ED$\downarrow$ & log(scFID)$\downarrow$ & log(MMD)$\downarrow$ & SWD$\downarrow$ \\
\midrule
Zero imputation & \textbf{-2.02(0.01)} & 9.27(0.13) & 3.49(0.01) & -3.49(0.01) & N/A & N/A & & 1.73(0.00) & -0.32(0.00) & -4.50(0.00) & 1.43(0.03) \\
Mean imputation & -6.71(0.03) & \textit{8.32(0.17)} & \textit{2.93(0.01)} & \textit{0.03(0.01)} & \textit{0.31(0.00)} & \textit{0.33(0.00)} & & \textit{0.87(0.00)} & \textit{-2.89(0.00)} & \textit{-5.19(0.01)} & \textit{1.25(0.04)} \\
Conditional Mean & \textit{-2.94(0.05)} & \textbf{7.62(0.19)} & \textbf{2.36(0.01)} & \textbf{-0.01(0.01)} & \textbf{0.49(0.00)} & \textbf{0.57(0.00)} & & \textbf{0.39(0.00)} & \textbf{-4.29(0.01)} & \textbf{-6.65(0.01)} & \textbf{1.11(0.06)} \\
\midrule
MAGIC & -2.01(0.01) & 9.24(0.18) & 3.45(0.01) & -3.45(0.01) & 0.39(0.00) & 0.49(0.00) & & 1.68(0.00) & -0.38(0.00) & -4.50(0.00) & 1.47(0.05) \\
scIDPMs, 1-sample & -6.37(0.06) & 8.89(0.11) & 3.20(0.01) & 1.55(0.01) & 0.31(0.00) & 0.33(0.00) & & 0.76(0.00) & -3.16(0.01) & -5.11(0.01) & 0.88(0.04) \\
scIDPMs, 5-sample & -4.17(0.02) & 7.34(0.14) & 2.96(0.01) & 1.55(0.00) & 0.36(0.00) & 0.40(0.00) & & 0.87(0.00) & -3.00(0.00) & -5.29(0.01) & \textbf{0.77(0.07)} \\
GAIN & -1.86(0.01) & 7.48(0.22) & 2.84(0.01) & -2.02(0.00) & 0.14(0.00) & 0.15(0.00) & & 1.20(0.00) & -0.73(0.00) & -4.88(0.00) & 0.93(0.08) \\
Hi-VAE (Poisson) & 0.01(0.00) & \textit{6.70(0.08)} & \textit{1.99(0.01)} & \textit{-0.30(0.00)} & \textit{0.44(0.00)} & \textit{0.51(0.00)} & & 0.29(0.00) & \textbf{-5.26(0.01)} & \textbf{-6.54(0.01)} & 0.96(0.03) \\
Hi-VAE (Gaussian) & -1.49(0.01) & 7.95(0.12) & 2.50(0.01) & \textbf{-0.21(0.01)} & 0.32(0.00) & 0.36(0.00) & & 0.45(0.00) & -3.67(0.00) & -6.29(0.00) & 1.12(0.04) \\
scGPT (scratch) & \textbf{0.12(0.00)} & \textbf{6.27(0.14)} & \textbf{1.74(0.01)} & -0.54(0.00) & \textbf{0.49(0.00)} & \textbf{0.54(0.00)} & & \textbf{0.19(0.00)} & \textit{-5.12(0.01)} & \textit{-6.36(0.00)} & \textit{0.80(0.05)} \\
scGPT (pretrained) & \textit{0.04(0.00)} & 6.72(0.14) & \textit{1.99(0.01)} & -0.56(0.00) & 0.42(0.00) & 0.45(0.00) & & \textit{0.21(0.00)} & -4.05(0.00) & -5.91(0.01) & \textit{0.80(0.04)} \\
xTrimoGene & 0.00(0.00) & 8.14(0.25) & 2.23(0.01) & -1.22(0.01) & 0.35(0.00) & 0.38(0.00) & & 0.46(0.00) & -3.82(0.01) & -5.35(0.00) & 1.31(0.08) \\
\midrule
Forest-Diffusion & $<\!-10^3$ & 190.69(0.21) & 90.49(0.06) & 88.05(0.06) & 0.05(0.00) & 0.12(0.00) & & 6.65(0.00) & -0.42(0.00) & -0.65(0.00) & 58.76(0.18) \\
ReMDM, 1-sample & -1.39(0.03) & 8.60(0.36) & 2.19(0.01) & \textbf{-0.02(0.00)} & \textit{0.33(0.00)} & 0.41(0.00) & & \textbf{0.01(0.00)} & \textbf{-7.46(0.02)} & \textbf{-10.18(0.01)} & 1.01(0.13) \\
ReMDM, 5-sample & \textit{-0.27(0.01)} & \textbf{6.05(0.15)} & \textbf{1.82(0.01)} & \textbf{-0.02(0.00)} & \textbf{0.43(0.00)} & \textbf{0.53(0.00)} & & 0.16(0.00) & -6.84(0.02) & -8.45(0.01) & \textbf{0.60(0.06)} \\
Blackout & -1.12(0.02) & 9.30(0.22) & 2.63(0.02) & 0.66(0.02) & 0.32(0.00) & 0.39(0.00) & & 0.12(0.00) & -5.94(0.01) & -9.34(0.03) & 1.29(0.08) \\
\textbf{CountsDiff (Ours),} 1-sample & -0.93(0.01) & 7.34(0.25) & 2.24(0.01) & \textit{-0.16(0.01)} & 0.31(0.00) & 0.39(0.00) & & \textit{0.02(0.00)} & \textit{-7.07(0.02)} & \textit{-9.81(0.04)} & \textit{0.87(0.10)} \\
\textbf{CountsDiff (Ours),} 5-sample & \textbf{-0.03(0.00)} & \textit{6.32(0.23)} & \textit{1.86(0.01)} & \textit{-0.16(0.01)} & \textbf{0.43(0.00)} & \textit{0.52(0.00)} & & 0.18(0.00) & -6.01(0.01) & -7.56(0.01) & 0.89(0.07) \\
\bottomrule\end{tabular}\end{table*}

\newpage

\subsection{Cell Classifier Predictions}
We trained an XGBoost Classifier with 100 estimators and a max depth of 6 on the training set for both the fetal and heart datasets to predict cell type. Imputed data points were then evaluated on classification accuracy and F1-score with the classifier. Using the same imputation missingness schemes, we report the results on competitive methods in Tables \ref{tab: downstream_mcar} and \ref{tab: downstream_mnar}. 
\begin{table}[h]
    \centering
    \caption{Cell classifier results for scRNA-seq imputation on human fetal cell atlas with 50\% MCAR. Metrics are reported as mean (standard error)}
    \label{tab: downstream_mcar}     
    \begin{tabular}{lcc}
        \toprule
        \textbf{Model Method} & \textbf{Accuracy} $\uparrow$  & \textbf{F1}$\uparrow$ \\
        \midrule
        Zero imputation &  $0.82 (0.00)$ & $0.53 (0.01)$  \\
        Mean imputation & $0.81 (0.00) $ & $0.49 (0.01)$\\
        Conditional Mean & $0.82 (0.00)$ & $0.55 (0.01)$ \\
        \midrule
        MAGIC & $0.62 (0.00)$ & $0.27 (0.01)$ \\
        ReMDM, 1-sample& $0.82 (0.00)$ & $0.53 (0.01)$ \\
        ReMDM, 5-samples& $0.82 (0.00)$& $0.53 (0.01)$\\
        CountsDiff, 1-sample & $0.81 (0.00)$ & $0.50 (0.01)$ \\
        CountsDiff, 5-samples & $0.81 (0.00)$ & $0.51(0.01)$ \\
        \bottomrule
    \end{tabular}
\end{table}

\begin{table}[H]
    \centering
    \caption{Cell classifier results for scRNA-seq imputation on human fetal cell atlas with 25\% low-biased MNAR. Metrics are reported as mean (standard error).}
    \label{tab: downstream_mnar}     
    \begin{tabular}{lcc}
        \toprule
        \textbf{Model Method} & \textbf{Accuracy} $\uparrow$  & \textbf{F1}$\uparrow$ \\
        \midrule
        Zero imputation &  $0.82 (0.00)$ & $0.54(0.01)$  \\
        Mean imputation & $0.81 (0.00) $ & $0.51 (0.01)$\\
        Conditional Mean & $0.82 (0.00)$ & $0.53 (0.01)$ \\
        \midrule
        MAGIC & $0.76 (0.00)$ & $0.47 (0.01)$ \\
        ReMDM, 1-sample& $0.82 (0.00)$ & $0.54 (0.01)$ \\
        ReMDM, 5-samples& $0.82 (0.00)$& $0.54 (0.01)$\\
        CountsDiff, 1-sample & $0.81 (0.00)$ & $0.52 (0.01)$ \\
        CountsDiff, 5-samples & $0.82 (0.00)$ & $0.52 (0.01)$ \\
        \bottomrule
    \end{tabular}
\end{table}

\begin{table}[H]
    \centering
    \caption{Cell classifier results for scRNA-seq imputation on human heart cell atlas with 50\% MCAR. Metrics are reported as mean (standard error)}
    \label{tab: downstream}     
    \begin{tabular}{lcc}
        \toprule
        \textbf{Model Method} & \textbf{Accuracy} $\uparrow$  & \textbf{F1}$\uparrow$ \\
        \midrule
        Zero imputation &  $0.99 (0.00)$ & $0.94 (0.01)$  \\
        Mean imputation & $0.99 (0.00) $ & $0.93 (0.01)$\\
        Conditional Mean & $0.99 (0.00)$ & $0.95 (0.01)$ \\
        \midrule
        MAGIC & $0.93 (0.00)$ & $0.79 (0.01)$ \\
        ReMDM, 1-sample& $0.99 (0.00)$ & $0.94 (0.01)$ \\
        ReMDM, 5-samples& $0.99 (0.00)$& $0.94 0.01)$\\
        CountsDiff, 1-sample & $0.98(0.00)$ & $0.94 (0.01)$ \\
        CountsDiff, 5-samples & $0.99 (0.00)$ & $0.93 (0.01)$ \\
        \bottomrule
    \end{tabular}
\end{table}

\newpage

\section{Large Language Model Statement}
Large Language Models and associated AI tools were used for the following:
\begin{enumerate}
    \item Deep research queries to Gemini and ChatGPT were used for retrieval and discovery of related works, to ensure fair credit was given to works we may not have been previously aware of.
    \item AI IDE assistants were used to aid in debugging, figure generation, and implementation of certain simple, canonical methods. They were also used to aid re-factoring of TF methods (GAIN, HI-VAE) into PyTorch.
    \item LLM assistants were used intermittently to polish already written text to make it more comprehensible to readers.
\end{enumerate}

\end{document}